\newcommand{\E}{\ensuremath{\mathbb{E}}}
\newcommand{\bbP}{\ensuremath{\mathbb{P}}}
\newcommand{\bbQ}{\ensuremath{\mathbb{Q}}}
\newcommand{\bbS}{\ensuremath{\mathbb{S}}}
\newcommand{\Q}{\ensuremath{\mathbb{Q}}}
\renewcommand{\S}{\ensuremath{\mathbb{S}}}
\renewcommand{\P}{\ensuremath{\mathbb{P}}}
\newcommand{\bbE}{\ensuremath{\mathbb{E}}}
\newcommand{\cO}{\ensuremath{\mathcal{O}}}
\newcommand{\bi}{\ensuremath{\boldsymbol{i}}}
\newcommand{\by}{\ensuremath{\mathbf{y}}}
\newcommand{\cX}{\ensuremath{\mathcal{X}}}
\newcommand{\cA}{\ensuremath{\mathcal{A}}}
\newcommand{\cF}{\ensuremath{\mathcal{F}}}
\newcommand{\Ni}{\ensuremath{N^{\text{in}}}}
\newcommand{\No}{\ensuremath{N^{\text{out}}}}
\newcommand{\lc}{\ensuremath{\lfloor}}
\newcommand{\rc}{\ensuremath{\rfloor}}
\newcommand{\kl}{\ensuremath{D_{KL}}}
\newtheorem{theorem}{Theorem}
\newtheorem{lemma}{Lemma}
\newtheorem{corollary}{Corollary}
\newtheorem{definition}{Definition}
\begin{document}

\begin{center}

	{\bf{\LARGE{Online learning with graph-structured feedback against adaptive adversaries}}}

	\vspace*{.25in}

	\begin{tabular}{ccc}
		{\large{Zhili Feng$^*$}} & \hspace*{.75in} & {\large{Po-Ling Loh$^{\dagger}$}}\\
		{\large{\texttt{zfeng49@cs.wisc.edu}}} & \hspace*{.75in} & {\large{\texttt{loh@ece.wisc.edu}}}
			\end{tabular}
\begin{center}
Department of Computer Science$^*$\\
Departments of Electrical \& Computer Engineering and Statistics$^\dagger$\\
University of Wisconsin - Madison\\
Madison, WI 53706
\end{center}

\vspace*{.2in}

April 2018

\vspace*{.2in}

\end{center}

\begin{abstract}
We derive upper and lower bounds for the policy regret of $T$-round online learning problems with graph-structured feedback, where the adversary is nonoblivious but assumed to have a bounded memory. We obtain upper bounds of $\widetilde\cO(T^{2/3})$ and $\widetilde\cO(T^{3/4})$ for strongly-observable and weakly-observable graphs,  respectively, based on analyzing a variant of the Exp3 algorithm. When the adversary is allowed a bounded memory of size 1, we show that a matching lower bound of $\widetilde\Omega(T^{2/3})$ is achieved in the case of full-information feedback. We also study the particular loss structure of an oblivious adversary with switching costs, and show that in such a setting, non-revealing strongly-observable feedback graphs achieve a lower bound of $\widetilde\Omega(T^{2/3})$, as well.
\end{abstract}

%% The paper must be self-contained. However, if you are referring to
%% a full version for checking certain proofs, please provide the
%% publically accessible location below.  If the paper is completely
%% self-contained, you can remove the following line from your
%% submission.

%\textit{A full version of this paper is accessible at: \url{http://web.engr.illinois.edu/~zfeng6/isit18.pdf}}

\section{Introduction}

The canonical setting of online learning involves a repeated game between a player and an adversary~\cite{cesa2006prediction}. At each round, the player chooses an action in the action space, and the adversary reveals the losses (equivalently, rewards) corresponding to the action. Such games may be characterized via their forms of feedback, two types of which are particularly popular: either the player observes the losses of all actions, which is known as the \emph{full-information game}; or the player only observes the loss of the chosen action, which is known as the \emph{multi-armed bandit problem}. More generally, we may consider the notion of \emph{graph-structured feedback} introduced by Mannor et al.~\cite{mannor2011bandits}. A feedback graph $G=(V, E)$ is a directed graph where each node $i\in V$ represents an action, and an edge $(i, j)\in E$ means the player observes the loss of action $j$ when choosing action $i$. Accordingly, full-information feedback is represented by a complete graph with self-loops and bandit feedback is represented by a graph with only self-loops (see Figure~\ref{fig:feedback_graphs}). Other settings include the \emph{apple tasting} problem~\cite{helmbold1992apple} and the \emph{revealing action} game. The goal of the player in a $T$-round online learning game is to attain order $o(T)$ regret, in which case the player is considered to be ``learning.''
\begin{figure}[htbp]
\centering
\subfloat[Apple Tasting]{\includegraphics[width=0.3\linewidth]{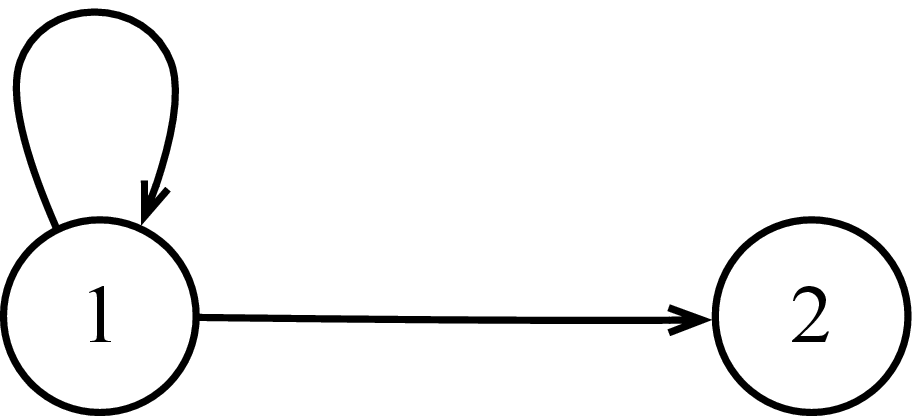}
\label{fig:apple_tasting}}\hfil
\subfloat[Two-armed Bandit]{\includegraphics[width=0.3\linewidth]{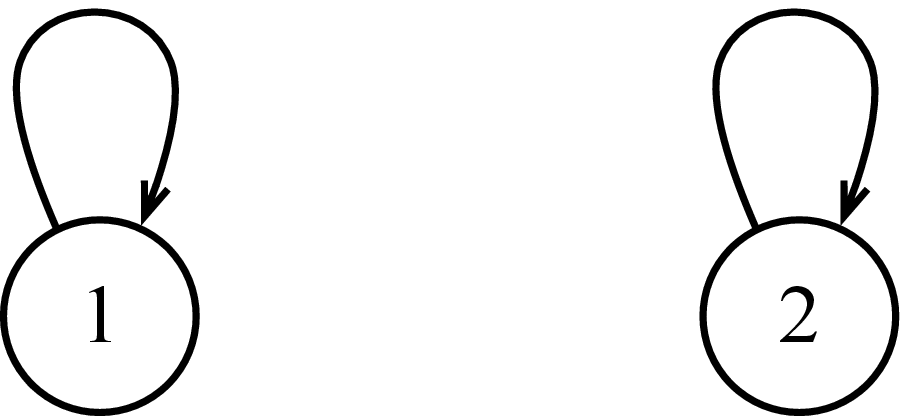}
\label{fig:bandit}}

\subfloat[Full-information Game]{\includegraphics[width=0.3\linewidth]{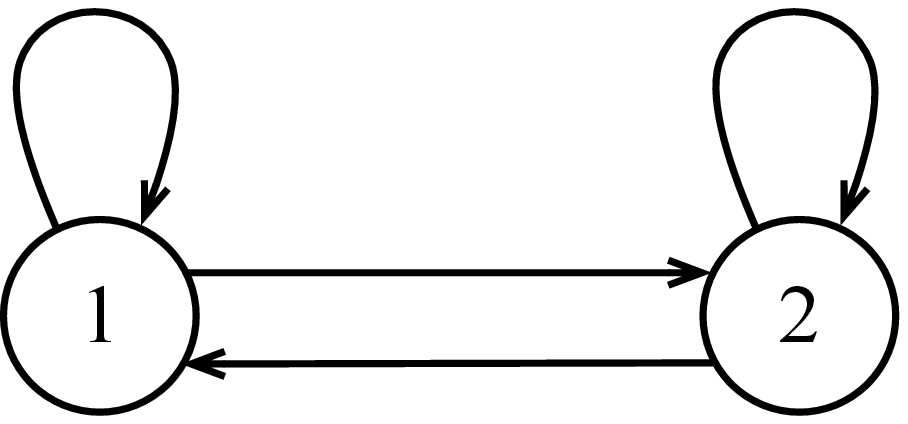}
\label{fig:full_info}}\hfil
\subfloat[Revealing Action Game]{\includegraphics[width=0.3\linewidth]{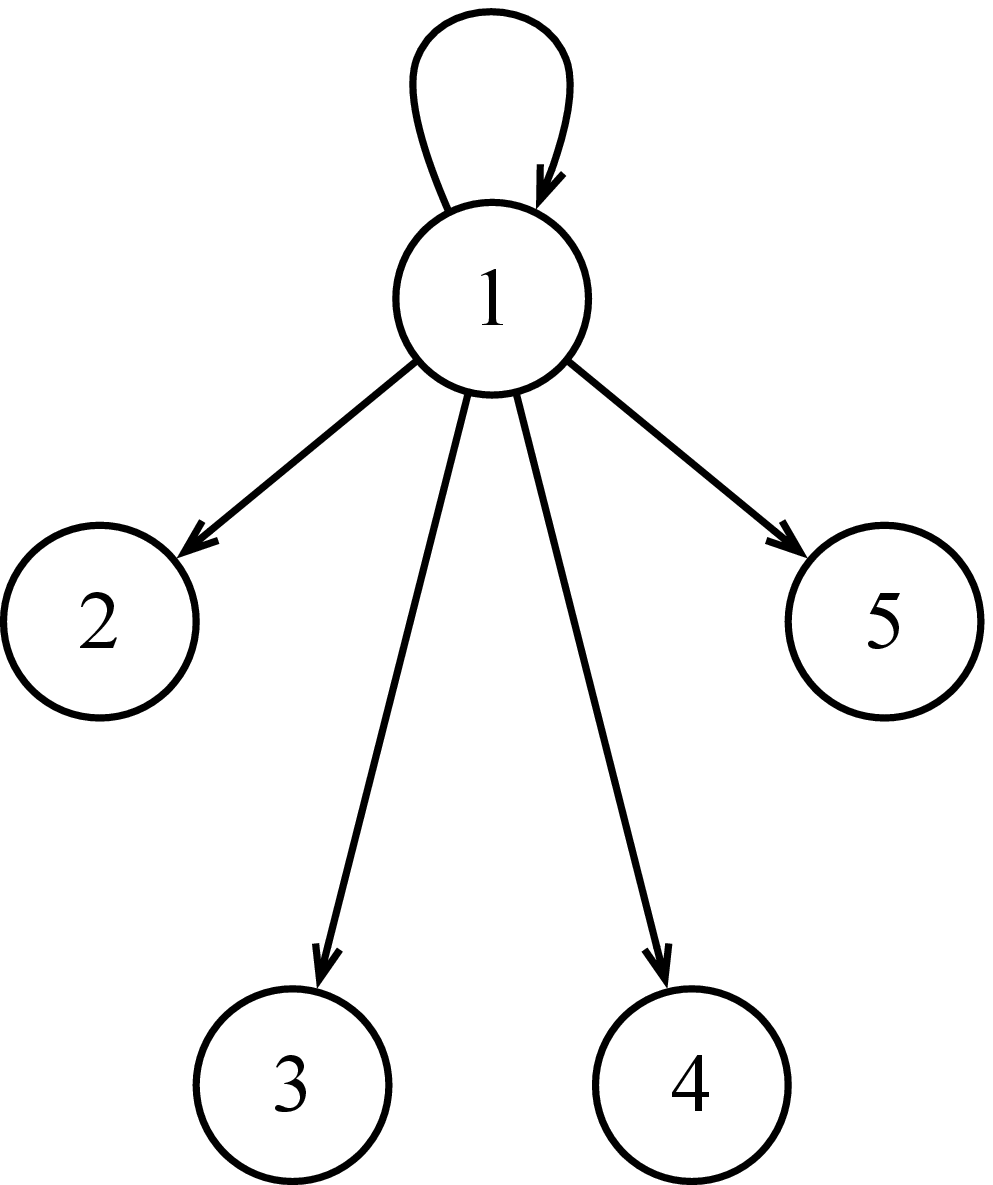}
\label{fig:revealing_action}}
\caption{Examples of feedback graphs}
\label{fig:feedback_graphs}
\end{figure}

Another important characterization of online games is the dependency of the adversary's loss functions on the player's actions. If an adversary's losses do not depend on the player's past actions, it is known as an \emph{oblivious adversary}; otherwise, it is called \emph{adaptive} or \emph{nonoblivious}. Arora et al.~\cite{arora2012online} showed that if the adversary's strategy is allowed to depend on an arbitrary number of previous actions, the minimax regret---defined as the regret when both the player and the adversary behave optimally---becomes $\Omega(T)$. Hence, a weaker adversary is required for learnability. The same paper demonstrated that mini-batching an algorithm with $\cO(T^{q})$ regret for oblivious adversaries leads to $\cO(T^{1/(2-q)})$ policy regret for nonoblivious adversaries with bounded memory. A specific class of unit-memory adversaries of particular interest corresponds to oblivious adversaries with switching costs.
%For instance, in the full-information games, both the \emph{follow the lazy leader} (FLL) algorithm \cite{kalai2005efficient} and the \emph{Shrinking Dartboard} (SD) algorithm \cite{geulen2010regret} achieve $\widetilde\cO(\sqrt{T})$ policy regret against oblivious adversaries with switching cost.
Although the minimax regret was shown to be $\Theta(T^{1/2})$ in the case of full-information games and $\widetilde\Theta(T^{2/3})$ in the case of bandit feedback~\cite{cesa2013online, dekel2014bandits}, the gap between $\widetilde\cO(T^{2/3})$ upper bounds and $\Omega(T^{1/2})$ lower bounds for the more general class of adversaries with unit memory in the case of full-information feedback has remained unaddressed.
%proved in the bandit feedback setting with switching cost, if the loss has bounded ``drift" (so the loss itself may not be bounded), then the players suffer $\Omega(T^{2/3})$ regret. Later, Dekel et al. \cite{dekel2014bandits} proved that even with bounded loss sequences, the bandit problem with switching cost still suffers $\widetilde\Omega(T^{2/3})$ regret. These works show that the repeated game against adaptive adversary is significantly harder than the game against oblivious opponent.

For the problem of general feedback graphs with oblivious adversaries, Alon et al.~\cite{alon2015online, alon2017nonstochastic} showed that the regret is characterized by certain characteristics of the graph structure involving domination numbers and independent sets. This leads to three different regret regimes for minimax regret: $\widetilde\Theta(T^{1/2})$, $\widetilde\Theta(T^{2/3})$, and $\Theta(T)$, which may be compared with the different rates of learning for partial monitoring games~\cite{bartok2014partial}.
The goal of this paper is to analyze the policy regret for online games with graph-structured feedback when the adversary is nonoblivious.

We make the following contributions:
\begin{itemize}
\item In the case of bounded-memory adversaries, we show that a mini-batched version of the Exp3.G algorithm achieves $\widetilde\cO(T^{2/3})$ regret for strongly-observable graphs and $\widetilde\cO(T^{3/4})$ regret for weakly-observable graphs.
\item For adversaries with bounded memory of size 1, we derive a lower bound of $\widetilde\Omega(T^{2/3})$ for full-information games, closing a gap in the current literature. Consequently, if we restrict our attention to strongly-observable graphs and adversaries with memory of size $1$, the minimax regret is $\widetilde\Theta(T^{2/3})$. 
\item For oblivious adversaries with switching costs, we derive a lower bound of $\widetilde\Omega(T^{2/3})$ for non-revealing strongly-observable graphs, showing that the minimax regret is $\widetilde\Theta(T^{2/3})$ for these classes of games.
\item In the case of a weakly-observable graph corresponding to a revealing action game, where the adversary is again oblivious with switching costs, we show that the minimax regret is $\Theta(T^{2/3})$.
\end{itemize}
Our contributions are summarized in Table~\ref{tab:sota_summary} and Figure~\ref{fig:game_difficulty} of Appendix~\ref{AppSummary}, and highlighted in boldface.

The remainder of the paper is organized as follows: In Section~\ref{SecBackground}, we provide background and notation for online learning with adaptive adversaries and feedback graphs. In Sections~\ref{sec:upper_bound} and~\ref{SecLower}, we derive our upper and lower bounds on policy regret. In Section~\ref{sec:reveal_act}, we consider a special revealing action game. We conclude in Section~\ref{SecDiscussion} by discussing open questions related to our work. For proof details, we refer the reader to the extended version of the paper.

\textbf{Notation:} We write $A_{1:n}$ to denote the sequence $(A_1, A_2, \ldots, A_n)$. When $m$ is a positive integer, we write $[m]$ to denote the sequence $1,2,\ldots, m$. We write $\bi^t$ to denote $t$ copies of a fixed action $i$.
\section{Background and preliminaries}
\label{SecBackground}

We begin by formalizing some notation and reviewing the notion of policy regret. We then introduce several graph-theoretic notions arising in the setting of feedback graphs.
%Finally, we discuss related work.

\subsection{Types of regret}

Consider a $T$-round game with an oblivious adversary, and denote the space of possible actions by $\mathcal{X}$. Denote the player's action at time $t$ by $X_t$, and denote the loss function chosen by the adversary by $f_t: \cX\mapsto[0,1]$. The \emph{standard regret} is then defined as follows:
\[
R_T^{\text{std}} = \sum_{t=1}^Tf_t(X_t)-\min_{x\in\cX}\sum_{t=1}^Tf_t(x).
\]
In other words, the regret compares the cumulative loss of the player's actual actions to the cumulative loss of the best fixed action in hindsight.
%The player's strategy is usually randomized, 
%(otherwise the adversary can easily trick the player to suffer $\cO(T)$ regret, see \cite{bubeck2012regret} Chapter 3).
%so $R_T$ is a random variable.
%due to its dependency on the randomized player's actions $X_{1:t}$
If $\mathcal{F}$ denotes the space of all oblivious loss sequences, the player seeks to minimize $\sup_{f_{1:T}\subseteq \mathcal{F}}\bbE[R_T^{\text{std}}]$, where the expectation is taken with respect to any possible randomness in the player's strategy. Hence, the difficulty of a game may be characterized by the \emph{minimax regret}: If $\cA$ denotes the class of strategies available to the player, the minimax regret is defined as $\inf_{\cA}\sup_{\mathcal{F}}\bbE[R_T^{\text{std}}]$.
%In words, this notion describes the regret when both the player and the adversary play optimally. 

%\subsection{Adaptive Adversaries and Policy Regret}

When the adversary is allowed to adapt to the player's actions, we use a slightly different notion of regret. In such a setting, the loss functions determined by the adversary may depend on past actions of the player, which we denote by the functions $f_t:\mathcal{X}^t\mapsto[0,1]$. The best fixed action of the player may incur a different loss sequence than the sequence of loss functions encountered by the player in a strategy that switches between actions. Accordingly, Arora et al.~\cite{arora2012online} introduced the notion of \emph{policy regret}:
\[
R_T = \sum_{t=1}^Tf_t(X_{1:t})-\min_{x\in\cX}\sum_{t=1}^Tf_t(\underbrace{x,\dots, x}_\text{$t$ copies}).
\]
In this paper, we will generally use the term ``regret" to refer to policy regret, unless explicitly stated otherwise. Accordingly, we wish to characterize the quantity
\begin{equation}
\label{EqnMinmaxReg}
\inf_{\cA}\sup_{\mathcal{F}}\bbE[R_T].
\end{equation}
Note that $R_T = R_T^{\text{std}}$ when the adversary is oblivious, since $f_t(X_{1:t}) = f_t(X_t)$ and $f_t(x, \dots, x) = f_t(x)$. Hence, any lower bound on the standard regret for oblivious adversaries translates into a lower bound on the policy regret for oblivious (and nonoblivious) adversaries.

We will be particularly interested in the subclass of adaptive adversaries with bounded memory. If the loss function can only depend on the $m+1$ most recent actions of the player, the adversary is called \emph{adaptive with bounded memory of size $m$}. In other words, the loss functions take the following form:
\[
    f_t(X_1, \ldots, X_{t})=f_t(X_1', \ldots, X_{t-m-1}', X_{t-m}, \ldots, X_{t}),
\]
for any $X_{1:t}\in \cX^t$ and $X_{1:(t-m-1)}'\in\cX^{t-m-1}$. Note that if the loss function is oblivious, corresponding to an adversary with memory of size 0, we have
\[
    f_t(X_1, \ldots, X_{t})=f_t(X_1', \ldots, X_{t-1}', X_{t}).
\]
A further subclass of adaptive adversaries with bounded memory of size $1$ is the class of \emph{oblivious adversary with switching costs}, obtained by first picking a sequence of oblivious losses $\{\ell_t\}$, and then defining the overall loss sequence as
\begin{align}
\begin{split}
    &f_1(X_1)=\ell_1(X_1), \\
    \forall t\geq 2, \quad &f_t(X_{t-1}, X_{t})=\ell_t(X_t)+\mathbbm{1}_{\{X_t\neq X_{t-1}\}}.
\end{split}
\end{align}

%%%%%

\subsection{Feedback graphs}

Given a feedback graph $G=(V,E)$, we define \mbox{$\Ni(v)=\{w: (w, v)\in E\}$} and $\No(v)=\{w: (v, w)\in E\}$, for all $v\in V$.

Alon et al.~\cite{alon2015online} introduced two types of feedback graphs in their study of oblivious adversaries, which we also adopt in this paper. We review two important definitions:

\begin{definition}[Observability]
Given graph $G=(V,E)$,
\begin{enumerate}
\item  A node $v\in V$ is \emph{observable} if $\Ni(v)\neq \emptyset$.
\item $v\in V$ is \emph{strongly-observable} if either (i) $\{v\}\subseteq \Ni(v)$, (ii) $V\backslash\{v\}\subseteq \Ni(v)$, or both.
\item A node that is observable but not strongly-observable is called \emph{weakly-observable}.
\item A graph $G$ is observable if all its vertices are observable, and strongly-observable if all its vertices are strongly-observable. A graph is weakly-observable if it is observable, but not strongly.
\end{enumerate}
\end{definition}
We call an online learning problem \emph{strongly-observable} (respectively, \emph{weakly-observable}) if the feedback graph is strongly-observable (respectively, weakly-observable). Throughout the paper, we assume that the player knows the graph structure, and the structure remains unchanged. Thus, when deriving bounds on the minimax regret~\eqref{EqnMinmaxReg}, we may assume that the space of loss sequences is defined over a fixed feedback graph, rather than a class of potential graphs.

\begin{definition} [Weak Domination]
Given a directed graph $G=(V, E)$ with a set of weakly-observable nodes $U\subseteq V$, a \emph{weakly dominating set} $D\subseteq V$ is a set such that $\forall v\in U$, $\exists d\in D$ such that $v\in \No(d)$. The \emph{weak domination number} $\delta=\delta(G)$ is the size of the smallest weakly dominating set.
\end{definition}

Finally, we introduce a concept of revealability:
\begin{definition} [Revealability]
We call a strongly-observable graph $G=(V, E)$ \emph{revealing} if $\Ni(u)\cap \Ni(v)\neq \emptyset$ for all $u,v \in V$. If $G$ is not revealing, we call $G$ \emph{non-revealing}.
\end{definition}

\section{Upper Bounds}
\label{sec:upper_bound}

In this section, we derive upper bounds for policy regret with strongly-observable and weakly-observable feedback graphs. The main idea is to create a mini-batched version of the Exp3.G algorithm of Alon et al.~\cite{alon2015online} (stated as Algorithm~\ref{alg:exp3g} in Appendix~\ref{AppAlgorithms}), using a technique of Arora et al.~\cite{arora2012online}: Rounds are partitioned into batches of length $\tau$, and for each batch, an action is selected according to Exp3.G and played $\tau$ times. Average losses are then fed back to the Exp3.G algorithm.
%From Exp3.G's point of view, it sees a $\lc J/\tau\rc$-round game.                                          

\begin{algorithm}[]
    \caption{Mini-batched Exp3.G}
    \begin{algorithmic}[1]
    \renewcommand{\algorithmicrequire}{\textbf{Input:}}
    \renewcommand{\algorithmicensure}{\textbf{Output:}}
    \REQUIRE Time horizon $T$, adversary's memory size $m$, mini-batch size $\tau>m$, Exp3.G algorithm $\mathcal{A}$
    \ENSURE A sequence of actions $X_{1:T}$
    \STATE Set $J=\lc T/\tau \rc$
    \FOR {$j=1,2,\ldots, J$}
        \STATE Use $\mathcal{A}$ to choose action $Z_j$, set $X_{(j-1)\tau+1:j\tau}=Z_j$
        \FOR {$t=1,2,\ldots, \tau$}
            \STATE Play $Z_j$
        \ENDFOR
        \STATE Gather loss $\frac{1}{\tau}\sum_{t=(j-1)\tau+1}^{j\tau}f_t(X_{1:t})$ and feed to $\mathcal{A}$
    \ENDFOR
    \FOR {$t=J\tau+1,\ldots, T$}
        \STATE Use $\mathcal{A}$ to choose action $X_t$
    \ENDFOR
    \RETURN $X_{1:T}$
    \end{algorithmic} 
    \label{alg:mini_batch_exp3g}
\end{algorithm}

%The main tool we use is a theorem in the same paper, restated as Lemma~\ref{thm:policyregretreduction} in Appendix~\ref{AppUpper}.

Alon et al.~\cite{alon2015online} proved that the Exp3.G algorithm obtains $\widetilde\cO(\sqrt{T})$ regret for strongly-observable feedback graphs and $\widetilde\cO(T^{2/3})$ regret for weakly-observable graphs. However, these bounds are obtained against oblivious loss sequences. In order to apply the result of Arora et al.~\cite{arora2012online} (cf.\ Lemma~\ref{thm:policyregretreduction} in Appendix~\ref{AppUpper}), we first need to modify these bounds to adaptive opponents. This is stated in the following theorem. Recall that the \emph{independence number} of a graph is the cardinality of the largest subset of vertices that are not connected by any edges.
%Here are the original lemmas and theorems that are stated in \cite{alon2015online}:
%Now this only works for oblivious adversary, as the loss functions only depend on the current action of the player.

\begin{theorem}\label{thm:exp3gupperbound}
Let $G=(V,E)$ be a feedback graph with $K=|V|$, independence number $\alpha=\alpha(G)$, and weakly dominating number $\delta=\delta(G)$. Let $D$ be a weakly dominating set such that $|D|=\delta$. The expected standard regret $\E[R_T^{\text{std}}]$ of the Exp3.G algorithm against any adaptive adversary satisfies the following:
\begin{enumerate}
\item If $G$ is strongly-observable, then for $U=V$, $\gamma=\min\left\{ \left(\frac{1}{\alpha T}\right)^{1/2}, \frac{1}{2} \right\}$, and $\eta=\frac{1}{2}\gamma$, the expected standard regret against any adaptive loss sequence is $\mathcal{O}\left(\alpha^{1/2}T^{1/2}\ln(KT)\right)$.
\item If $G$ is weakly-observable and $T\geq K^3\ln(K)/\delta^2$, then for $U=D$, $\gamma=\min\left\{\left(\frac{\delta\ln K}{T}\right)^{1/3}, \frac{1}{2}\right\}$, and $\eta=\frac{\gamma^2}{\delta}$, the expected standard regret against any adaptive loss sequence is $\mathcal{O}\left((\delta\ln K)^{1/3}T^{2/3}\right)$.
\end{enumerate}
\end{theorem}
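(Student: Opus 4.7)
The plan is to observe that the per-round analysis of Exp3.G in Alon et al.\ relies only on the conditional unbiasedness of its importance-weighted loss estimators together with deterministic graph-theoretic bounds on certain quadratic forms involving $p_t$ and the induced neighborhood probabilities $q_t$. Neither ingredient is affected by the adversary's ability to adapt, so the argument carries over essentially verbatim; the main task is to re-interpret every expectation in the oblivious proof as a conditional expectation given the player's history and then to apply the tower property at the end.

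Concretely, let $\cF_{t-1} = \sigma(X_{1:t-1})$ and define the counterfactual loss vector $\ell_t(i) := f_t(X_{1:t-1}, i)$, which for each fixed $i$ is $\cF_{t-1}$-measurable. Under this notation the standard regret equals $\sum_t \ell_t(X_t) - \min_{i^*}\sum_t \ell_t(i^*)$, matching the oblivious-looking form analyzed in Alon et al., only with a random loss vector $\ell_t$. Since the Exp3.G distribution $p_t$ and the induced neighborhood probabilities $q_t(j) = \sum_{i:(i,j)\in E}p_t(i)$ are $\cF_{t-1}$-measurable, and $X_t\mid \cF_{t-1} \sim p_t$, the importance-weighted estimator $\hat{\ell}_t(j)$ still satisfies $\E[\hat{\ell}_t(j)\mid \cF_{t-1}] = \ell_t(j)$ and the second-moment bound $\E[q_t(j)\hat{\ell}_t(j)^2\mid \cF_{t-1}] \leq \ell_t(j) \leq 1$, exactly as in the oblivious setting.

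Next, I would reproduce the potential-function analysis unchanged. The deterministic inequality comparing $\log(W_{t+1}/W_t)$ to $-\eta\langle p_t, \hat{\ell}_t\rangle$ plus a second-order term holds pointwise for any loss vector, and telescoping it (together with the usual mixing correction from $\gamma$) yields the same bound on $\E[\sum_t \langle p_t,\ell_t\rangle - \sum_t \ell_t(i^*)]$ once outer expectations are taken via the tower property. The key combinatorial steps---bounding $\sum_j p_t(j)/q_t(j)$ by $\cO(\alpha)$ in the strongly-observable case, and bounding the analogous sum restricted to a weakly-dominating set $D$ by $\cO(\delta)$ in the weakly-observable case---are statements about $p_t$ and $G$ only, hence entirely insensitive to adaptivity. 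Substituting the prescribed values of $\gamma$ and $\eta$ then yields the two claimed rates $\cO(\alpha^{1/2}T^{1/2}\ln(KT))$ and $\cO((\delta\ln K)^{1/3}T^{2/3})$.

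The main obstacle I anticipate is purely bookkeeping rather than conceptual: at every step where the oblivious proof takes an unconditional expectation of a product involving $\ell_t$ and $p_t$, one must instead condition on $\cF_{t-1}$ and only then pass to the outer expectation, so that the adaptivity of $\ell_t$ does not weaken any inequality. No new probabilistic or combinatorial idea is required, and the final constants coincide with those obtained by Alon et al.\ in the oblivious case.
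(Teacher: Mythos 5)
Your proposal is correct and follows essentially the same route as the paper: the paper likewise fixes the history $X_{1:t-1}$ so that the conditional losses $\ell_t(i_{1:t-1};\cdot)$ behave as oblivious vectors, applies the deterministic second-order exponential-weights bound (the potential argument you describe) to the estimated losses, uses conditional unbiasedness and the second-moment identity, and invokes the same graph-theoretic lemmas ($\cO(\alpha\ln(\cdot))$ via the independence-number lemma, and the $\delta/\gamma$ bound on the weakly dominating set) before taking the outer expectation by the tower property. Only note that what you call $q_t(j)$ is the paper's $P_t(j)=\sum_{i\in\Ni(j)}p_t(i)$, and in the strongly-observable case one must also handle nodes without self-loops separately (as the paper does with the set $S$ and the bound $\frac{1-q_t(i)}{1-p_t(i)}\le 2$), but these are bookkeeping details within the same argument.
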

The proof of Theorem~\ref{thm:exp3gupperbound} is provided in Appendix~\ref{AppThmExp3}. Combining Lemma~\ref{thm:policyregretreduction} in Appendix~\ref{AppUpper} with Theorem~\ref{thm:exp3gupperbound}, we obtain the desired upper bound:
\begin{theorem}
The mini-batched version of Exp3.G (Algorithm~\ref{alg:mini_batch_exp3g}) against adversaries with memory of size $m$ achieves $\widetilde\cO(T^{2/3})$ policy regret if $G$ is strongly-observable, and $\widetilde\cO(T^{3/4})$ policy regret if $G$ is weakly-observable.
\end{theorem}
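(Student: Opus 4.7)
The plan is to combine the adaptive-adversary standard regret bound in Theorem~\ref{thm:exp3gupperbound} with the mini-batching reduction of Arora et al., which the paper restates as Lemma~\ref{thm:policyregretreduction}. First I would analyze a single batch of length $\tau$: Algorithm~\ref{alg:mini_batch_exp3g} plays a fixed action $Z_j$ for all $\tau$ rounds, so for every $t$ with $(j-1)\tau + m + 1 \le t \le j\tau$, the $m+1$ most recent actions of the player are all equal to $Z_j$, and the adaptive loss $f_t(X_{1:t})$ depends only on $Z_j$. The first $m$ rounds of each batch may still carry residual dependence on $Z_{j-1}$ and thus contribute a total ``transient'' loss of at most $mJ \le mT/\tau$.

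Next I would view the algorithm as Exp3.G running for $J = \lfloor T/\tau \rfloor$ super-rounds on the averaged losses $\bar f_j = \frac{1}{\tau}\sum_{t=(j-1)\tau+1}^{j\tau} f_t(X_{1:t})$ fed back in line~7 of the algorithm. Since $\bar f_j$ may still depend on $Z_{j-1}$ through the batch's first $m$ rounds, these super-round losses form an adaptive sequence, which is precisely the setting covered by Theorem~\ref{thm:exp3gupperbound}. Applying that theorem at the super-round scale with horizon $J$ gives a super-round standard regret of $\widetilde\cO(\sqrt{J})$ for strongly-observable $G$ and $\widetilde\cO(J^{2/3})$ for weakly-observable $G$. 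Rescaling by $\tau$ to return to the original time scale and folding in the transient error, we obtain policy-regret upper bounds
\[
\widetilde\cO\!\left(\tau\sqrt{T/\tau} + mT/\tau\right) \qquad \text{and} \qquad \widetilde\cO\!\left(\tau (T/\tau)^{2/3} + mT/\tau\right),
\]
respectively. Balancing terms yields $\tau \asymp T^{1/3}$ and a $\widetilde\cO(T^{2/3})$ bound in the strongly-observable case, and $\tau \asymp T^{1/4}$ and a $\widetilde\cO(T^{3/4})$ bound in the weakly-observable case; the at most $\tau$ leftover rounds after the last complete batch contribute only lower-order terms, as do the transient summands for any fixed $m$.

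The main obstacle is confirming that the super-round comparator coming out of Exp3.G is faithful to the original policy-regret comparator $\min_x \sum_t f_t(x,\dots,x)$. For any fixed action $x$ played constantly across all $T$ rounds, bounded memory gives $f_t(X_{1:t}) = f_t(x,\dots,x)$ beyond the first $m$ rounds, so the per-batch averaged loss under the constant strategy equals $\frac{1}{\tau}\sum_{t=(j-1)\tau+1}^{j\tau} f_t(x,\dots,x)$ up to an additive $m/\tau$, and this identification is exactly the bookkeeping packaged into Lemma~\ref{thm:policyregretreduction}. A secondary check is that the tuning parameters $\gamma,\eta$ prescribed in Theorem~\ref{thm:exp3gupperbound} must be chosen relative to horizon $J$ (not $T$), and that the weakly-observable precondition $J \ge K^3 \ln K / \delta^2$ translates into a mild lower bound on $T$ that holds in the regime of interest; both are routine substitutions once $\tau$ has been fixed as above.
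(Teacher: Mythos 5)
Your proposal is correct and follows essentially the same route as the paper: the paper's proof is precisely the combination of Theorem~\ref{thm:exp3gupperbound} (adaptive standard-regret bounds for Exp3.G) with Lemma~\ref{thm:policyregretreduction} (the Arora et al.\ mini-batching reduction), applying the lemma with $R(J)\propto J^{1/2}$ and $R(J)\propto J^{2/3}$ to get $T^{2/3}$ and $T^{3/4}$. Your additional unpacking of the transient terms, the comparator identification, and the balancing $\tau\asymp T^{1/3}$ resp.\ $T^{1/4}$ is just a re-derivation of the bookkeeping already contained in that lemma, so no new ideas are needed beyond what the paper cites.
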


The upper bound holds for any adaptive adversary with bounded memory, indicating that online learning problems do not become harder even if the adversary's memory is bounded by a larger constant. We show that this upper bound may not always be tight: For example, for oblivious adversaries with switching costs, we will show in Section~\ref{sec:reveal_act} that a revealing action feedback graph leads to a minimax regret bound of $\Theta(T^{2/3})$, even though the graph is weakly-observable. 
\section{Lower bounds}
\label{SecLower}

We now turn to lower bounds. Although we cannot obtain matching lower bounds for all classes of  adversaries and feedback graphs, we show that for certain types of graphs and adversaries, our upper bounds are tight. From Alon et al.~\cite{alon2015online}, the standard regret for any observable graph is lower-bounded by $\widetilde\Omega(T^{1/2})$ when the adversary is oblivious. Thus, we certainly have a policy regret lower bound of $\widetilde\Omega(T^{1/2})$ for all observable graphs in the case of nonoblivious adversaries. The results of this section show that the gap between upper and lower bounds can be closed in certain special cases.

%%%%%

\subsection{Adversaries with bounded memory of size 1}

In this subsection, we show that the full-information game against an adversary with bounded memory of size 1 has $\widetilde\Omega(T^{2/3})$ regret, thus closing a gap in the literature~\cite{cesa2013online}. It suffices to consider an easy setup with two arms; bounds for the more general case with arbitrarily many arms may be derived using a similar technique.

\begin{theorem}\label{thm:full_info_lower_bound}
For a time horizon $T>10$ and any $(T+1)$-round online learning problem with full-information feedback, and for any randomized player strategy, there exists a bounded loss sequence $f_1,\ldots, f_{T+1}$ with memory of size $1$ such that
\[
\bbE[R_{T+1}]\geq\frac{T^{2/3}}{500\log_2T}=\widetilde\Omega(T^{2/3}).
\]
\end{theorem}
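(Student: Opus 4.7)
The plan is to adapt the multi-scale random walk lower bound of Dekel, Ding, Koren, and Peres (originally used for bandits with switching costs) to our full-information setting with a memory-$1$ adversary. The memory-$1$ structure plays, for full-information feedback, the role that bandit feedback plays in the original argument: it allows the adversary to hide roughly $\widetilde\Omega(T^{2/3})$ worth of information from the player by encoding it in counterfactual observations that depend on $X_{t-1}$ and are therefore not simultaneously visible in a single round.

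First I would restrict to a two-arm problem and introduce a secret ``good arm'' $\chi \in \{0,1\}$ chosen uniformly at random; a Yao-style averaging argument then reduces the theorem to an expected-regret lower bound against this randomized adversary. Following Dekel et al., I would define a multi-scale random walk $W_t = \sum_{d=0}^{\lfloor\log_2 T\rfloor} \xi_{d,\lfloor t/2^d\rfloor}$ built from independent bounded mean-zero increments, and use $W_t$ to inject correlated noise into the loss, so that the loss is a clipped version of something like
\[
f_t(X_{t-1}, X_t) = \tfrac{1}{2} + \tfrac{\epsilon}{2}\,\mathbbm{1}[X_t \neq \chi] + \tfrac{1}{2}\,W_t\,\psi(X_{t-1}, X_t),
\]
where $\psi(\cdot,\cdot)\in\{-1,+1\}$ is chosen so that the pair $(f_t(X_{t-1},0), f_t(X_{t-1},1))$ that the full-information player actually sees does not allow $W_t$ or $\chi$ to be isolated algebraically; the counterfactual values the player would have seen under $1-X_{t-1}$ are what hide $\chi$.

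Let $\bbP_\chi$ denote the law of the player's transcript under good arm $\chi$. I would apply the chain rule of KL divergence round by round, use the independence of the increments $\xi_{d,\cdot}$ across scales together with the multi-scale walk's key property that only $\cO(T/2^d)$ rounds at scale $d$ contribute nontrivial distinguishing information, and prove $\kl(\bbP_0 \Vert \bbP_1) \leq C\epsilon^2 T\log T$ for a universal constant $C$. Pinsker's inequality then gives a total-variation bound $\|\bbP_0 - \bbP_1\|_{TV} \leq C'\epsilon\sqrt{T\log T}$, and choosing $\epsilon \asymp 1/(T^{1/3}\log T)$ forces any player strategy to misidentify the optimal arm on a constant fraction of rounds under either value of $\chi$, producing expected regret $\Omega(\epsilon T) = \widetilde\Omega(T^{2/3})$; tracking the constants yields the stated $T^{2/3}/(500\log_2 T)$ bound.

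The hard part will be designing the encoding $\psi$ and verifying simultaneously (i) that losses lie in $[0,1]$, (ii) that the sequence has exact memory $1$, and (iii) that no ``full-information denoising'' (e.g.\ summing or differencing $f_t(X_{t-1},0)$ and $f_t(X_{t-1},1)$) exposes either $W_t$ or the bias $\epsilon\mathbbm{1}[X_t\neq\chi]$; a construction that passes (iii) only for specific $X_{t-1}$ can leak information on the other branch. Once these invariants are in place, the per-round KL contribution is genuinely $\cO(\epsilon^2)$ rather than being inflated by a constant-variance observation (which would collapse the bound to $\widetilde\Omega(\sqrt T)$), and this is the real payoff of encoding the hidden walk at exponentially many scales.
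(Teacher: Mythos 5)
Your high-level instinct (import the multi-scale random walk of Dekel et al.\ and run a KL/Pinsker argument) matches the paper, but as written the plan has two genuine gaps, one structural and one arithmetic. Structurally, your loss $f_t(X_{t-1},X_t)=\tfrac12+\tfrac{\epsilon}{2}\mathbbm{1}[X_t\neq\chi]+\tfrac12 W_t\,\psi(X_{t-1},X_t)$ contains no switching penalty, and the entire Dekel-et-al.\ mechanism hinges on one: the per-round distinguishing information is nonzero only on rounds $t$ with $X_t\neq X_{\rho(t)}$, whose number is at most $w(\rho)\,M_T$ where $M_T$ is the number of switches, and the switch indicator in the loss is what makes $M_T$ itself count toward regret, producing the tradeoff $\E[R]\gtrsim \max\{\E[M_T],\ \epsilon T\,\mathbb{P}(\text{wrong arm})\}$ that yields $T^{2/3}$. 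Without a term charging for switches, nothing stops the player from switching every round to harvest information, and no bound of the form $\kl\lesssim \frac{\epsilon^2}{\sigma^2}w(\rho)\E[M_T]$ can be converted into a regret bound. The paper avoids having to invent an encoding $\psi$ at all: it sets $f_t(X_{t-1},X_t)=L_{t-1}(X_{t-1})+\mathbbm{1}\{X_{t-1}\neq X_t\}$ with $L$ the MRW-based oblivious sequence, so that under full information the player's observations at round $t$ reveal only $L_{t-1}$ at the \emph{previously played} arm --- i.e.\ the game collapses exactly to a two-armed bandit with switching costs, and the known $\widetilde\Omega(T^{2/3})$ machinery applies verbatim. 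Your own worry about ``full-information denoising'' is well founded: with two arms the player sees both $f_t(X_{t-1},0)$ and $f_t(X_{t-1},1)$, and their sum and difference either isolate $\chi$ (a deterministic $\pm\epsilon/2$ gap when $\psi(X_{t-1},0)=\psi(X_{t-1},1)$) or isolate $W_t\pm\epsilon/2$ otherwise, so the sign-encoding idea does not obviously survive; you flag this as ``the hard part'' but do not resolve it, whereas the paper's construction makes the issue moot.

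Arithmetically, the budget you state cannot give the theorem: a bound $\kl(\bbP_0\Vert\bbP_1)\leq C\epsilon^2 T\log T$ with $\epsilon\asymp T^{-1/3}/\log T$ gives $\kl\approx T^{1/3}/\log T\to\infty$, so Pinsker yields a total-variation bound far exceeding $1$ and no misidentification conclusion follows; to make that KL bound $\cO(1)$ you would be forced to take $\epsilon\asymp (T\log T)^{-1/2}$, which only recovers an $\Omega(\sqrt{T/\log T})$ regret. The fix is precisely the missing switch-dependence: the paper's Lemma on the conditional laws $\mathbb{S},\mathbb{Q}$ gives $|\mathbb{S}(A)-\mathbb{Q}(A)|\leq\frac{\epsilon}{\sigma}\sqrt{w(\rho)\E_{\bbP}[M_{T+1}]}$, and the $T^{2/3}$ rate then comes from optimizing the resulting expression that is quadratic in $\sqrt{\E[M_{T+1}]}$, not from a global Pinsker bound over all $T$ rounds. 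You would also need the final clipping step (comparing $R_{T+1}$ to the unclipped $R'_{T+1}$ via the high-probability bound on $\max_t|W_t|$) to get bounded losses, which your sketch mentions only implicitly.
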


\begin{proof} (Sketch.)
The proof begins by choosing appropriate oblivious loss sequences $L_1(X_1),L_2(X_2),\ldots$ in a similar fashion as in Dekel et al.~\cite{dekel2014bandits}, upon which we build our adaptive loss sequences with memory of size $1$. Formally, Algorithm~\ref{alg:loss_seq_generation}  in Appendix~\ref{AppMRW} is used to generate the oblivious loss sequences $L_{1:T}$. We then define our adaptive loss sequence as follows:
\begin{align}\label{eq:mem_1_loss}
\begin{split}
    &f_1({}\boldsymbol{\cdot}{}, X_1)=0,\\
    &f_t(X_{t-1}, X_t)=L_{t-1}(X_{t-1})+\mathbbm{1}_{\{X_{t-1}\neq X_t\}}, \\
    &\quad \text{for }t=2,\ldots, T+1.
\end{split}
\end{align}
The proof proceeds via a KL-divergence calculation; details are provided in Appendix~\ref{AppThmFullInfo}.
\end{proof}

% Intuitively, we are converting a full-information game to a bandit game. Our adaptive loss sequence at round $t$ depends on the player's action at the previous round. So even if the player observes full information, both arms would give exactly the same feedback. The player is not able to know what loss he would suffer had he chosen a different action in the previous round. Hence at least $\epsilon T$ number of switches is needed to distinguish between the two actions, which in the end gives $\Omega(T^{2/3})$ lower bound given our choice of $\epsilon$. The detail of the proof is given in the Appendix.

Since full-information feedback graphs constitute the ``easiest" games with strongly-observable feedback graphs, the lower bound extends to all strongly-observable games against unit-memory adversaries:
\begin{corollary}
For any strongly-observable feedback graph $G=(V, E)$, the regret is lower-bounded by $\widetilde\Omega(T^{2/3})$ when the memory of the adversary is bounded by 1.
\end{corollary}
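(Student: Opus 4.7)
The plan is a two-step reduction to Theorem~\ref{thm:full_info_lower_bound}. First, any randomized strategy $\pi$ for the game on a strongly-observable $G$ can be simulated by a strategy $\pi'$ for the full-information game on $V$: $\pi'$ internally runs $\pi$ and, at each round, forwards to $\pi$ only the coordinates of the full loss vector corresponding to $\No(X_t)$, which is exactly what $G$ would have revealed. Then $\pi$ on $G$ and $\pi'$ on the complete graph induce the same joint distribution over $(X_1,\ldots,X_T)$ for any adversary, hence the same policy regret. It therefore suffices to exhibit a memory-$1$ adversary on the $|V|$-arm full-information game that forces $\widetilde\Omega(T^{2/3})$ regret for every full-information $\pi'$.

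The second step constructs such an adversary by embedding the $2$-arm hard instance. Assuming $|V|=K\geq 2$ (the case $K=1$ is vacuous), pick any two vertices $u,v\in V$ and run the multi-scale random-walk construction underlying Theorem~\ref{thm:full_info_lower_bound} to generate oblivious losses $L_{1:T}$ on $\{u,v\}$. Extend by setting $L_t(w):=1$ for every $w\in V\setminus\{u,v\}$ and every $t$, and define the memory-$1$ sequence via~\eqref{eq:mem_1_loss}. The optimal fixed arm still lies in $\{u,v\}$ with total loss roughly $T/2+o(T)$, while any round in which the player occupies a dummy arm at time $t-1$ contributes an $\Omega(1)$ per-round gap to the benchmark. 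A case split on the expected number $N$ of dummy plays completes the argument: when $\E[N]=\Omega(T^{2/3})$ the dummy losses alone drive the regret above $\widetilde\Omega(T^{2/3})$; when $\E[N]=o(T^{2/3})$, $\pi'$ agrees with a two-arm strategy on all but a negligible fraction of rounds, and the full-information $2$-arm lower bound of Theorem~\ref{thm:full_info_lower_bound} applies directly up to an additive error absorbable into $\widetilde\Omega(T^{2/3})$.

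The main obstacle is the second subcase: one must check that the extra full-information observations of dummy arms do not help $\pi'$ distinguish the candidate posteriors over the random walk hidden on $\{u,v\}$. Because every dummy loss is deterministically $1$ regardless of history, a single application of data-processing shows that the KL computation inside the proof of Theorem~\ref{thm:full_info_lower_bound} is unaffected by the presence of dummy arms. This is the only step where the reduction must inspect the internals of Theorem~\ref{thm:full_info_lower_bound}; everything else is the generic information-monotonicity simulation captured by the intuition that full-information feedback is the most generous strongly-observable feedback graph.
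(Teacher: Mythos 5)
Your proposal is correct and follows essentially the same route as the paper: the paper disposes of this corollary with the observation that full-information feedback is the most generous strongly-observable feedback (your simulation/data-processing step), combined with the remark after Theorem~\ref{thm:full_info_lower_bound} that the two-arm construction extends to arbitrarily many arms (your dummy-arm padding with deterministic loss $1$, which indeed leaves the KL computation untouched). The only differences are presentational --- your explicit case split on the number of dummy plays is one concrete way to carry out the multi-arm extension the paper leaves implicit.
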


%%%%

\subsection{Non-revealing strongly-observable games with switching costs}
\label{sec:strong_observable_lower_bound}

We now focus on strongly-observable games where the adversary is oblivious with switching costs. Unfortunately, although the Exp3.G algorithm is minimax optimal for all strongly-observable graphs when the adversary is oblivious, this is not true when the game involves switching costs. It is known that certain strategies exist which incur $\cO(\sqrt{T})$ regret in the full-information game, whereas the multi-armed bandit problem suffers a lower bound of $\widetilde\Omega(T^{2/3})$~\cite{dekel2014bandits}, even though both games are induced by strongly-observable graphs. Hence, some strongly-observable games are more difficult than others.

The proof is based on a reduction from the original graph to a subgraph, and we use the max-min inequality to show that the game induced by the original graph is at least as hard as the game induced by the subgraph. Accordingly, we will use the following notion in our development:

\begin{definition} [Observability preserving property]
\label{def:observability_preserve}
Let $G_1=(V_1, E_1)$ be a subgraph of $G=(V, E)$. Let $G_2=(V_2, E_2)$ be such that $V_1\cup V_2=V$ and $E_1\cup E_2\subseteq E$. We say that $G_1$ \emph{preserves the observability} of $G$ if and only if
\begin{align*}
& \forall v\in V_2,\ \exists w\in V_1 \text{ s.t } \\
& \qquad \forall b\in V_1, \text{ if } (v, b)\in E, \text{ then } (w, b)\in E_1.
\end{align*}
Note that it is possible that $w=b$. We call $w$ an \emph{observing node} of $v$ in $G_1$, and write $w\overset{\Delta}{=}v_{ob}$.
\end{definition}

We have the following result:

\begin{theorem}
\label{ThmSC}
If the feedback graph $G=(V,E)$ is strongly-observable and non-revealing and the adversary is oblivious with switching costs, the expected regret of any player strategy is bounded below by $\widetilde\Omega(T^{2/3})$.
\end{theorem}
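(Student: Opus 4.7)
My plan is to reduce the game on $G$ to a two-arm bandit with switching costs, for which Dekel et al.\ proved a $\widetilde\Omega(T^{2/3})$ lower bound, via an explicit two-vertex subgraph $G_1$ of $G$ extracted using the non-revealing and strong-observability hypotheses. Since $G$ is non-revealing, I fix $u, v \in V$ with $\Ni(u) \cap \Ni(v) = \emptyset$. In the generic case $|V| \ge 3$, strong observability forces both $u$ and $v$ to carry self-loops: if $u$ lacked one, $V \setminus \{u\} \subseteq \Ni(u)$ would put $v \in \Ni(u)$, and the same case split on $v$ would then expose a common neighbour, contradicting non-revealing. Let $V_1 = \{u, v\}$ and $E_1 = \{(u,u), (v,v)\}$, so $G_1 = (V_1, E_1)$ is the two-arm bandit graph. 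For any $w \in V \setminus V_1$, non-revealing forces $|\No(w) \cap V_1| \le 1$ (otherwise $w \in \Ni(u) \cap \Ni(v)$), so I set $w_{ob}$ to be that unique neighbour when it exists and arbitrary in $V_1$ otherwise; this verifies the observability-preservation condition of Definition~\ref{def:observability_preserve}, because the only edges of $E_1$ are the self-loops.

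Next I embed the hard instance. Let $\mathcal{D}$ be the distribution over oblivious two-arm sequences $\ell_1, \ldots, \ell_T : V_1 \to [0,1]$ underlying Dekel et al.'s lower bound, and extend each sample to $V$ by setting $\ell_t(w) = 1$ for $w \notin V_1$. The adversary plays $f_t(X_{t-1}, X_t) = \ell_t(X_t) + \mathbbm{1}\{X_t \neq X_{t-1}\}$, an oblivious loss with switching costs. Since $\ell_t(w) = 1 \ge \max\{\ell_t(u), \ell_t(v)\}$ always, the best fixed action lies in $V_1$, so the comparator on $G$ agrees with the comparator on $G_1$.

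Given any (possibly randomized) player strategy $\pi$ on $G$, I construct a companion strategy $\pi'$ on $G_1$ via the projection $\phi : V \to V_1$ with $\phi(x) = x$ for $x \in V_1$ and $\phi(x) = x_{ob}$ otherwise: internally $\pi'$ runs $\pi$ and plays $X_t' = \phi(X_t)$. By observability preservation and $\ell_t \equiv 1$ on $V \setminus V_1$, every observation $\pi$ expects on $G$ can be reconstructed from $\pi'$'s single observation $\ell_t(X_t')$ on $G_1$ plus the constant $1$, so $\pi'$ can simulate $\pi$'s view exactly and $\pi$ produces the same action sequence $(X_t)_t$ inside $\pi'$ as it would on the real game. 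Two pointwise inequalities then govern the regret comparison: because $\phi$ is a function, $X_t = X_{t-1}$ implies $X_t' = X_{t-1}'$, so $\pi'$'s switch count is bounded by $\pi$'s; and $\ell_t(X_t) \ge \ell_t(\phi(X_t)) = \ell_t(X_t')$ for every $t$, so $\pi$'s cumulative base loss dominates $\pi'$'s. Combined with the comparator equality, this yields
\[
R_T(\pi \text{ on } G) \;\ge\; R_T(\pi' \text{ on } G_1)
\]
pointwise in all randomness.

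Taking expectation with $\ell_{1:T} \sim \mathcal{D}$ and over $\pi$'s internal coins, Dekel et al.\ supply $\E[R_T(\pi' \text{ on } G_1)] = \widetilde\Omega(T^{2/3})$ for every $\pi$, so $\E[R_T(\pi \text{ on } G)] = \widetilde\Omega(T^{2/3})$ under this particular random adversary. The max-min (Yao) inequality $\min_\pi \max_{\text{adv}} \E[R_T] \ge \max_\mu \min_\pi \E_\mu[R_T]$ then transfers this to the minimax regret on $G$. I expect the main technical obstacle to be the two verifications that underpin the reduction: that $G_1$ genuinely preserves the observability of $G$ (which rests on non-revealing forcing $|\No(w) \cap V_1| \le 1$), and that the simulation's observation forwarding is faithful, so that $\pi$ truly produces the same action sequence inside $\pi'$ as on $G$. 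Once these are in hand, the regret comparison is a clean pointwise inequality with no case splits on how often $\pi$ strays outside $V_1$, and the bandit-with-switching-costs lower bound closes out the argument.
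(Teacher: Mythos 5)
Your proposal is correct and follows the same overall architecture as the paper's proof: extract a two-vertex bandit subgraph $G_1$ on a pair $u,v$ with $\Ni(u)\cap\Ni(v)=\emptyset$, show that $G_1$ preserves the observability of $G$, reduce the game on $G$ to the game on $G_1$ by assigning loss $1$ outside $V_1$ and projecting the player's strategy through the observing-node map, and close with the Dekel et al.\ bandit-with-switching-costs lower bound. Two places where your write-up genuinely departs from (and improves on) the paper's presentation are worth noting. First, the paper establishes observability preservation by enumerating configurations of non-revealing strongly-observable graphs (Figure~\ref{fig:strongobservablecases}), which is only persuasive for small $|V|$; your one-line argument that $\Ni(u)\cap\Ni(v)=\emptyset$ forces $|\No(w)\cap V_1|\le 1$ for every $w\notin V_1$ handles arbitrary graphs directly and identifies the actual reason the reduction works. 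Second, you inline the reduction (the paper's Lemma~\ref{graphreductionlma}) as two pointwise inequalities --- the projected strategy switches no more often and incurs no more base loss, while the comparators coincide --- which makes the simulation-faithfulness step transparent; note that faithfulness also uses the fact that $u$ and $v$ are independent (so that $\No(u)\cap V_1=\{u\}$ and the bandit subgraph reveals everything the original player would see about $V_1$), which follows immediately from the self-loops together with $\Ni(u)\cap\Ni(v)=\emptyset$ but deserves an explicit sentence. The only loose end is the case $|V|=2$ with no self-loops: the two-vertex graph consisting of the two cross edges is strongly observable and non-revealing yet has independence number $1$, so your self-loop argument (and the paper's Lemma~\ref{lem:ind_num} and enumeration alike) does not apply to it. Since this degenerate case is equally unaddressed by the paper, it is not a gap relative to the published argument, but a fully self-contained proof should either exclude it or treat it separately.
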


\begin{proof}
Recall the notion of revealability defined in Section~\ref{SecBackground}. We first show that the independence number of $G$ is at least 2 for a non-revealing strongly-observable graph. This is proved in Lemma~\ref{lem:ind_num} in Appendix~\ref{AppSC}.

Hence, if $G$ is non-revealing, we can find $u, v\in V$ such that $u$ and $v$ are independent and their dominating sets $\Ni(u)$ and $\Ni(v)$ are disjoint. By the enumeration in Figure~\ref{fig:strongobservablecases} of Appendix~\ref{AppLower}, such a graph must contain a two-node subgraph $G_1$ that preserves the observability of $G$. By Lemma~\ref{graphreductionlma} in Appendix~\ref{AppSC}, the game induced by $G$ is at least as hard as any game on $G_1$. The game on $G_1$ is simply a bandit problem, so it has $\widetilde\Omega(T^{2/3})$ regret~\cite{dekel2014bandits}.
\end{proof}

%We give the following lemma to make connection between the game induced by $G_1$ and the game induced by $G$. The proof of the lemma is provided in Appendix~\ref{AppLemGraphRed}.

\section{The revealing action game}
\label{sec:reveal_act}

Section~\ref{sec:upper_bound} supplies a policy regret upper bound for a mini-batched version of Exp3.G, where the player's policy regret is $\widetilde\cO(T^{3/4})$ if the feedback graph is weakly-observable. The $\widetilde\Omega(T^{2/3})$ regret lower bound for weakly-observable games against oblivious opponents naturally extends to games against adaptive opponents. However, which bound is improvable?

The answer turns out to be the upper bound: In the revealing action game, better algorithms exist. We first consider the \emph{label-efficient prediction} problem~\cite{helmbold1997some}, which is nearly the same as a revealing action game. The difference is that in the revealing action game, the revealing action $r$ is a vertex in the graph $G$, and upon playing that action, the player will achieve loss $\ell_t(r)$ and also observes the losses of all the vertices in $G$, including $r$. On the other hand, in the label-efficient prediction problem, the ``revealing action" is not an actual action that the player can play; instead, the player chooses this action only to reveal the losses of other actions, and wants to query this action as infrequently as possible. Cesa-Bianchi et al.~\cite{cesa2005minimizing} devised a lazy player strategy for such problems, provided in Algorithm~\ref{alg:lazy_label_efficient} in Appendix~\ref{AppAlgorithms}. The authors derived the following regret bound:
\begin{lemma}
\label{LemRTLazy}
Fix a time horizon $T$. Denote the number of actions by $N$. Set $\epsilon=\frac{m}{T}$ and $\eta=\frac{\sqrt{2m\ln N}}{T}$ in Algorithm~\ref{alg:lazy_label_efficient}. Then for any oblivious loss sequences, the expected regret satisfies
\begin{equation*}
\bbE[R_T^\text{std}]\leq T\sqrt{\frac{2\ln N}{m}}.
\end{equation*}
\end{lemma}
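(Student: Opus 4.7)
The plan is to follow the classical analysis of importance-weighted exponential weighting adapted to the label-efficient setting, as developed in \cite{cesa2005minimizing}. First I would introduce indicator random variables $Z_1, \ldots, Z_T$, independent Bernoulli$(\epsilon)$, where $Z_t = 1$ encodes the event that the player issues a label query in round $t$ (recall that Algorithm~\ref{alg:lazy_label_efficient} only performs a weight update when $Z_t = 1$). Defining the importance-weighted loss estimator $\tilde{\ell}_t(i) = \frac{Z_t}{\epsilon}\, \ell_t(i)$, one checks that $\E[\tilde{\ell}_t(i) \mid \mathcal{F}_{t-1}] = \ell_t(i)$, where $\mathcal{F}_{t-1}$ is the sigma-algebra generated by the past query indicators and random draws. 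The lazy nature of the algorithm means that the exponential-weight distribution $p_t$ stays fixed on non-query rounds, but this poses no difficulty for the analysis: on those rounds $\tilde{\ell}_t \equiv 0$, so both the potential update and the second-order variance term vanish identically.

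Next, I would apply the standard potential-based analysis for exponential weights to the sequence $(\tilde{\ell}_t)_{t=1}^T$. Using the potential $\Phi_t = \frac{1}{\eta}\ln \sum_i w_t(i)$ with $w_t(i) = \exp\bigl(-\eta \sum_{s \leq t}\tilde{\ell}_s(i)\bigr)$, together with the elementary inequality $e^{-x} \leq 1 - x + x^2/2$ valid for $x \geq 0$, a telescoping argument yields, for every comparator action $i^\star \in [N]$,
\begin{equation*}
\sum_{t=1}^T \langle p_t, \tilde{\ell}_t\rangle - \sum_{t=1}^T \tilde{\ell}_t(i^\star) \;\leq\; \frac{\ln N}{\eta} + \frac{\eta}{2}\sum_{t=1}^T \sum_{i=1}^N p_t(i)\, \tilde{\ell}_t(i)^2.
\end{equation*}
I would then take expectations on both sides. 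Unbiasedness turns the left-hand side into $\E[R_T^{\text{std}}]$ after minimizing over $i^\star$; for the right-hand side, the bound $\tilde{\ell}_t(i)^2 \leq Z_t/\epsilon^2$ together with $\E[Z_t] = \epsilon$ gives $\E\bigl[\sum_i p_t(i)\, \tilde{\ell}_t(i)^2\bigr] \leq 1/\epsilon$, producing
\begin{equation*}
\E[R_T^{\text{std}}] \;\leq\; \frac{\ln N}{\eta} + \frac{\eta T}{2\epsilon}.
\end{equation*}
Substituting $\epsilon = m/T$ and $\eta = \sqrt{2m \ln N}/T$ balances the two terms at $T\sqrt{\ln N /(2m)}$ apiece, which sum to the claimed bound $T\sqrt{2\ln N/m}$.

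The only non-routine point will be handling the laziness of the algorithm inside the potential argument, namely verifying that $p_t$ is $\mathcal{F}_{t-1}$-measurable and that $Z_t$ is independent of $p_t$, so that the conditional-expectation identities commute cleanly with the telescoping sum. This is essentially bookkeeping; once the filtration is pinned down, the rest of the argument follows immediately from the standard Hedge inequality and the parameter-tuning calculation above.
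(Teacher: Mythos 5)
Your proposal is essentially the standard label-efficient exponential-weights analysis, which is exactly the argument behind the result the paper imports here: Lemma~\ref{LemRTLazy} is stated without proof and attributed to Cesa-Bianchi et al.~\cite{cesa2005minimizing}, and your importance-weighted estimator $\tilde\ell_t(i)=\frac{Z_t}{\epsilon}\ell_t(i)$, the second-order Hedge bound with the $\frac{\eta}{2}\sum_i p_t(i)\tilde\ell_t(i)^2$ term, and the tuning $\epsilon=m/T$, $\eta=\sqrt{2m\ln N}/T$ reproduce that source's proof and the stated constant correctly. One point deserves a sharper statement than your closing ``bookkeeping'' remark: unbiasedness of $\tilde\ell_t$ only converts the left-hand side into $\E\bigl[\sum_t\langle p_t,\ell_t\rangle\bigr]-\sum_t\ell_t(i^\star)$, whereas the regret involves $\E\bigl[\sum_t\ell_t(X_t)\bigr]$, and in the lazy algorithm $X_t$ is \emph{not} redrawn from $p_t$ each round. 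The needed fact is that, conditionally on $Z_{1:t-1}$ (which, for oblivious losses, determines the weights), the played action $X_t$ still has law $p_t$: the weights are frozen on non-query rounds and the action is refreshed immediately after every query round, so $X_t$ was drawn from the same distribution $p_t$ at the last refresh. This is precisely where obliviousness enters (for an adaptive adversary the identity $\E[\ell_t(X_t)]=\E[\langle p_t,\ell_t\rangle]$ fails, which is why the lemma is stated only for oblivious sequences and why the paper treats switching costs separately in its Theorem on Algorithm~\ref{alg:lazy_revealing_action_alg}); your stated checks ($p_t$ being $\cF_{t-1}$-measurable, $Z_t$ independent of $p_t$) concern the estimator, not this action-distribution step, so spell it out when writing the full proof.
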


Taking $m=T^{2/3}$ in Lemma~\ref{LemRTLazy} yields the bound $\cO(T^{2/3})$. Notice $m$ is the expected number of queries. Furthermore, Cesa-Bianchi et al.~\cite{cesa2006regret} provided a non-lazy algorithm for the revealing action game. Combining these ideas, we obtain a lazy algorithm that achieves $\cO(T^{2/3})$ regret against oblivious adversaries with switching costs, described in Algorithm~\ref{alg:lazy_revealing_action_alg}. We have the following result:

\begin{theorem}
Fix a time horizon $T$. Let $N$ denote the number of actions. For $\epsilon = \frac{m}{T}$ and $\eta=\frac{\sqrt{2m\ln N}}{T}$,  Algorithm~\ref{alg:lazy_revealing_action_alg} achieves $\cO(T^{2/3})$ policy regret in the revealing action game against oblivious opponents with switching costs.
\end{theorem}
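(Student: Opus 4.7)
The plan is to decompose the policy regret into a standard regret term against the underlying oblivious losses $\ell_t$ plus a switching-count term, and then bound each at $\cO(T^{2/3})$ by exploiting the laziness of Algorithm~\ref{alg:lazy_revealing_action_alg}. Since the adversary is oblivious with switching costs, $f_t(X_{t-1}, X_t) = \ell_t(X_t) + \mathbbm{1}\{X_t \neq X_{t-1}\}$ and $f_t(x,x) = \ell_t(x)$, so
\[
R_T \;=\; \left(\sum_{t=1}^T \ell_t(X_t) - \min_{x \in \cX}\sum_{t=1}^T \ell_t(x)\right) + \sum_{t=2}^T \mathbbm{1}\{X_t \neq X_{t-1}\}.
\]
The first parenthesized quantity is the standard regret $R_T^{\text{std}}$ against the oblivious sequence $\ell_{1:T}$; call the second sum $S_T$.

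First I would bound $\bbE[R_T^{\text{std}}]$ by adapting the analysis of Lemma~\ref{LemRTLazy} to the revealing action setting. Algorithm~\ref{alg:lazy_revealing_action_alg} issues a query (plays the revealing action $r$) in each round independently with probability $\epsilon = m/T$, so the expected number of queries is $m$. On non-query rounds the exponential-weights potential argument underlying Lemma~\ref{LemRTLazy} yields a regret bound of order $T\sqrt{2\ln N / m}$ against $\ell_{1:T}$; on query rounds the player incurs $\ell_t(r)\in[0,1]$, contributing at most $1$ per query and hence at most $m$ in expectation overall. This gives $\bbE[R_T^{\text{std}}] \leq T\sqrt{2\ln N / m} + m$.

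Next I would bound $\bbE[S_T]$ using the lazy update rule: the algorithm's recommended non-query action is refreshed only when fresh loss information arrives, which happens only on query rounds. Hence every switch in the action sequence is either a transition into the revealing action $r$ or a transition out of it, so $S_T \leq 2 \cdot \#\{\text{query rounds}\}$, and $\bbE[S_T] \leq 2m$. Combining,
\[
\bbE[R_T] \;\leq\; T\sqrt{\frac{2\ln N}{m}} + 3m,
\]
and taking $m = T^{2/3}$ balances the two terms and yields $\cO(T^{2/3}\sqrt{\ln N}) = \cO(T^{2/3})$, matching the claim of the theorem for the prescribed values $\epsilon = m/T$ and $\eta = \sqrt{2m\ln N}/T$.

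The main obstacle will be the standard-regret step: the classical label-efficient analysis of Cesa-Bianchi et al.~\cite{cesa2005minimizing} treats the query action as an information-gathering ``phantom'' with no loss to the player, whereas in the revealing action game $r$ is a real vertex of $G$ and the player actually pays $\ell_t(r)$ on each query round. I would need to extend the potential-function argument behind Lemma~\ref{LemRTLazy} to carry along this extra query-round loss and verify that the resulting $+m$ additive term is the only damage done. A closely related subtlety is ensuring that the precise lazy update rule used by Algorithm~\ref{alg:lazy_revealing_action_alg} (combining the lazy template with the revealing action adaptation of~\cite{cesa2006regret}) really does keep the recommendation fixed between queries, which is what licenses the bound $S_T \leq 2\,\#\{\text{queries}\}$.
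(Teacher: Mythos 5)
Your proposal is correct and follows essentially the same route as the paper's proof: decompose the policy regret into the standard regret of the lazy label-efficient forecaster (Lemma~\ref{LemRTLazy}), plus an additive $m$ for the loss $\ell_t(r)\le 1$ actually paid on query rounds, plus the expected number of switches, and then set $m=T^{2/3}$. The only difference is cosmetic: you bound the expected switch count by $2m$ (one switch into $r$ and one out per query), which is arguably the more careful accounting, whereas the paper states $m$; either way the total is $\cO(T^{2/3})$.
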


\begin{proof}
Notice that the difference between Algorithm~\ref{alg:lazy_revealing_action_alg} and Algorithm~\ref{alg:lazy_label_efficient} is that Algorithm~\ref{alg:lazy_revealing_action_alg} needs to actually choose the revealing action $r$ and suffer loss $\ell_t(r)$ at round $t$, whereas Algorithm~\ref{alg:lazy_label_efficient} chooses $X_t$ at round $t$, suffers $\ell_t(X_t)$, and possibly chooses to query all the losses. In the worst case, we have $\ell_t(X_t)<\ell_t(r)=1$, for all $t$.

Let $M_T$ denote the number of switches. Using Lemma~\ref{LemRTLazy}, we may bound the policy regret:
\[
\bbE[R_T]\leq T\sqrt{\frac{2\ln N}{m}} + m + M_T.
\]
Note that
\begin{equation*}
M_T\leq\sum_{t=1}^TP(Z_t=1)=\epsilon T=m.
\end{equation*}
Setting \mbox{$m=T^{2/3}$,} we obtain an upper bound of $\cO(T^{2/3})$.
\end{proof}

\begin{algorithm}[]
    \caption{Lazy Revealing Action Algorithm}
    \begin{algorithmic}[1]
    \renewcommand{\algorithmicrequire}{\textbf{Input:}}
    \renewcommand{\algorithmicensure}{\textbf{Output:}}
    \REQUIRE Time horizon $T$, $0\leq\epsilon\leq 1$, learning rate $\eta>0$, feedback graph $G=(V, E)$ that characterizes a revealing action game, a revealing action $r$.
    \ENSURE Sequence of actions $X_{1:T}$
    \STATE Set $w_{1,0},\ldots,w_{|V|,0}=1$, $Z_0=1$
    \FOR {round $t=1,2\ldots$}
        \IF {$(Z_{t-1}=1)$}
            \STATE Draw action $J_t$ from $V$ according to the distribution $p_{i,t}=\frac{w_{i,t-1}}{\sum_{j\in V}w_{j,t-1}}$, $\forall i\in V$
        \ELSE 
            \STATE Set $J_t=X_{t-1}$
        \ENDIF
        \STATE Draw $Z_t\sim\text{Bernoulli}(\epsilon)$
        \IF {$(Z_t=1)$}
            \STATE Play the revealing action; i.e., $X_t=r$. Observe $\ell_t(i)$, $\forall i\in V$, and compute $w_{i,t}=w_{i, t-1}e^{-\eta\ell_t(i)/\epsilon}$.
        \ELSE 
            \STATE Play $X_t=J_t$, and set $w_{i,t}=w_{i,t-1}$ for each $i\in V$
        \ENDIF
    \ENDFOR
    \RETURN $X_{1:T}$
    \end{algorithmic} 
    \label{alg:lazy_revealing_action_alg}
\end{algorithm}

\section{Discussion}
\label{SecDiscussion}

In this paper, we have studied the policy regret of online learning problems with various types of graph-structured feedback. We have shown that when the adversary is allowed to be nonoblivious, the sharp characterization of minimax regret in terms of strong or weak observability becomes somewhat more complicated than in the oblivious case. In particular, we have shown that a mini-batched version of the Exp3.G algorithm leads to $\widetilde\cO(T^{2/3})$ regret in the strongly-observable case and $\widetilde\cO(T^{3/4})$ regret in the weakly-observable case when the adversary has bounded memory, but strongly-observable feedback graphs exist with minimax regret of both $\Theta(T^{1/2})$ and $\widetilde\Theta(T^{2/3})$, for the class of adversaries with switching costs. We have also established a strategy that achieves $\cO(T^{2/3})$ regret for certain weakly-observable graphs with switching costs, leaving open the possibility of weakly-observable graphs being subdivided into various hardness classes, as well.

Existing results~\cite{piccolboni2001discrete}, \cite{bartok2010toward}, \cite{antos2013toward}, and \cite{bartok2014partial} show that online learning games against oblivious opponents fall into one of four categories: trivial games with $0$ regret, easy games with $\widetilde\Theta(\sqrt{T})$ regret, hard games with $\widetilde\Theta(T^{2/3})$ regret, and hopeless games with $\Omega(T)$ regret. 
This does not rule out the possibility that the $\widetilde\cO(T^{3/4})$ bound is tight for certain classes of weakly-observable graphs, since we are studying the fundamentally different setting of nonoblivious adversaries. An important open question is to determine whether feedback graphs actually exist that produce a $\Omega(T^{3/4})$ lower bound.

Another open question is to characterize the minimax regret for online learning problems with revealing strongly-observable graphs for switching cost adversaries. Our current results contain a gap between the $\widetilde\cO(T^{2/3})$ upper bound and $\widetilde\Omega(T^{1/2})$ lower bound. For full-information games with switching costs, the \emph{Follow the Lazy Leader} (FLL) algorithm~\cite{kalai2005efficient} and \emph{Shrinking Dartboard} (SD) algorithm~\cite{geulen2010regret} are known to achieve $\cO(\sqrt{T})$ regret, and it may be possible to extend such strategies to other strongly-observable games, as well.

%%%%%%
%% To balance the columns at the last page of the paper use this
%% command:
%%
\enlargethispage{-0.3cm} 
%%
%% If the balancing should occur in the middle of the references, use
%% the following trigger:
%%
%\IEEEtriggeratref{3}
%%
%% which triggers a \newpage (i.e., new column) just before the given
%% reference number. Note that you need to adapt this if you modify
%% the paper.  The "triggered" command can be changed if desired:
%%
%\IEEEtriggercmd{\enlargethispage{-20cm}}
%%
%%%%%%

\bibliographystyle{IEEEtran}
\bibliography{citation}

%%%%%

\newpage

\appendix

\section{Visual summary of contributions}
\label{AppSummary}

\begin{table}[H]
  % increase table row spacing, adjust to taste
  \renewcommand{\arraystretch}{1}
  \caption{Summary of upper and lower bounds of online learning problems
  %with different feedback settings and classes of adversaries.
  Our contributions are highlighted in boldface.}
  \label{tab:sota_summary}
  \centering
  % Some packages, such as MDW tools, offer better commands for making tables
  % than the plain LaTeX2e tabular which is used here.
  \begin{tabular}{|c||c|c|c|c|}
    \hline
      & oblivious & switching cost & memory of size 1 & bounded memory \\
    \hline\hline
    \multicolumn{5}{|c|}{Full-information feedback}\\
    \hline
    $\widetilde\cO$ & $\sqrt{T}$ & $\sqrt{T}$ & $T^{2/3}$ & $T^{2/3}$\\
    \hline
    $\Omega$ & $\sqrt{T}$ & $\sqrt{T}$ & $\sqrt{T}\rightarrow\bm{T^{2/3}}$ & $T^{2/3}$\\
    \hline \hline
    \multicolumn{5}{|c|}{Bandit feedback}\\
    \hline
    $\widetilde\cO$ & $\sqrt{T}$ & $T^{2/3}$ & $T^{2/3}$ & $T^{2/3}$\\
    \hline
    $\Omega$ & $\sqrt{T}$ & $T^{2/3}$ & $T^{2/3}$ & $T^{2/3}$\\
    \hline \hline
    \multicolumn{5}{|c|}{Non-revealing strongly-observable feedback graph}\\
    \hline
    $\widetilde\cO$ & $\sqrt{T}$ & $\bm{T^{2/3}}$ & $\bm{T^{2/3}}$ & $\bm{T^{2/3}}$\\
    \hline
    $\Omega$ & $\sqrt{T}$ & $\bm{T^{2/3}}$ & $\bm{T^{2/3}}$ & $\bm{T^{2/3}}$\\
    \hline\hline
    \multicolumn{5}{|c|}{Revealing strongly-observable feedback graph}\\
    \hline
    $\widetilde\cO$ & $\sqrt{T}$ & $\bm{T^{2/3}}$ & $\bm{T^{2/3}}$ & $\bm{T^{2/3}}$\\
    \hline
    $\Omega$ & $\sqrt{T}$ & $\sqrt{T}$ & $\bm{T^{2/3}}$ & $\bm{T^{2/3}}$\\
    \hline\hline
    \multicolumn{5}{|c|}{Revealing action feedback graph}\\
    \hline
    $\widetilde\cO$ & $T^{2/3}$ & $\bm{T^{2/3}}$ & $\bm{T^{3/4}}$ & $\bm{T^{3/4}}$\\
    \hline
    $\Omega$ & $T^{2/3}$ & $T^{2/3}$ & $T^{2/3}$ & $T^{2/3}$\\
    \hline
    \multicolumn{5}{|c|}{Weakly-observable feedback graph}\\
    \hline
    $\widetilde\cO$ & $T^{2/3}$ & $\bm{T^{3/4}}$ & $\bm{T^{3/4}}$ & $\bm{T^{3/4}}$\\
    \hline
    $\Omega$ & $T^{2/3}$ & $T^{2/3}$ & $T^{2/3}$ & $T^{2/3}$\\
    \hline
  \end{tabular}
\end{table}

\begin{figure}[H]
\centering
\subfloat[Difficulty of games for oblivious adversaries~\cite{bartok2014partial}.
%Some examples. The gap between $0$ and $1/2$ is proven by \cite{antos2013toward}, the gap between $1/2$ and $2/3$ is proven by \cite{bartok2014partial}, and the gap between $2/3$ and $1$ is proven by \cite{piccolboni2001discrete}.
]
{\includegraphics[width=0.6\linewidth]{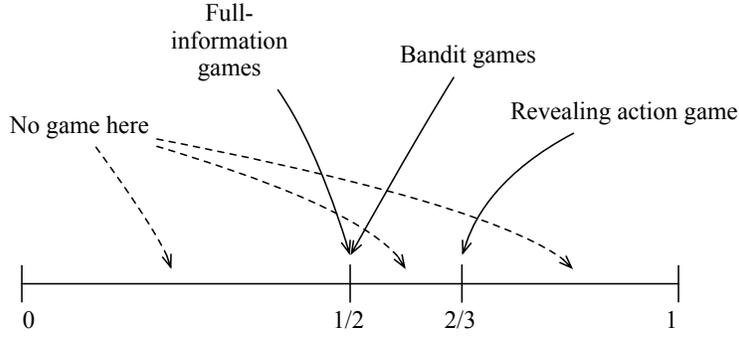}
\label{fig:example_games}}

\subfloat[Difficulty of games for feedback graphs and oblivious adversaries.]{\includegraphics[width=0.6\linewidth]{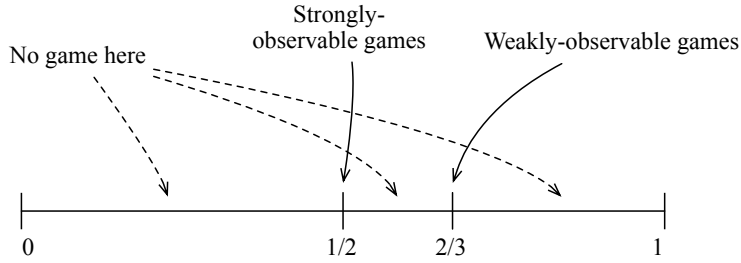}
\label{fig:graph_games}}

\subfloat[Difficulty of games for feedback graphs and adaptive adversaries. Our contributions are highlighted in boldface. An open question is whether games exist between $1/2$ and $2/3$, or between $2/3$ and $1$. We also show that Exp3.G achieves $\widetilde\cO(T^{3/4})$ regret for weakly-observable games. ]{\includegraphics[width=0.6\linewidth]{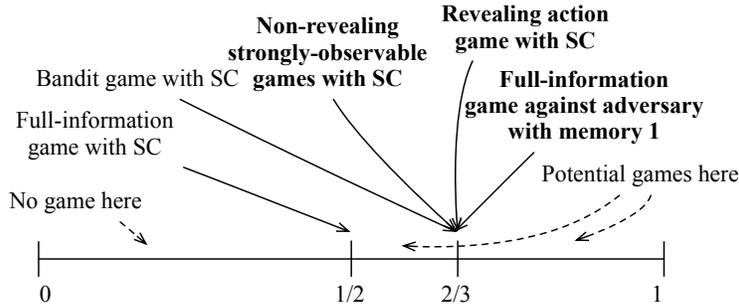}
\label{fig:adversarial_games}}

\caption{Diagrams showing the relative difficulty of games. The horizontal axis represents how the minimax standard/policy regret depends on $T$.}
\label{fig:game_difficulty}
\end{figure}
\section{Forecasting algorithms}
\label{AppAlgorithms}
In this Appendix, we provide standard learning algorithms that we build upon in our analysis.

\begin{algorithm}[H]
    \caption{Exp3.G}
    \begin{algorithmic}[1]
    \renewcommand{\algorithmicrequire}{\textbf{Input:}}
    \renewcommand{\algorithmicensure}{\textbf{Output:}}
    \REQUIRE Feedback graph $G=(V,E)$, learning rate $\eta>0$, exploration set $U\subseteq V$, exploration rate $\gamma\in[0,1]$
    \ENSURE A sequence of actions $X_1, X_2,\ldots$
    \FOR {round $t=1,2\ldots$}
        \STATE Compute $p_t=(1-\gamma)q_t+\gamma u$, where $u$ is the uniform distribution on $U$
        \STATE Draw $X_t\sim p_t$, play $X_t$ and incur loss $\ell_t(X_t)$
        \STATE Observe $\{(i, \ell_t(i)):i\in \No(X_t)\}$
        \STATE Update:
            \begin{align*}
                \begin{split}    
                &\forall i\in V,\ \ \widehat \ell_t(i)=\frac{\ell_t(i)}{P_t(i)}\mathbbm{1}_{\{i\in \No(X_t)\}}, \\
                &\quad\quad\quad\quad \text{with}\ P_t(i)=\sum_{j\in \Ni(i)}p_t(j)
                \\
                &\forall i\in V,\ \ q_{t+1}(i)=\frac{q_t(i)\exp(-\eta\widehat \ell_t(i))}{\sum_{j\in V}q_t(j)\exp(-\eta\widehat \ell_t(j))}
                \end{split}
            \end{align*}
    \ENDFOR
    \RETURN $X_1, X_2,\ldots$
    \end{algorithmic} 
    \label{alg:exp3g}
\end{algorithm}

\begin{algorithm}[H]
    \caption{Lazy Forecaster for Label-Efficient Problem}
    \begin{algorithmic}[1]
    \renewcommand{\algorithmicrequire}{\textbf{Input:}}
    \renewcommand{\algorithmicensure}{\textbf{Output:}}
    \REQUIRE $0\leq\epsilon\leq 1$, learning rate $\eta>0$, actions $1, 2, \ldots, N$.
    \ENSURE Sequence of actions $X_1, X_2,\ldots$
    \STATE Set $w_{1, 0},\ldots, w_{N, 0}=1$, $Z_0=1$
    \FOR {round $t=1,2\ldots$}
        \IF {($Z_{t-1}=1)$}
            \STATE Draw action $X_t$ from $1,\ldots, N$ according to the distribution
            \[
                 \forall i=1,\ldots, N,\quad p_{i,t}=\frac{w_{i,t-1}}{\sum_{j=1,\ldots, N}w_{j,t-1}}
            \]
        \ELSE 
            \STATE Set $X_t=X_{t-1}$
        \ENDIF
        \STATE Draw $Z_t\sim\text{Bernoulli}(\epsilon)$
        \IF {$(Z_t=1)$}
            \STATE Choose to observe the losses $\ell_t(i)$, $\forall i=1,\ldots,N$, and compute
            \[
                 \forall i=1,\ldots, N,\quad w_{i,t}=w_{i, t-1}e^{-\eta\ell_t(i)/\epsilon}
            \]
        \ELSE 
            \STATE Set $w_{i,t}=w_{i,t-1}$ for each $i=1,\ldots, N$
        \ENDIF
    \ENDFOR
    \RETURN $X_1, X_2,\ldots$
    \end{algorithmic} 
    \label{alg:lazy_label_efficient}
\end{algorithm}

\section{Supporting proofs for Section~\ref{sec:upper_bound}}
\label{AppUpper}

In this Appendix, we provide proofs of the auxiliary results involved in proving our upper bounds.

\subsection{Results from existing literature}
\label{AppUpperPrelim}

We begin by stating several lemmas appearing in the literature.

\begin{lemma}\label{thm:policyregretreduction} [Arora et al.~\cite{arora2012online}]
    Let $\mathcal{A}$ be an algorithm with standard
    regret against any sequence
    of $J$ loss functions generated by an adaptive
    adversary bounded by a monotonic function
    $R(J)$. Let $\mathcal{A}_\tau$ be the
    mini-batched version of $\mathcal{A}$ with batch size $\tau$. Let $(f_t)_{t\in[T]}$ be a sequence of loss functions generated by an adversary with memory of size $m$, let $X_1, \ldots, X_T$ be the sequence of
    actions played by $\mathcal{A}_\tau$ against this sequence, and let $y$
    be any action in the action space $\cX$ . If $\tau>m$, the policy regret of $\mathcal{A}_\tau$,
    compared to the constant action $y$, is bounded by
    \[
    \bbE\left[ \sum_{t=1}^Tf_t(X_{1,\ldots, t}) - f_t(\by^t) \right]\leq \tau R\left(\frac{T}{\tau}\right)+\frac{Tm}{\tau}+\tau.
    \]
Specifically, if $R(J)=CJ^q+o(J^q)$ for some $C>0$ and $q\in(0,1)$, and $\tau = C^{\frac{-1}{2-q}}T^{\frac{1-q}{2-q}}$, we have
    \[
    \bbE\left[ \sum_{t=1}^Tf_t(X_{1,\ldots, t}) - f_t(\by^t) \right]\leq C'T^{\frac{1}{2-q}}+o(T^{\frac{1}{2-q}}),
    \]
    where $C'=(m+1)C^{\frac{1}{2-q}}$.
\end{lemma}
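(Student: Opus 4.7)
The plan is to reduce the policy regret of $\mathcal{A}_\tau$ to the adaptive standard regret of $\mathcal{A}$ on a suitably chosen sequence of ``batched'' loss functions, paying a correction term for adversary memory that bleeds across batch boundaries. Partition the horizon into $J=\lfloor T/\tau\rfloor$ batches of length $\tau$ (plus a leftover of length at most $\tau$), and let $Z_j$ denote the action chosen by $\mathcal{A}$ for batch $j$. Define the batched loss
\[
\tilde f_j(Z_1,\ldots,Z_j)=\frac{1}{\tau}\sum_{t=(j-1)\tau+1}^{j\tau}f_t(X_{1:t}),
\]
where $X_{1:j\tau}$ is determined by $Z_{1:j}$ (each $X_s$ in batch $i$ equals $Z_i$). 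Each $\tilde f_j$ is a genuine adaptive loss depending only on $Z_{1:j}$, so applying $\mathcal{A}$'s adaptive standard regret bound against the sequence $\tilde f_{1:J}$ gives, for any fixed comparator $y\in\cX$,
\[
\bbE\left[\sum_{j=1}^{J}\tilde f_j(Z_{1:j})-\sum_{j=1}^{J}\tilde f_j(Z_{1:j-1},y)\right]\leq R(J).
\]

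Next, I would exploit the memory bound to relate the standard-regret comparator $\sum_j\tilde f_j(Z_{1:j-1},y)$ to the policy-regret comparator $\sum_j\tilde f_j(y^j)$. Inside batch $j$, for any $t\geq(j-1)\tau+m+1$ the last $m+1$ actions in both the $(Z_{1:j-1},y)$ and $y^j$ scenarios are all equal to $y$, so $f_t$ depends only on $y$ and coincides. The first $m$ rounds of the batch may differ, but since losses lie in $[0,1]$, the per-batch discrepancy is deterministically bounded:
\[
\bigl|\tau\tilde f_j(Z_{1:j-1},y)-\tau\tilde f_j(y^j)\bigr|\leq m.
\]
Summing over $j$ gives a total discrepancy of at most $Jm\leq Tm/\tau$.

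Multiplying the first display by $\tau$ converts batched losses to actual losses, since $\tau\sum_j\tilde f_j(Z_{1:j})=\sum_{t=1}^{J\tau}f_t(X_{1:t})$ and $\tau\sum_j\tilde f_j(y^j)=\sum_{t=1}^{J\tau}f_t(y^t)$; combining with the per-batch discrepancy bound yields $\bbE[\sum_{t=1}^{J\tau}(f_t(X_{1:t})-f_t(y^t))]\leq \tau R(T/\tau)+Tm/\tau$, and the remaining $T-J\tau<\tau$ rounds contribute at most $\tau$ trivially. For the parametric rate, plugging in $R(J)=CJ^q$ and balancing $\tau R(T/\tau)=CT^q\tau^{1-q}$ against $Tm/\tau$ yields the stated choice $\tau\asymp T^{(1-q)/(2-q)}$, at which both dominant terms scale as $T^{1/(2-q)}$ while the residual $\tau$ is of strictly lower order. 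The main subtlety is verifying that $\tilde f_{1:J}$ is a legitimate adaptive loss sequence for $\mathcal{A}$: once $Z_{1:j}$ is fixed, the prefix $X_{1:j\tau}$ and hence each $f_t(X_{1:t})$ is determined, so $\tilde f_j$ is a deterministic function of $Z_{1:j}$ and the original $f_{1:T}$.
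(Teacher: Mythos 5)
Your proof is correct; the paper itself imports this lemma from Arora et al.\ without reproving it, and your argument is a faithful reconstruction of the standard mini-batching reduction: define the averaged batch losses $\tilde f_j$ as an adaptive sequence for $\mathcal{A}$, invoke its adaptive standard-regret bound, charge at most $m$ per batch for the first $m$ rounds where memory bleeds across the batch boundary (which is exactly where $\tau>m$ is needed), and absorb the final partial batch into the additive $\tau$. The only slight looseness is the remark that ``balancing'' the two dominant terms yields the stated $\tau$ (the exact balance point carries an extra $m^{1/(2-q)}$ factor), but the stated choice still gives both terms of order $C^{1/(2-q)}T^{1/(2-q)}$ and the constant $C'=(m+1)C^{1/(2-q)}$ as claimed.
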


\begin{lemma}\label{lma:orig_sec_order_bound} [Alon et al.~\cite{alon2015online}]
Consider a sequence of loss functions $\{\ell_t\}$ satisfying $\ell_t(i) \geq 0$ for all $t=1,\ldots, T$ and $i\in V$. Let $q_1, \ldots, q_T$ be the probability vectors defined as follows:
\[
q_t(i)=\frac{\exp{\left(-\eta\sum_{s=1}^{t-1}\ell_s(i)\right)}}{\sum_{j\in V}\exp{\left(-\eta\sum_{s=1}^{t-1}\ell_s(j)\right)}}, \quad \forall i\in V,
\]
where $\eta$ is a learning rate. For each $t$, let $S_t$ be a subset of $V$ such that $\ell_t(i)\leq \frac{1}{\eta}$ for all $i\in S_t$. Then for any $i^*\in V$, we have
\begin{align*}
\begin{split}
    &\sum_{t=1}^{T}\sum_{i\in V}q_t(i)\ell_t(i)-\sum_{t=1}^T\ell_t(i^*)\leq \frac{\ln K}{\eta} +\eta \sum_{t=1}^T\left( \sum_{i\in S_t}q_t(i)(1-q_t(i))\ell_t(i)^2 +\sum_{i\not\in S_t}q_t(i)\ell_t(i)^2 \right).
\end{split}
\end{align*}
\end{lemma}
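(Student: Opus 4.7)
The plan is to carry out the classical exponential-weights (Hedge) potential analysis, with the crux being a sharper second-order estimate on the one-step log-ratio that produces the $(1-q_t(i))$ factor on $S_t$. First I would set up the potential $W_t := \sum_{i\in V}\exp(-\eta L_{t-1}(i))$, where $L_{t-1}(i) := \sum_{s<t}\ell_s(i)$ and $L_0\equiv 0$, so that $W_1 = K$, $q_t(i) = \exp(-\eta L_{t-1}(i))/W_t$, and $W_{t+1}/W_t = \sum_i q_t(i) e^{-\eta \ell_t(i)}$. The trivial bound $W_{T+1}\geq e^{-\eta L_T(i^*)}$ yields the telescoping lower estimate $\ln(W_{T+1}/W_1)\geq -\eta\sum_t \ell_t(i^*)-\ln K$ for any comparator $i^*$.

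The crux is to establish the one-step upper bound
\[
\ln\sum_i q_t(i) e^{-\eta \ell_t(i)} \leq -\eta\sum_i q_t(i)\ell_t(i) +\eta^2\Bigg(\sum_{i\in S_t}q_t(i)(1-q_t(i))\ell_t(i)^2 + \sum_{i\notin S_t}q_t(i)\ell_t(i)^2\Bigg),
\]
after which $\ln(1+x)\leq x$ converts the multiplicative estimate into the additive one. I would split the sum at $S_t$ and handle the two regimes differently. For $i\notin S_t$ I would apply the elementary inequality $e^{-x}\leq 1-x+x^2$, valid for all $x\geq 0$ (verified by checking that $g(x) := 1-x+x^2-e^{-x}$ satisfies $g(0)=g'(0)=0$ and $g''(x)=2-e^{-x}\geq 1$ on $[0,\infty)$), which produces the $\eta^2 q_t(i)\ell_t(i)^2$ contribution. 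For $i\in S_t$, where $\eta\ell_t(i)\leq 1$, the $(1-q_t(i))$ factor must come from aggregating the terms rather than from a pointwise Taylor bound: I would bound the cumulant generating function of $\ell_t\cdot\mathbbm{1}_{S_t}$ under $q_t$ via a Bennett/Bernstein-style inequality to produce a variance term, and then invoke the algebraic relaxation $\mathrm{Var}_{q_t}(\ell_t\cdot\mathbbm{1}_{S_t}) \leq \sum_{i\in S_t}q_t(i)(1-q_t(i))\ell_t(i)^2$, which follows because $\bigl(\sum_i q_t(i)\ell_t(i)\bigr)^2 \geq \sum_i q_t(i)^2 \ell_t(i)^2$ whenever $\ell_t\geq 0$ (expand the square; the cross terms are non-negative).

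Telescoping this one-step inequality over $t=1,\ldots,T$, combining with the lower bound on $\ln(W_{T+1}/W_1)$, and dividing through by $\eta$ yields exactly the stated regret inequality. The main obstacle is recovering the sharp $(1-q_t(i))$ factor on $S_t$: the naive pointwise Taylor bound $e^{-\eta\ell_t(i)}\leq 1-\eta\ell_t(i)+(\eta\ell_t(i))^2/2$ only gives $q_t(i)\ell_t(i)^2/2$, so one must keep the restricted sum intact, invoke a variance-type cumulant inequality, and only then relax the variance to the closed-form $q_t(i)(1-q_t(i))$ weights via the non-negativity of the losses.
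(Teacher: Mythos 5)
The paper does not actually prove this lemma---it is imported verbatim from Alon et al.\ (their second-order bound for exponential weights) and used as a black box---so there is no in-paper proof to compare against; I am judging your argument on its own terms. Your skeleton is the standard and correct one: the potential $W_t$, the telescoping lower bound $\ln(W_{T+1}/W_1)\geq -\eta\sum_t\ell_t(i^*)-\ln K$, the pointwise bound $e^{-x}\leq 1-x+x^2$ for $x\geq 0$ off $S_t$, and the algebraic relaxation $\mathrm{Var}_{q_t}(\ell_t\mathbbm{1}_{S_t})\leq\sum_{i\in S_t}q_t(i)(1-q_t(i))\ell_t(i)^2$ (which is correct: $(\sum_i q_t(i)\ell_t(i))^2\geq\sum_i q_t(i)^2\ell_t(i)^2$ for non-negative losses). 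You also correctly diagnose that the $(1-q_t(i))$ factor cannot come from a pointwise Taylor bound and must come from the concavity of the logarithm, i.e.\ from a variance/cumulant term.

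The gap is in the step you label as the crux but do not execute: combining a Bennett-type bound on $\ln\bbE_{q_t}[e^{-\eta\ell_t\mathbbm{1}_{S_t}}]$ with pointwise bounds on $e^{-\eta\ell_t(i)}$ for $i\notin S_t$. These two estimates live at incompatible levels---one is a bound on the logarithm of an expectation over the \emph{full} support, the other sits inside the sum---and $\ln$ does not distribute over $\sum_{i\in S_t}+\sum_{i\notin S_t}$. If you write $e^{-\eta\ell_t}=e^{-\eta\ell_t\mathbbm{1}_{S_t}}+(e^{-\eta\ell_t}-1)\mathbbm{1}_{S_t^c}$ and then bound $\ln\bigl(e^{u}-\beta\bigr)$ with $u=-\eta\bbE[\ell_t\mathbbm{1}_{S_t}]+c\eta^2\mathrm{Var}$, expanding the outer exponential produces a cross term of order $\eta^2\bigl(\sum_{i\in S_t}q_t(i)\ell_t(i)\bigr)^2$ (or worse, a term involving the unbounded losses off $S_t$), and the resulting constants exceed the budget $\eta^2\sum_{i\in S_t}q_t(i)(1-q_t(i))\ell_t(i)^2$ precisely when some $q_t(i)$ is close to $1$---which is the regime the $(1-q_t(i))$ factor exists to handle. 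Conditioning on $S_t$ instead does not help, since the conditional variance carries weights $q_t(i)/\sum_{j\in S_t}q_t(j)$ rather than $q_t(i)$. The standard route keeps the full sum intact, writes $\ln\sum_i q_t(i)e^{-\eta\ell_t(i)}=\ln(1-z)$ with $z=\sum_i q_t(i)(1-e^{-\eta\ell_t(i)})$, and extracts the squared term $-z^2/2$ before splitting at $S_t$; your proposal needs either that reorganization or an explicit verification that your splitting controls the cross terms, and as written it does not provide one.
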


\begin{lemma}\label{graphlma} [Alon et al.~\cite{alon2015online}]
    Let $G=(V,E)$ be a directed graph in which each node $i\in V$ is assigned a positive weight $w_i$. Assume that $\sum_{i\in V}w_i\leq 1$ and $\min_{i \in V} w_i\geq \epsilon$ for some constant $0<\epsilon<\frac{1}{2}$. Then
    \[
    \sum_{i\in V}\frac{w_i}{w_i+\sum_{j\in \Ni(i)}w_j}\leq 4\alpha\ln\frac{4|V|}{\alpha\epsilon},
    \]
    where $\alpha$ is the independence number of $G$.
\end{lemma}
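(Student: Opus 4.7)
The plan is to prove the bound by dyadic decomposition of the vertex set according to the closed in-neighborhood mass $M_i := w_i + \sum_{j \in \Ni(i)} w_j$, combined with a greedy extraction of an independent set in each bucket. First I would observe that because $\epsilon \leq w_i \leq M_i \leq \sum_{j \in V} w_j \leq 1$, only $K = O(\log(1/\epsilon))$ dyadic buckets $V_k = \{i : 2^{-k} < M_i \leq 2^{-k+1}\}$ are nonempty, and each summand satisfies $w_i/M_i \leq w_i \cdot 2^k$ on $V_k$. So the sum splits as
\[
\sum_{i \in V} \frac{w_i}{M_i} \;\leq\; \sum_{k=1}^{K} 2^k \sum_{i \in V_k} w_i,
\]
and the task reduces to bounding $\sum_{i \in V_k} w_i$ by roughly $\alpha \cdot 2^{-k}$.

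For the bucket-level bound I would extract a maximal independent set $I_k$ in the undirected skeleton of the induced subgraph $G[V_k]$, which gives $|I_k| \leq \alpha(G[V_k]) \leq \alpha$. Then I would assign each remaining $i \in V_k \setminus I_k$ to a neighbor $\pi(i) \in I_k$ to which $i$ has an \emph{outgoing} edge, so that $w_i$ appears in $M_{\pi(i)}$. Summing,
\[
\sum_{i \in V_k} w_i \;\leq\; \sum_{j \in I_k} \biggl(w_j + \sum_{i : \pi(i) = j} w_i\biggr) \;\leq\; \sum_{j \in I_k} M_j \;\leq\; 2\alpha \cdot 2^{-k}.
\]
Substituting back gives $\sum_{i} w_i/M_i \leq 2\alpha K = O(\alpha \log(1/\epsilon))$, which matches the claimed form up to the logarithmic factors involving $|V|$ and $\alpha$ that I discuss below.

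The main obstacle is the assignment step: a maximal undirected independent set only guarantees that every outside vertex $i \in V_k \setminus I_k$ is adjacent to some $j \in I_k$ in \emph{some} direction, not necessarily via an outgoing edge from $i$. If $i$ has only incoming edges from $I_k$, then $w_i$ does not contribute to any $M_j$ with $j \in I_k$, and the clean bound above fails. To resolve this I would either (i) split each bucket sum according to the direction of the witnessing edge and bound the two pieces by separate independent-set arguments, absorbing the doubling into the constant $4$ of the stated bound, or (ii) run a tailored greedy selection in order of increasing $M_i$ so that outgoing witnesses are automatically preserved. A separate preliminary step peels off vertices with atypically large $M_i$ (those with $M_i \geq \alpha/|V|$ or so), whose direct contribution is already $O(\alpha)$; this refinement is what turns the naive $\ln(1/\epsilon)$ into the sharper $\ln(4|V|/(\alpha\epsilon))$ appearing in the statement.
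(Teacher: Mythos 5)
You identified the weak point yourself, but it is not a repairable technicality: the assignment step \emph{is} the content of the lemma, and as written your per-bucket chain is false. (Note also that the paper you are working from offers no proof of this statement at all---it is imported verbatim from Alon et al.---so your attempt has to stand on its own.) Concretely, take the directed $3$-cycle $1\to 2\to 3\to 1$ with $w_1=w_2=w_3=1/3$. Then $M_i=2/3$ for every $i$, so all vertices lie in a single bucket, $\alpha=1$, and every maximal independent set of the skeleton is a singleton $\{j\}$ with $M_j=2/3<1=\sum_{i}w_i$; hence the middle inequality in your display, $\sum_{i\in V_k}w_i\le\sum_{j\in I_k}M_j$, fails for \emph{every} choice of maximal independent set. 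This example also disposes of both proposed fixes: a directed cycle has no kernel, i.e.\ no independent set such that every outside vertex has an \emph{outgoing} edge into it, so no cleverer selection can make all witnesses outgoing; and since all $M_i$ are equal, processing vertices in order of increasing $M_i$ (your fix (ii)) changes nothing---that ordering never controls edge directions. Fix (i) merely names the missing piece: for a vertex $i$ whose only edges to $I_k$ point from $I_k$ to $i$, the weight flows the wrong way ($w_j$ sits inside $M_i$, not conversely), the number of such $i$ attached to a given $j$ is not controlled by $\alpha$, and re-running your construction inside that set reproduces the identical obstruction, so no bound is actually given.

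The directionality problem you hit is exactly why the original argument of Alon et al.\ does not attempt a weighted charging to an independent set: it discretizes the weights and reduces to an unweighted Tur\'an-type inequality for in-degrees in directed graphs (of the form $\sum_i 1/(1+d_i^{\mathrm{in}})\le 2\alpha\ln(1+K/\alpha)$), proved by a greedy peeling of a minimum-in-degree vertex together with its in-neighborhood, where the logarithm absorbs precisely the fact that removing an in-neighborhood need not shrink the independence number. Two smaller remarks: your closing ``peeling off vertices with large $M_i$'' step is unnecessary even if the bucket claim held, since $4|V|/\alpha\ge 4$ means a bound of order $\alpha\log_2(1/\epsilon)$ with your constants would already imply the stated $4\alpha\ln\bigl(4|V|/(\alpha\epsilon)\bigr)$; and a naive repair of the bucket claim by further stratifying each bucket by weight produces an extra logarithmic factor, i.e.\ a bound of order $\alpha\log^2(1/\epsilon)$, which is weaker than the lemma. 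So the dyadic skeleton is fine, but the heart of the proof is still missing.
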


\subsection{Proof of Theorem~\ref{thm:exp3gupperbound}}
\label{AppThmExp3}

The proof of Theorem~\ref{thm:exp3gupperbound} is based on the proof of Theorem 2 in Alon et al.~\cite{alon2015online}. In this proof, we will use a semicolon to separate fixed parameters from variables. For example, $\ell_t(i_{1:t-1}; i)$ denotes the loss function at the $t^{\text{th}}$ iteration, with $t-1$ fixed player's actions.
%$i_{1}\ldots i_{t-1}$, and $i_t$ is player's action at round $t$. $i_{1}\ldots i_{t-1}$ and $i_t$ are all the inputs to the function, so $\ell_t$ still takes $t$ parameters.

We first derive a lemma, a variant of Lemma~\ref{lma:orig_sec_order_bound}.

\begin{lemma}\label{modsecorderbound}
Fix any set $\{i_1, \ldots, i_{T-1}\}$ of player actions. Consider a sequence of loss functions $\{\ell_t\}$ satisfying $\ell_t(i_{1:t-1}; i) \geq 0$ for all $t=1,\ldots, T$ and $i\in V$, and let $\{q_t\}$ be defined by
    \begin{align}\label{eq:new_qt}
        q_t(i)=\frac{\exp{\left(-\eta\sum_{s=1}^{t-1}\ell_s(i_{1:s-1}; i)\right)}}{\sum_{j\in V}\exp{\left(-\eta\sum_{s=1}^{t-1}\ell_s(i_{1:s-1};j)\right)}}, \quad \forall i \in V,
    \end{align}
    for a learning rate $\eta$. For each $t$, let $S_t$ be a subset of $V$ such that $\ell_t(i_{1:t-1}; i)\leq \frac{1}{\eta}$ for all $i\in S_t$. Then for any $i^*\in V$, we have
\begin{align}
\label{EqnBeforeExp}
    \begin{split}
        &\sum_{t=1}^{T}\sum_{i\in V}q_t(i)\ell_t(i_{1:t-1};i)-\sum_{t=1}^T\ell_t(i_{1:t-1};i^*) \\
        & \qquad \leq\frac{\ln K}{\eta} +\eta \sum_{t=1}^T\Bigg( \sum_{i\in S_t}q_t(i)(1-q_t(i))\ell_t(i_{1:t-1};i)^2 +\sum_{i\not\in S_t}q_t(i)\ell_t(i_{1:t-1};i)^2 \Bigg).
    \end{split}
    \end{align}
\end{lemma}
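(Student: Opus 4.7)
The plan is to reduce the adaptive statement to the already-known oblivious statement of Lemma~\ref{lma:orig_sec_order_bound} by freezing the trajectory $i_{1:T-1}$. Since the sequence $\{i_1,\ldots,i_{T-1}\}$ is pinned down by hypothesis, I would introduce the auxiliary oblivious loss functions
\[
\tilde{\ell}_t(i) \;\overset{\Delta}{=}\; \ell_t(i_{1:t-1}; i), \qquad t\in[T],\ i\in V,
\]
each of which depends only on the single argument $i$. The nonnegativity hypothesis $\ell_t(i_{1:t-1}; i)\ge 0$ transfers to $\tilde{\ell}_t(i)\ge 0$, and the sets $S_t$ from the hypothesis become exactly subsets of $V$ on which $\tilde{\ell}_t(i)\le 1/\eta$, i.e.\ the oblivious hypothesis of Lemma~\ref{lma:orig_sec_order_bound}.

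Next I would verify that the distributions $q_t$ in~\eqref{eq:new_qt} coincide with the standard exponentially-weighted distributions for the sequence $\tilde{\ell}_{1:t-1}$: by direct substitution,
\[
q_t(i) \;=\; \frac{\exp\!\left(-\eta \sum_{s=1}^{t-1} \tilde{\ell}_s(i)\right)}{\sum_{j\in V}\exp\!\left(-\eta \sum_{s=1}^{t-1}\tilde{\ell}_s(j)\right)},
\]
which is precisely the form required. At this point Lemma~\ref{lma:orig_sec_order_bound} applied to the oblivious losses $\{\tilde{\ell}_t\}$ yields
\[
\sum_{t=1}^{T}\sum_{i\in V}q_t(i)\tilde{\ell}_t(i) - \sum_{t=1}^{T}\tilde{\ell}_t(i^*) \;\le\; \frac{\ln K}{\eta} + \eta \sum_{t=1}^{T}\!\left(\sum_{i\in S_t}q_t(i)(1-q_t(i))\tilde{\ell}_t(i)^2 + \sum_{i\notin S_t}q_t(i)\tilde{\ell}_t(i)^2\right),
\]
and substituting $\tilde{\ell}_t(i)=\ell_t(i_{1:t-1};i)$ on both sides recovers the desired bound~\eqref{EqnBeforeExp}.

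There is essentially no technical obstacle here: the content of the lemma is notational, recording that the standard second-order exponential-weights guarantee carries over verbatim once the losses are evaluated along a \emph{fixed} trajectory. The only point that warrants care is confirming that the trajectory being frozen in the hypothesis is the same trajectory appearing in the definition of $q_t$, so that the reduction to the oblivious lemma is legitimate. This lemma is exactly the form that will later allow us to integrate over the random trajectory actually produced by the algorithm against an adaptive adversary, in the proof of Theorem~\ref{thm:exp3gupperbound}.
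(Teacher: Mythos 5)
Your reduction is correct and is exactly the paper's own argument: freezing the trajectory $i_{1:T-1}$, defining the single-argument losses $\tilde{\ell}_t(i)=\ell_t(i_{1:t-1};i)$, and invoking Lemma~\ref{lma:orig_sec_order_bound} as a direct corollary. Your version just spells out the bookkeeping (nonnegativity, the sets $S_t$, and the form of $q_t$) a bit more explicitly, which is fine.
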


\begin{proof}
For any round $1\leq t\leq T$, consider $\ell_t(i_{1:t-1}; i)$. Since $i_{1:t-1}$ is fixed, we can represent $\ell_t(i_{1:t-1}; i)$ using the function $\ell_t'(i)$, where $\ell_t'$ takes only one parameter. Note that $\ell_t'$ satisfies the condition in Lemma~\ref{lma:orig_sec_order_bound}, so the desired result follows as a direct corollary of Lemma~\ref{lma:orig_sec_order_bound}.
\end{proof}

We first consider the case where $G$ is strongly-observable and the exploration distribution is uniform on $V$.
%Let $I_t$ be the random variable that represents which action the player may choose at round $t$.
%Let $q_t(i)$ in Algorithm~\ref{alg:exp3g} the same as the one defined in eq.~\eqref{eq:new_qt}, then $p_t(i)$ and $P_t(i)$ can be derived straightforwardly.
Note that with respect to the fixed loss sequence $i_{1:t-1}$, the quantity $\widehat\ell_t(i_{1:t-1}; i)$ is an unbiased estimator of $\ell_t(i_{1:t-1}; i)$, for any $t$ and $i\in V$:
\begin{align}\label{eq:first_moment_bound}
	\mathop{\bbE}_{X_{t} \sim p_t}\left[ \widehat\ell_t(i_{1:t-1}; i) \mid X_{1:t-1}=i_{1:t-1} \right] = \ell_t(i_{1:t-1}; i).
\end{align}
For the second moment, we have
\begin{align}\label{eq:second_moment_bound}
	\mathop{\bbE}_{X_{t} \sim p_t}\left[ \widehat\ell_t(i_{1:t-1}; i)^2 \mid X_{1:t-1}=i_{1:t-1} \right] = \frac{\ell_t(i_{1:t-1}; i)^2}{P_t(i)}.
\end{align}

Assume WLOG that $K\geq 2$. Let $S=\{i:i\not\in \Ni(i)\}$; i.e., the set of nodes without self-loops. Then $V\backslash\{i\}=\Ni(i)$ for all $i \in S$, so $P_t(i)=1-p_t(i)$. Furthermore, 
\begin{equation*}
p_t(i) = (1-\gamma)q_t(i) + \frac{\gamma}{K} \leq1-\gamma + \frac{\gamma}{2} = 1-\eta.
\end{equation*}
Thus,
\begin{equation*}
\widehat{\ell}_t(i) \le \frac{\ell_t(i)}{P_t(i)} \le \frac{\ell_t(i)}{\eta} \le \frac{1}{\eta},
\end{equation*}
so we may apply Lemma~\ref{modsecorderbound}, with $S_t = S$ for all $t$, to the losses $\widehat\ell_1,\ldots,\widehat\ell_T$. Taking expectations of the bound~\eqref{EqnBeforeExp}, we have
\begin{align}\label{eq:upperbound_eq1}
\begin{split}
& \mathop{\bbE}_{X_{1:t-1}} \Bigg[ \sum_{t=1}^T\sum_{i\in V}q_t(i)\mathop{\bbE}_{X_{t}}\left[\widehat \ell_t(i_{1:t-1};i) \mid X_{1:t-1}\right] - \sum_{t=1}^T   \mathop{\bbE}_{X_{t}}\left[\widehat \ell_t(i_{1:t-1};i^*)|X_{1:t-1}\right] \Bigg] \\
        & \qquad \leq \frac{\ln K}{\eta} + \eta \sum_{t=1}^T \mathop{\bbE}_{X_{1:t-1}}\Bigg[ \sum_{i\in S}q_t(i)(1-q_t(i)) \mathop{\bbE}_{X_{t}}\left[\widehat \ell_t(i_{1:t-1}; i)^2 \mid X_{1:t-1}\right] \\
        & \qquad \qquad + \sum_{i\not\in S}q_t(i)\mathop{\bbE}_{X_{t}}[\widehat \ell_t(i_{1:t-1}; i)^2 \mid X_{1:t-1}] \Bigg] \\
%        \mathop{\bbE}_{I_{1:t-1}}&\left[ \sum_{t=1}^T\sum_{i\in V}q_t(i)\ell_t(i_{1:t-1};i) - \sum_{t=1}^T \ell_t(i_{1:t-1};i^*) \right]
& \qquad = \frac{\ln K}{\eta} + \eta \sum_{t=1}^T \mathop{\bbE}_{X_{1:t-1}}\Bigg[ \sum_{i\in S}q_t(i)(1-q_t(i)) \frac{\ell_t(i_{1:t-1}; i)^2}{P_t(i)} + \sum_{i\not\in S}q_t(i) \frac{\ell_t(i_{1:t-1}; i)^2}{P_t(i)}\Bigg].
    \end{split}
    \end{align}
Since $P_t(i)=1-p_t(i)$ and $\ell_t\leq 1$, the last term is further upper-bounded by
%we then have
\begin{align*}
%\begin{split}
%    &\mathop{\bbE}_{X_{1:t-1}}\left[ \sum_{t=1}^T\sum_{i\in V}q_t(i)\ell_t(i_{1:t-1};i) - \sum_{t=1}^T \ell_t(i_{1:t-1};i^*) \right] \\
%    \leq& \frac{\ln K}{\eta} +
\eta \sum_{t=1}^T \mathop{\bbE}_{X_{1:t-1}}\left[ \sum_{i\in S}q_t(i)\frac{1-q_t(i)}{1-p_t(i)} + \sum_{i\not\in S}\frac{q_t(i)}{P_t(i)}  \right].
%\end{split}
\end{align*}
%To sum over $i\in S$ on the right-hand side, we have:
Also note that
\[
\sum_{t=1}^T\sum_{i\in S}q_t(i)\frac{1-q_t(i)}{1-p_t(i)} \stackrel{(a)}{\leq} 2\sum_{t=1}^T\sum_{i\in S}q_t(i)\leq 2T,
\]
where inequality $(a)$ follows from the fact that if $q_t(i) \ge u$, then $p_t(i) \le q_t(i)$, so $\frac{1-q_t(i)}{1-p_t(i)} \le 1$; and if $q_t(i) \le u$, then $p_t(i) \le u \le \frac{1}{2}$, so $\frac{1-q_t(i)}{1-p_t(i)} \le \frac{1-q_t(i)}{1/2} \le 2$.

For any $i\not\in S$, we have $p_t(i)\geq\frac{\gamma}{K}$, since $i$ has a self-loop. Applying Lemma~\ref{graphlma} with $\epsilon=\frac{\gamma}{K}$ then yields
\[
\sum_{i\not\in S}\frac{q_t(i)}{P_t(i)}\leq 2\sum_{i\not\in S}\frac{p_t(i)}{P_t(i)}\leq8\alpha\ln\frac{4K^2}{\gamma},
\]
where the first inequality follows from the fact that
\begin{equation*}
p_t(i)\geq (1-\gamma)q_t(i)\geq\frac{1}{2}q_t(i).
\end{equation*}
Combining the bounds and using the inequality
\begin{equation*}
p_t(i)\leq q_t(i)+\gamma u(i),
\end{equation*}
we then obtain
\[
\sum_{i\in V}p_t(i)\ell_t(i_{1:t-1}; i)\leq\sum_{i\in V}q_t(i)\ell_t(i_{i:t-1};i)+\gamma,
\]
leading to the regret bound
\begin{align}
\begin{split}
    \mathop{\bbE}_{X_{1:t}}&\left[ \sum_{t=1}^T\sum_{i\in V}p_t(i)\ell_t(i_{1:t-1};i)\right] - \mathop{\bbE}_{X_{1:t}}\left[\sum_{t=1}^T \ell_t(i_{1:t-1};i^*) \right] \leq \gamma T+\frac{\ln K}{\eta}+2\eta T\left(1+4\alpha\ln\frac{4K^2}{\gamma}\right).
\end{split}
\end{align}
Substituting the chosen values of $\eta$ and $\gamma$ gives the desired result.

We now consider weakly-observable graphs $G$. Let $D\subseteq V$ be a weakly dominating set supporting the exploration distribution $u$, with $|D|=\delta$. Similarly to the proof of the strongly-observable case, we may apply Lemma~\ref{modsecorderbound} to the vectors $\widehat\ell_1,\ldots,\widehat\ell_T$, except we take $S_t=\emptyset$ for all $t$ in this case. This leads to the bound
\begin{align}\label{eq:std_strongly_regret}
\begin{split}
    \mathop{\bbE}_{X_{1:t}}&\left[ \sum_{t=1}^T\sum_{i\in V}p_t(i)\ell_t(i_{1:t-1};i)\right] - \mathop{\bbE}_{X_{1:t-1}}\left[\sum_{t=1}^T \ell_t(i_{1:t-1};i^*) \right] \leq \gamma T+\frac{\ln K}{\eta}+\eta \sum_{t=1}^T\mathop{\bbE}_{X_{1:t}}\left[\sum_{i\in V}\frac{q_t(i)}{P_t(i)}\right],
\end{split}
\end{align}
for any fixed $i^*\in V$.

To bound the expectation on the right-hand side, consider the set $S=\{i:i\not\in \Ni(i)\}$ of nodes without self-loops. Then
\begin{equation*}
P_t(i)=\sum_{j\in \Ni(i)}p_t(j)\geq\frac{\gamma}{\delta}, \quad \forall i \in S,
\end{equation*}
since if $i$ is weakly-observable, there exists $k\in D$ such that $k\in \Ni(i)$ and $p_t(k)\geq\frac{\gamma}{\delta}$, because the exploration distribution is uniform over $D$; if $i$ is strongly-observable, then $i$ must be dominated by all other nodes in $V$, including every node in $D$, so the same statement holds. For the vertices with self-loops, we use the bound
\begin{equation*}
P_t(i)\geq p_t(i)\geq(1-\gamma)q_t(i)\geq\frac{1}{2}q_t(i).
\end{equation*}
Hence,
\[
\sum_{i\in V}\frac{q_t(i)}{P_t(i)}=\sum_{i\in S}\frac{q_t(i)}{P_t(i)}+\sum_{i\not\in S}\frac{q_t(i)}{P_t(i)}\leq \frac{\delta}{\gamma}+2K.
\]
Putting everything together yields:
\begin{align}\label{eq:std_weakly_regret}
\begin{split}
    \mathop{\bbE}_{X_{1:t-1}}&\left[ \sum_{t=1}^T\sum_{i\in V}p_t(i)\ell_t(i_{1:t-1};i)\right] - \mathop{\bbE}_{X_{1:t-1}}\left[\sum_{t=1}^T \ell_t(i_{1:t-1};i^*) \right] \leq \gamma T+\frac{\ln K}{\eta}+\frac{\eta\delta}{\gamma}T+2\eta KT.
\end{split}
\end{align}
Using the chosen values of $\eta$ and $\gamma$ gives the desired result.

%\begin{remark}
%Inequalities~\eqref{eq:std_strongly_regret} and \eqref{eq:std_weakly_regret} show why standard regret does not make sense when the adversary is adaptive. In the oblivious case, the best fixed action $i^*\in V$ obtains the cumulative loss $\sum_{t=1}^T\ell_t(i^*)$, and no randomness has to be considered. In the adaptive case, however, we must take an expectation with respect to $X_{i:t-1}$; i.e., $\mathop{\bbE}_{X_{1:t-1}}\left[\sum_{t=1}^T \ell_t(i_{1:t-1};i^*) \right]$.
%\end{remark}

\section{Supporting proofs for Section~\ref{SecLower}}
\label{AppLower}

In this Appendix, we provide proofs of the supporting results used to derive our lower bounds.

\subsection{Multi-scale random walk (MRW) construction}
\label{AppMRW}

\begin{algorithm}[H]
    \caption{Oblivious Loss Sequence Generating via Multi-scale Random Walk (MRW)}
    \begin{algorithmic}[1]
    \renewcommand{\algorithmicrequire}{\textbf{Input:}}
    \renewcommand{\algorithmicensure}{\textbf{Output:}}
    \REQUIRE Time horizon $T>0$, experts $x_1$ and $x_2$
    \ENSURE A sequence of oblivious losses $L_{1:T}$
    \STATE Set $\epsilon=2^{1/3}T^{-1/3}/(9\log_2T)$ and $\sigma=1/(9\log_2T)$
    \STATE Choose $Z\sim \text{Uniform}(\{-1,1\})$
    \STATE Draw $T$ i.i.d.\ Gaussians $\xi_1,\ldots,\xi_T\sim\mathcal{N}(0, \sigma^2)$
    \FOR {$t=0,\ldots, T$}
        \IF {($t=0$)}
            \STATE $W_0=0$
        \ELSE 
            \STATE $W_t=W_{\rho(t)}+\xi_t$, where $\rho(t)=t-2^{\delta(t)}$, $\delta(t)=\max\{i\geq 0: t \text{ mod } 2^i=0\}$
            \STATE $L'_t(x_1)=W_t+\frac{1}{2}$
            \STATE $L'_t(x_2)=W_t+\frac{1}{2}+Z\cdot\epsilon$
            \STATE For $i=1,2: L_t(x_i)=\text{clip}(L'_t(x_i))$, where $\text{clip}(\alpha)=\min\{\max\{\alpha, 0\}, 1\}$
        \ENDIF
    \ENDFOR
    \RETURN $L_{1:T}$ 
    \end{algorithmic} 
    \label{alg:loss_seq_generation}
\end{algorithm}

%%%%%

\subsection{Proof of Theorem~\ref{thm:full_info_lower_bound}}
\label{AppThmFullInfo}

%We begin with a restatement of the theorem:

%\begin{oneshot}{Theorem~\ref{thm:full_info_lower_bound} (restated)}
%Suppose $T>10$. For any $(T+1)$-round online learning problem with full-information feedback, there exists a bounded loss sequence $f_1,\ldots, f_{T+1}$ with bounded memory of size $1$ satisfying 
%\[
%\bbE[R_{T+1}]\geq\frac{T^{2/3}}{500\log_2T}=\Omega(T^{2/3}).
%\]
%\end{oneshot}
%Recall that given the loss functions $L_1,\ldots, L_T$ generated by Algorithm~\ref{alg:loss_seq_generation}, we define our adaptive loss sequence as follows:
%\begin{align}\label{eq:mem_1_loss}
%\begin{split}
%    &f_1({}\boldsymbol{\cdot}{}, X_1)=0,\\
%    &f_t(X_{t-1}, X_t)=L_{t-1}(X_{t-1})+\mathbbm{1}_{\{X_{t-1}\neq X_t\}}, \\
%    &\quad \text{for }t=2,\ldots, T+1\quad.
%\end{split}
%\end{align}

By Yao's minimax principle~\cite{Yao77}, it suffices to construct a stochastic loss sequence such that the expected regret is bounded below against the optimal deterministic player. Accordingly, we define the loss functions as
\begin{equation*}
f_t(X_{t-1}, X_t) = 1\{X_{t-1} \neq X_t\} + L_{t-1}(X_{t-1}),
\end{equation*}
where $L_0 = 0$ and $1\{X_0 \neq X_1\}  = 0$, and the loss sequence $\{L_t\}$ is defined in Algorithm~\ref{alg:loss_seq_generation}.

Let $M_{T+1}=\sum_{t=1}^{T+1}\mathbbm{1}_{\{X_{t-1}\neq X_t\}}$ be the number of switches, where $\mathbbm{1}_{\{X_0\neq X_1\}}=0$. We rewrite $R_{T+1}$ as follows:
\begin{align*}
\begin{split}
    R_{T+1}=&\sum_{t=1}^{T+1}f_t(X_{t-1}, X_t)-\min_{x\in\{x_1, x_2\}}\sum_{t=1}^{T+1}f_t(x, x)\\
    =&\sum_{t=2}^{T+1}L_{t-1}(X_{t-1})+M_{T+1}-\min_{x\in\{x_1, x_2\}}\sum_{t=2}^{T+1}L_{t-1}(x)\\
    =&\sum_{t=1}^{T}L_{t}(X_{t})+M_{T+1}-\min_{x\in\{x_1, x_2\}}\sum_{t=1}^TL_{t}(x).
\end{split}
\end{align*}
We also define the auxiliary regret term
\begin{equation*}
    R_{T+1}' = \sum_{t=1}^{T}L'_{t}(X_{t})+M_{T+1}-\min_{x\in\{x_1, x_2\}}\sum_{t=}^TL'_{t}(x),
\end{equation*}
defined with respect to the unclipped loss sequence in Algorithm~\ref{alg:loss_seq_generation}.

As derived by Dekel et al.~\cite{dekel2014bandits}, the bounded loss sequence $\{L_t\}$ satisfies $d(\rho), w(\rho) \le \lfloor \log_2 T\rfloor + 1$, where the width $w$ is defined by
\begin{align*}
\text{cut}(t) &= \{s\in[T]: \rho(s)<t\leq s\}, \\
w(\rho) &= \max_{t\in[T]}|\text{cut}(t)|,
\end{align*}
and the depth is defined by $d(\rho)=\max_{t\in[T]}|\rho^*(t)|$, where the ancestor function $\rho^*$ is defined recursively:
\begin{align*}
\begin{split}
    &\rho^*(0)=\{\},\\
    \forall t\in[T]\ &\rho^*(t)=\rho^*\left(\rho(t)\right)\cup \{\rho(t)\}.
\end{split}
\end{align*}
Furthermore,
\begin{equation}
\label{EqnMRW}
\forall\delta\in(0,1),\ P\left(\max_{t\in[T]}|W_t|\leq\sigma\sqrt{2d(\rho)\log(T/\delta)}\right)\geq 1-\delta.
\end{equation}

%By Yao's minimax principle, it suffices to consider deterministic algorithms against randomized loss sequences.
Let $Y_t$ denote the unclipped losses at round $t$. In our case, $Y_1=\{0,0\}$ and
\begin{equation*}
Y_t=\left\{L'_{t-1}(X_{t-1}),\ L'_{t-1}(X_{t-1})\right\}, \quad \text{for } 2 \le t \le T+1.
\end{equation*}
Also, let $Z_t$ be the clipped losses at round $t$. In our case, $Z_1=\{0,0\}$ and
\begin{equation*}
Z_t=\left\{L_{t-1}(X_{t-1}),\ L_{t-1}(X_{t-1})\right\}, \quad \text{for } 2 \le t \le T+1.
\end{equation*}

Let $\cF$ denote the $\sigma$-algebra generated by $Z_{1:T+1}$, and let $\cF'$ denote the $\sigma$-algebra generated by $Y_{1:T+1}$. 
%Let $\mathbb{P}$ denote the distribution over $\cF'$.
Let $\mathbb{S}=\bbP(\cdot|Z>0)$ and $\bbQ=\bbP(\cdot|Z<0)$ denote the conditional probabilities.
%Since $Z$ is uniformly distributed, we have $\bbP=\frac{1}{2}(\bbS+\bbQ)$.
We then have the following lemma:

\begin{lemma}\label{kldivswitches}
For any event $A\in\cF$,
\[
    |\bbS(A)-\bbQ(A)|\leq\frac{\epsilon}{\sigma}\sqrt{w(\rho)\bbE_{\P}[M_{T+1}]},
\]
\end{lemma}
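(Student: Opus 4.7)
The plan is to follow the multi-scale random walk (MRW) KL-divergence analysis of Dekel et al.~\cite{dekel2014bandits}, adapted to the adaptive switching-cost loss sequence defined in~\eqref{eq:mem_1_loss}. The argument will proceed in three stages: (i) reduce total variation to KL via Pinsker's inequality; (ii) reduce the KL on clipped losses to the KL on unclipped Gaussian losses via the data-processing inequality; and (iii) apply the chain rule of KL and exploit the MRW correlation structure to turn a per-round KL bound into a per-switch bound.

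For stage (i), Pinsker gives $|\bbS(A) - \bbQ(A)|^2 \leq \tfrac{1}{2}\kl(\bbS|_\cF \,\|\, \bbQ|_\cF)$. For stage (ii), since each clipped loss $Z_t$ is a deterministic function of the corresponding unclipped loss $Y_t$, the data-processing inequality yields $\kl(\bbS|_\cF \,\|\, \bbQ|_\cF) \leq \kl(\bbS|_{\cF'} \,\|\, \bbQ|_{\cF'})$, so it suffices to bound the KL between the unclipped Gaussian observation sequences.

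For stage (iii), I would apply the chain rule to decompose $\kl(\bbS|_{\cF'} \,\|\, \bbQ|_{\cF'}) = \sum_{t=1}^{T+1} \bbE_\bbS\bigl[\kl\bigl(\bbS(Y_t \mid Y_{<t}) \,\|\, \bbQ(Y_t \mid Y_{<t})\bigr)\bigr]$. By Yao's principle the player's strategy is deterministic, so $X_{t-1}$ is a function of $Y_{<t}$; and because $\bbS$ and $\bbQ$ are the conditional measures on $\{Z=+1\}$ and $\{Z=-1\}$, the conditional laws of $Y_t$ under $\bbS$ and $\bbQ$ are Gaussians with identical conditional variance but with means that differ by $2\epsilon$ precisely when $X_{t-1}=x_2$ (the only arm whose unclipped loss depends on $Z$); otherwise the two conditional laws coincide and contribute zero. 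A direct Gaussian KL computation then gives a per-round contribution of order $\epsilon^2$ divided by the conditional variance of $Y_t$ given $Y_{<t}$.

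The main obstacle is the final aggregation step: a naive per-round bound would be proportional to the total number of plays of $x_2$, which could be as large as $T$ and would give only a trivial estimate. The key refinement---which is the whole purpose of the MRW construction---is that within a maximal streak of consecutive plays of the same arm, the $Y_t$'s share MRW ancestors and their successive contributions to the conditional KL telescope down to at most $O(w(\rho)\, \epsilon^2 / \sigma^2)$ per streak, rather than one $\epsilon^2/\sigma^2$ unit per round. Since the number of streaks equals the number of switches $M_{T+1}$, summing and taking $\bbE_\P$ gives $\kl(\bbS|_{\cF'} \,\|\, \bbQ|_{\cF'}) \leq (\epsilon^2/\sigma^2)\, w(\rho)\, \bbE_\P[M_{T+1}]$ up to an absolute constant, and combining with Pinsker yields the stated bound. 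This MRW variance bookkeeping---relating cuts of the ancestor function $\rho$ to conditional Gaussian variances---is the hard part, and I would model it closely on the analogous KL lemma in Dekel et al.~\cite{dekel2014bandits}.
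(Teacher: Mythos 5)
Your high-level architecture (Pinsker's inequality, data processing from the clipped to the unclipped losses, chain rule for KL over the observation sequence) is the same as the paper's, but the step on which the whole lemma hinges is stated incorrectly. You claim that the conditional laws of $Y_t$ under $\bbS$ and $\bbQ$ differ by a $2\epsilon$ mean shift precisely when $X_{t-1}=x_2$, and coincide otherwise. The correct criterion is whether $X_{t-1}$ equals $X_{\rho(t-1)}$, the arm played at the MRW \emph{parent} round. Conditioned on $Y_{1:t-1}$, the new observation is $L'_{t-1}(X_{t-1})$; if $X_{t-1}=X_{\rho(t-1)}$ this equals the already-observed value $L'_{\rho(t-1)}(X_{\rho(t-1)})$ plus the fresh noise $\xi_{t-1}$, so the conditional law is identical under $\bbS$ and $\bbQ$ even when that common arm is $x_2$ (the $Z\epsilon$ term cancels). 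Conversely, if $X_{t-1}=x_1$ while $X_{\rho(t-1)}=x_2$, the observation equals $L'_{\rho(t-1)}(x_2)+\xi_{t-1}-Z\epsilon$, so the two conditional means differ by $2\epsilon$ even though the arm played is $x_1$. Your rule is therefore wrong in both directions.

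This error propagates into your aggregation step. Under your criterion an $x_1$-streak contributes zero KL, but in reality the first up-to-$w(\rho)$ rounds of \emph{any} streak (those whose MRW parents predate the most recent switch) each contribute $2\epsilon^2/\sigma^2$, regardless of which arm the streak plays; assigning them zero understates the KL, which is not a valid direction of slack in an upper bound. The needed bookkeeping is also simpler than the ``telescoping'' you gesture at: the round-$t$ term is nonzero only if $X_{t-1}\neq X_{\rho(t-1)}$, and any such round lies in $\text{cut}(s)$ for some switch time $s$, so the number of contributing rounds is at most $w(\rho)M_{T+1}$; bounding the resulting conditional expectations under $Z>0$ and $Z<0$ and averaging via $\P=\frac{1}{2}(\bbS+\bbQ)$ gives $\kl(\bbS\,\|\,\bbQ)\leq \frac{2\epsilon^2}{\sigma^2}\,w(\rho)\,\bbE_{\P}[M_{T+1}]$, after which Pinsker yields the stated inequality. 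Since you defer exactly this part to Dekel et al.\ while asserting an incorrect per-round criterion, the proposal as written has a genuine gap at the heart of the argument.
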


\begin{proof}
Note that $\cF \subseteq \cF'$; we will derive an upper bound on $\sup_{A \in \cF'} |\bbS(A) - \bbQ(A)|$.

We use the chain rule for relative entropy:
\[
\kl(\S||\Q)=\sum_{t=1}^{T+1}\kl(\S_{t-1}||\Q_{t-1})
\]
where
\begin{align*}
\S_{t-1} = \S(\cdot|Y_1,\ldots,Y_{t-1}), \qquad \text{and} \qquad \Q_{t-1} = \Q(\cdot|Y_1,\ldots,Y_{t-1}).
\end{align*}
We have the following three cases:
\begin{enumerate}
    \item If $X_{t-1}=X_{\rho({t-1})}$, then under both $\S_{t-1}$ and $\Q_{t-1}$, we observe $\{L'_{\rho(t-1)}+\xi_{t-1},\  L'_{\rho(t-1)}+\xi_{t-1}\}$.
    \item 
    If $X_{t-1}=x_1$ and $X_{\rho({t-1})}=x_2$, we observe $\{L'_{\rho(t-1)}(x_2)+\xi_{t-1}-\epsilon,\  L'_{\rho(t-1)}(x_2)+\xi_{t-1}-\epsilon\}$ under $\S_{t-1}$, but $\{L'_{\rho(t-1)}(x_2)+\xi_{t-1}+\epsilon,\  L'_{\rho(t-1)}(x_2)+\xi_{t-1}+\epsilon\}$ under $\Q_{t-1}$.
    \item 
    If $X_{t-1}=x_2$ and $X_{\rho({t-1})}=x_1$, we observe $\{L'_{\rho(t-1)}(x_1)+\xi_{t-1}+\epsilon,\  L'_{\rho(t-1)}(x_1)+\xi_{t-1}+\epsilon\}$ under $\S_{t-1}$, but $\{L'_{\rho(t-1)}(x_1)+\xi_{t-1}-\epsilon,\  L'_{\rho(t-1)}(x_1)+\xi_{t-1}-\epsilon\}$ under $\Q_{t-1}$.
\end{enumerate}
In both cases 2 and 3, we observe two normal distributions of which the means differ by $2\epsilon$. Hence,
\begin{align*}
\begin{split}
    \kl(\S||\Q)&\leq\frac{2\epsilon^2}{\sigma^2}\bbE\left[ \sum_{t=1}^{T+1}\mathbbm{1}_{\{X_{t-1}\neq X_{\rho(t-1)}\}}\Bigg|Z>0 \right]\\
%    &=\frac{2\epsilon^2}{\sigma^2}\bbE\left[ \sum_{t=1}^{T}\mathbbm{1}_{\{X_{t}\neq X_{\rho(t)}\}}\Bigg|Z>0 \right]\\
    &\leq\frac{2\epsilon^2}{\sigma^2}\bbE\left[ \sum_{t=1}^{T+1}\mathbbm{1}_{\{X_{t}\neq X_{\rho(t)}\}}\Bigg|Z>0 \right].
\end{split}
\end{align*}
Notice that the event $X_{t}\neq X_{\rho(t)}$ occurs if there is at least one time $s$ of switching such that $t\in\text{cut}(s)$. Let $S_{1:M_{T+1}}$ denote the random sequence of times of such switches. Then
\begin{align*}
\sum_{t=1}^{T+1}\mathbbm{1}_{\{X_{t}\neq X_{\rho(t)}\}} &\leq \sum_{r=1}^{M_{T+1}}\sum_{t\in\text{cut}(S_r)}\mathbbm{1}_{\{X_{t}\neq X_{\rho(t)}\}} \leq w(\rho)M_{T+1},
\end{align*}
implying that
\[
\kl(\S||\Q)\leq\frac{2\epsilon^2}{\sigma^2}\bbE\left[ w(\rho)M_{T+1}|Z>0 \right].
\]
Using a nearly identical argument, we may obtain an upper bound with the conditioning on $Z < 0$.
%same bound for $\kl(\Q||\S)$.
Since $\P=\frac{1}{2}(\S+\Q)$, we arrive at the inequality
\[
\kl(\S||\Q)\leq\frac{2\epsilon^2}{\sigma^2}\bbE_\P[ w(\rho)M_{T+1}].
\]
Applying Pinsker's inequality, we then have
\[
	\sup_{A\in\cF'} |\bbS(A)-\bbQ(A)| \leq\frac{\epsilon}{\sigma}\sqrt{w(\rho)\bbE_{\P}[M_{T+1}]},
\]
as wanted.
%Since $\cF\subseteq\cF'$, we finally obtain the desired bound.
\end{proof}

The proof then proceeds by showing that \mbox{$\bbE[R_{T+1}']=\widetilde\Omega(T^{2/3})$}, as stated in the following lemma:
\begin{lemma}\label{lma:pseudo_regret_lower_bound}
We have the lower bound 
\begin{align*}
\bbE[R_{T+1}']\geq \frac{2T^{2/3}}{125\log_2T}=\widetilde\Omega(T^{2/3}).
\end{align*}
\end{lemma}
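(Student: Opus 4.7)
The plan is to give an explicit closed form for $R'_{T+1}$ in terms of the random sign $Z$ and the counts $N_i = |\{t \le T : X_t = x_i\}|$, and then couple the two conditional distributions $\bbS, \bbQ$ using Lemma~\ref{kldivswitches}. Specifically, since
\[
\textstyle \sum_{t=1}^{T} L'_t(X_t) = \sum_{t=1}^T (W_t + \tfrac{1}{2}) + Z\epsilon N_2, \qquad \sum_{t=1}^T L'_t(x_1) = \sum_{t=1}^T (W_t + \tfrac{1}{2}),
\]
and $\sum_t L'_t(x_2) = \sum_t(W_t+\tfrac12) + Z\epsilon T$, the minimizing arm is $x_1$ when $Z = +1$ and $x_2$ when $Z = -1$, giving $R'_{T+1} = \epsilon N_2 + M_{T+1}$ on $\bbS$ and $R'_{T+1} = \epsilon N_1 + M_{T+1} = \epsilon(T - N_2) + M_{T+1}$ on $\bbQ$.

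Since $\P = \tfrac12(\S + \Q)$, averaging gives
\[
\bbE_\P[R'_{T+1}] \;=\; \tfrac{\epsilon T}{2} + \tfrac{\epsilon}{2}\bigl(\bbE_\bbS[N_2] - \bbE_\bbQ[N_2]\bigr) + \bbE_\bbP[M_{T+1}].
\]
The next step is to lower bound the bracketed difference using Lemma~\ref{kldivswitches}. Because $N_2$ is $\cF'$-measurable (it is a function of the observed losses) and takes values in $[0, T]$, we have $|\bbE_\bbS[N_2] - \bbE_\bbQ[N_2]| \le T \|\bbS - \bbQ\|_{TV} \le T \tfrac{\epsilon}{\sigma}\sqrt{w(\rho)\,\bbE_\P[M_{T+1}]}$, so writing $m := \bbE_\P[M_{T+1}]$,
\[
\bbE_\P[R'_{T+1}] \;\ge\; \tfrac{\epsilon T}{2} \;-\; \tfrac{\epsilon^2 T}{2\sigma}\sqrt{w(\rho)\, m} \;+\; m.
\]

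Now I would substitute the specified parameters $\epsilon = 2^{1/3}T^{-1/3}/(9\log_2 T)$, $\sigma = 1/(9\log_2 T)$, and use $w(\rho) \le \lfloor \log_2 T\rfloor + 1$ from the MRW construction. This reduces the right-hand side to $c_1 T^{2/3}/\log_2 T - c_2 (T^{1/3}/\log_2 T)\sqrt{m\log_2 T} + m$ with explicit constants. A case split on $m$ then finishes the proof: if $m \ge c_3 T^{2/3}/\log_2 T$ for an appropriate $c_3$, the trailing $+m$ term directly yields the bound; otherwise $m$ is small enough that the middle penalty term is at most half of the leading $\epsilon T/2$ term, and the first term dominates.

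The main obstacle is purely bookkeeping — tracking the constants carefully to arrive at the stated $\tfrac{2T^{2/3}}{125\log_2 T}$; conceptually nothing beyond the $\S$/$\Q$ decomposition, Pinsker-style control via Lemma~\ref{kldivswitches}, and the two-case analysis on $m$ is required. A minor technical point worth verifying is that $N_2$ is indeed measurable with respect to the $\sigma$-algebra $\cF'$ generated by the observed unclipped losses, which follows because the player's strategy is deterministic (invoked by Yao's principle at the start of the proof of Theorem~\ref{thm:full_info_lower_bound}) so $X_{1:T}$, and hence $N_2$, is a function of $Y_{1:T+1}$.
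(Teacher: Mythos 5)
Your proposal is correct, and it follows the same skeleton as the paper (Yao reduction, the $\P=\tfrac12(\S+\Q)$ decomposition, and Lemma~\ref{kldivswitches} as the key tool), but the intermediate decomposition is genuinely different. The paper lower-bounds $R'_{T+1}$ by $\max\{M_{T+1},\tfrac{\epsilon T}{2}\mathbbm{1}_{\{A\}}\}$ where $A$ is the event that the worse arm is played at least $(T+1)/2$ times, writes $\P(A)=\tfrac12(\S(A_2)+\Q(A_1))$, applies the lemma to the single event $A_1$ together with $\S(A_1\cup A_2)=1$, and then minimizes the resulting quadratic in $\sqrt{\bbE[M_{T+1}]}$, arriving at $\tfrac{\epsilon T}{8}-\tfrac{\epsilon^4T^2}{128\sigma^2}w(\rho)$. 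You instead use the exact identity $R'_{T+1}=\epsilon N_{\mathrm{worse}}+M_{T+1}$ (valid because the benchmark arm is $x_1$ under $\S$ and $x_2$ under $\Q$ for the unclipped losses) and control $\bbE_{\S}[N_2]-\bbE_{\Q}[N_2]$ by $T$ times the total-variation bound from the lemma; this replaces the indicator-event step, avoids the factor-of-two losses from the $\max$ trick and from $\P(A)\ge\tfrac12(\ldots)$, and yields the sharper intermediate bound $\tfrac{\epsilon T}{2}-\tfrac{\epsilon^2T}{2\sigma}\sqrt{w(\rho)\,m}+m$, which after substituting $\epsilon,\sigma,w(\rho)\le 2\log_2T$ gives roughly $0.066\,T^{2/3}/\log_2T$, comfortably above the stated $\tfrac{2T^{2/3}}{125\log_2T}$. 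Two small points to tighten: (i) your measurability argument is fine, but note the player actually observes the \emph{clipped} losses, so $N_2$ is $\cF$-measurable and a fortiori $\cF'$-measurable (the proof of Lemma~\ref{kldivswitches} bounds $\sup_{A\in\cF'}|\S(A)-\Q(A)|$, which is what the expectation bound needs); (ii) in your large-$m$ case the statement that the trailing $+m$ term ``directly yields the bound'' should be phrased as $m$ absorbing the penalty term, e.g.\ once $m\ge\bigl(\tfrac{\epsilon^2T\sqrt{w(\rho)}}{2\sigma}\bigr)^2$ one has $m-\tfrac{\epsilon^2T}{2\sigma}\sqrt{w(\rho)\,m}\ge 0$ and the leading $\tfrac{\epsilon T}{2}$ term alone suffices; with the ample slack in constants either your case split or the paper's explicit quadratic minimization closes the argument.
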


\begin{proof}
Let $A$ be the event that the worse action ($x_2$ if $Z>0$, and $x_1$ if $Z<0$) is chosen at least $(T+1)/2$ times. Then
\begin{align}
\begin{split}
    \bbE[R_{T+1}']&\geq\bbE\left[ \max\left\{M_{T+1}, \frac{\epsilon T}{2}\mathbbm{1}_{\{A\}}\right\} \right] \geq \bbE\left[\frac{1}{2}\left(M_{T+1}+\frac{\epsilon T}{2}\mathbbm{1}_{\{A\}}\right)\right] =\frac{1}{2}\bbE[M_{T+1}]+\frac{\epsilon T}{4}\P(A).
\end{split}
\end{align}

Let $A_1$ be the event that $x_1$ is chosen at least $\frac{T+1}{2}$ times, and let $A_2$ be the event that $x_2$ is chosen at least $\frac{T+1}{2}$ times. Using the fact that $\P(A)=\frac{1}{2}(\S(A_2)+\Q(A_1))$, we have
\[
\bbE[R_{T+1}']\geq\frac{1}{2}\bbE[M_{T+1}]+\frac{\epsilon T}{8}(\S(A_2)+\Q(A_1)).
\]
Applying Lemma~\ref{kldivswitches} then gives
\begin{align*}
\begin{split}
    \bbE[R_{T+1}']&\geq\frac{1}{2}\bbE[M_{T+1}] +\frac{\epsilon T}{8}\left(\S(A_2)+\S(A_1)-\frac{\epsilon}{\sigma}\sqrt{w(\rho)\bbE[M_{T+1}]}\right)\\
    &\geq \frac{1}{2}\bbE[M_{T+1}] +\frac{\epsilon T}{8}\left(\S(A_1\cup A_2)-\frac{\epsilon}{\sigma}\sqrt{w(\rho)\bbE[M_{T+1}]}\right)\\
    & = \frac{1}{2}\bbE[M_{T+1}]+\frac{\epsilon T}{8}\left(1-\frac{\epsilon}{\sigma}\sqrt{w(\rho)\bbE[M_{T+1}]}\right)\\
    & = \frac{1}{2}\bbE[M_{T+1}]+\frac{\epsilon T}{8}-\frac{\epsilon^2T}{8\sigma}\sqrt{w(\rho)\bbE[M_{T+1}]}.
\end{split}
\end{align*}
The last expression is quadratic in $\sqrt{\bbE[M_{T+1}]}$, so it is bounded below by
\[
\frac{\epsilon T}{8}-\frac{\epsilon^4T^2}{128\sigma^2}w(\rho).
\]
Recall that $\epsilon = \frac{2^{1/3}T^{-1/3}}{9\log_2T}$, $\sigma = \frac{1}{9\log_2T}$, and $w(\rho) \le \lc{\log_2T}\rc+1\leq2\log_2T$. Substituting these values into the above inequality, we obtain
\begin{align}\label{eq:pseudoregretbound}
\bbE[R_{T+1}']\geq \frac{2T^{2/3}}{125\log_2T}=\widetilde\Omega(T^{2/3}).
\end{align}
\end{proof}

Next, we show that $\bbE[R_{T+1}]$ is close to $\bbE[R_{T+1}']$:
\begin{lemma}\label{lma:bound_regret_and_pseudo_regret}
When $T \ge 10$, we have
\begin{equation*}
\bbE[R_{T+1}]\geq \bbE[R_{T+1}']-\frac{\epsilon T}{10}.
\end{equation*}
\end{lemma}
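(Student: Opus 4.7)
The strategy is to show that the clipping operation $L_t = \mathrm{clip}(L_t')$ almost surely has no effect, so that $R_{T+1}$ and $R_{T+1}'$ differ only by a negligible amount in expectation. The crucial fact is that $\sigma = 1/(9\log_2 T)$ was chosen small enough that the variance of each $W_t$ is of order $1/\log T$, making $|W_t|$ extremely unlikely to exceed the clipping threshold $\tfrac{1}{2} - \epsilon$.

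First I would decompose
\begin{align*}
R_{T+1}' - R_{T+1} &= \sum_{t=1}^T \bigl(L_t'(X_t) - L_t(X_t)\bigr) + \Bigl(\min_x \sum_{t=1}^T L_t(x) - \min_x \sum_{t=1}^T L_t'(x)\Bigr) \\
&\le 2 \sum_{t=1}^T \max_{x \in \{x_1,x_2\}} |L_t(x) - L_t'(x)|,
\end{align*}
where the inequality uses $|\min_x f(x) - \min_x g(x)| \le \max_x |f(x) - g(x)|$ applied to $f = \sum_t L_t$ and $g = \sum_t L_t'$. Clipping is triggered at time $t$ for action $x$ only when $L_t'(x) = W_t + \tfrac{1}{2} + c\epsilon \notin [0,1]$ for some $c \in \{0, \pm 1\}$, which forces $|W_t| > \tfrac{1}{2} - \epsilon$; on this event, $|L_t(x) - L_t'(x)| \le |W_t| + \epsilon$. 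Thus
\begin{equation*}
R_{T+1}' - R_{T+1} \le 2 \sum_{t=1}^T (|W_t| + \epsilon)\, \mathbbm{1}_{\{|W_t| > 1/2 - \epsilon\}}.
\end{equation*}

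Second I would bound this in expectation via a Gaussian tail estimate. Each $W_t$ is a centered Gaussian with variance $\sigma_t^2 \le d(\rho)\sigma^2 \le (\log_2 T + 1)/(81 \log_2^2 T)$, which is of order $1/\log T$. For $T \ge 10$ one computes $\epsilon \le 1/45$, hence $(\tfrac{1}{2}-\epsilon)^2/(2\sigma_t^2) \ge C_1 \log_2 T$ with $C_1$ an explicit positive constant. The standard Gaussian tail identity $\bbE[|Z|\mathbbm{1}_{\{|Z|>a\}}] = s\sqrt{2/\pi}\, e^{-a^2/(2s^2)}$ for $Z \sim \mathcal{N}(0,s^2)$ then yields $\bbE[(|W_t| + \epsilon)\mathbbm{1}_{\{|W_t| > 1/2-\epsilon\}}] \le C_2\, T^{-C_3}$ for absolute constants $C_2, C_3$, with $C_3$ a large positive number. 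Summing over $t \in [T]$ gives $\bbE[R_{T+1}' - R_{T+1}] \le 2 C_2 T^{1-C_3}$, which is vastly smaller than $\epsilon T/10 = \Theta(T^{2/3}/\log T)$ for $T \ge 10$.

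The only nontrivial step is the Gaussian tail calculation: one must unpack the scalings $\sigma = 1/(9\log_2 T)$ and $d(\rho) \le \log_2 T + 1$ to confirm that $(\tfrac{1}{2}-\epsilon)^2/\sigma_t^2 = \Theta(\log T)$, producing polynomial decay in $T$ with a large exponent. The parameter $\sigma$ in Algorithm~\ref{alg:loss_seq_generation} was designed precisely so that this decay is sharp; consequently, the $\epsilon T/10$ slack in the lemma is more than sufficient to absorb the clipping error, and the Gaussian concentration argument tolerates very crude constants.
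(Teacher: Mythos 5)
Your proof is correct, but it takes a genuinely different route from the paper's. The paper conditions on the single global event $B=\{\forall t\in[T]: L_t=L_t'\}$ that no clipping ever occurs, lower-bounds $\bbP(B)\geq 9/10$ via the uniform concentration inequality~\eqref{EqnMRW} for the multi-scale random walk (applied with $\delta=1/T$, giving $\max_t|W_t|\leq 4\sigma\log_2 T=4/9$), and on the complement invokes the deterministic sandwich $M_{T+1}\leq R_{T+1}\leq R_{T+1}'\leq M_{T+1}+\epsilon T$ to conclude $R_{T+1}'-R_{T+1}\leq\epsilon T$; multiplying by $\bbP(\neg B)\leq 1/10$ yields exactly the $\epsilon T/10$ slack in the statement. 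You instead cancel the switching-cost term $M_{T+1}$ in the difference of the two regrets, reduce to $2\sum_t\max_x|L_t(x)-L_t'(x)|$, and control each term in expectation by a direct Gaussian tail computation using only the marginal variance $\mathrm{Var}(W_t)\leq d(\rho)\sigma^2=\Theta(1/\log T)$. This avoids both the uniform concentration bound and the structural sandwich on the regrets (which in the paper requires noting that the pointwise minimizer of $L_t'$ is the same action for all $t$, so that $R_{T+1}'\geq M_{T+1}$), and it actually proves the much stronger statement $\bbE[R_{T+1}'-R_{T+1}]=O(T^{-c})$ for a large constant $c$, of which $\epsilon T/10$ is a very loose consequence. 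The only step requiring care is the exponent: one needs $(1/2-\epsilon)^2/(2\sigma_t^2)\geq c'\log_2 T$ with $c'>\ln 2$ so that the per-round tail beats the factor of $T$ from the sum over rounds; with $\sigma=1/(9\log_2 T)$, $d(\rho)\leq\log_2 T+1$, and $\epsilon\leq 1/45$ for $T\geq 10$, this holds with $c'\approx 7$, a wide margin, so your argument goes through.
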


\begin{proof}
Consider the event $B=\{\forall t\in[T]: L_t=L'_t\}$. We know that $W_{1:T}$ has depth $d\leq\lc{\log_2T}\rc+1\leq 2\log_2T$. Using inequality~\eqref{EqnMRW} with $\delta=\frac{1}{T} \leq \frac{1}{10}$, with probability at least $\frac{9}{10}$, we have
\begin{align*}
\max_{1 \le t \le T} |W_t| & \leq \sigma\sqrt{2d\log\frac{T}{\delta}} \leq\sigma\sqrt{10\log_2T\log T}  \leq 4\sigma\log_2T.
\end{align*}
Hence, setting $\sigma=\frac{1}{9\log_2T}$ yields
\[
P\left(\forall t\in[T],\ \frac{1}{2}+W_t\in\left[\frac{1}{18},\frac{17}{18}\right]\right)\geq\frac{9}{10}.
\]
Furthermore, $\epsilon < \frac{1}{18}$, so $L_t'(x_1), L_t'(x_2)\in[0,1]$ whenever $\frac{1}{2}+W_t\in\left[\frac{1}{18},\frac{17}{18}\right]$. This implies that $P(B)\geq 9/10$. If $B$ happens, then $R_{T+1}=R_{T+1}'$; otherwise,
\begin{equation*}
M_{T+1}\leq R_{T+1}\leq R'_{T+1}\leq M_{T+1}+\epsilon T,
\end{equation*}
so $R_{T+1}'-R_{T+1}\leq\epsilon T$. Therefore,
\begin{align*}
\bbE[R_{T+1}']-\bbE[R_{T+1}] &= \bbE[R_{T+1}'-R_{T+1}|\neg B]\cdot P(\neg B) \leq \frac{\epsilon T}{10}.
\end{align*}
\end{proof}

Combining Lemma~\ref{lma:pseudo_regret_lower_bound} and Lemma~\ref{lma:bound_regret_and_pseudo_regret}, we obtain Theorem~\ref{thm:full_info_lower_bound}.

%%%%%

\subsection{Supporting lemmas for Theorem~\ref{ThmSC}}
\label{AppSC}

In this subsection, we provide statements and proofs of supporting lemmas for Theorem~\ref{ThmSC}. We begin with a simple lemma:

%\begin{oneshot}{Theorem~\ref{ThmSC} (restated)}
%If the feedback graph $G=(V,E)$ is strongly-observable and non-revealing and the adversary is oblivious with switching costs, the expected regret of any player strategy is bounded below by $\widetilde\Omega(T^{2/3})$.
%\end{oneshot}

\begin{lemma}\label{lem:ind_num}
If the independence number of a strongly-observable graph $G=(V,E)$ is $1$, then $G$ is revealing.
\end{lemma}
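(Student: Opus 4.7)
The plan is to prove the lemma by a straightforward case analysis on which clause of the strong-observability definition is satisfied by each vertex, using the assumption $\alpha(G)=1$ to force a directed edge between any two distinct vertices. Specifically, I need to show that $N^{\text{in}}(u)\cap N^{\text{in}}(v)\neq \emptyset$ for every pair $u,v \in V$. The case $u=v$ is immediate because strong observability implies observability, hence $N^{\text{in}}(u)\neq \emptyset$. For $u\neq v$, the hypothesis $\alpha(G)=1$ says $\{u,v\}$ cannot be an independent set, so at least one of the directed edges $(u,v)$ or $(v,u)$ lies in $E$. By symmetry, I would assume without loss of generality that $(u,v)\in E$, i.e., $u\in N^{\text{in}}(v)$.

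Next I would perform a case split on the strong-observability type of $u$. If $u$ has a self-loop (clause (i) of the definition), then $u\in N^{\text{in}}(u)$, and combined with $u\in N^{\text{in}}(v)$ this gives $u \in N^{\text{in}}(u)\cap N^{\text{in}}(v)$, so the pair is revealed. Otherwise clause (ii) must hold, giving $V\setminus\{u\}\subseteq N^{\text{in}}(u)$; in particular $v\in N^{\text{in}}(u)$. Now I would split on the strong-observability type of $v$: if $v$ has a self-loop, then $v\in N^{\text{in}}(u)\cap N^{\text{in}}(v)$, done. Otherwise clause (ii) applies to $v$ as well, so $V\setminus\{v\}\subseteq N^{\text{in}}(v)$. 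Assuming $|V|\geq 3$, pick any third vertex $w\in V\setminus\{u,v\}$; then by the two dominance conditions, $w\in N^{\text{in}}(u)\cap N^{\text{in}}(v)$, completing the argument.

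The main obstacle to a fully general statement is the degenerate configuration $|V|=2$ in which neither vertex has a self-loop but each is dominated by the other; formally, $V=\{u,v\}$ with $E=\{(u,v),(v,u)\}$. Such a graph is strongly-observable, has $\alpha(G)=1$, yet $N^{\text{in}}(u)\cap N^{\text{in}}(v)=\{v\}\cap\{u\}=\emptyset$. I suspect the intended reading of the lemma tacitly assumes $|V|\geq 3$ (the only setting where the downstream reduction to a two-node bandit subgraph in Theorem~\ref{ThmSC} is interesting), or else that clause (ii) of the strong-observability definition be paired with a self-loop in the $|V|=2$ corner case. Modulo this subtlety, the case analysis above yields the conclusion in all remaining configurations.
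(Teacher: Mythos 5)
Your argument is correct and is essentially the paper's own proof: the paper phrases it contrapositively (assume $N^{\text{in}}(u)\cap N^{\text{in}}(v)=\emptyset$ and derive a contradiction), but the case structure---an edge forced by $\alpha(G)=1$, then a self-loop versus $V\setminus\{\cdot\}$-domination split for $u$ and then for $v$---is identical to yours. The $|V|=2$ caveat you flag is genuine and applies equally to the paper's proof, whose final case concludes from $V\setminus\{u,v\}\subseteq N^{\text{in}}(u)\cap N^{\text{in}}(v)$ that the intersection is nonempty, silently requiring a third vertex; the two-node graph with mutual edges and no self-loops is strongly-observable with $\alpha=1$ yet non-revealing, so the lemma (and the paper's argument) implicitly assumes $|V|\geq 3$ exactly as you suggest.
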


\begin{proof}
Suppose $G$ is non-revealing. Then there exist vertices $u,v \in V$ such that $\Ni(u)\cap \Ni(v)=\emptyset$. If $u$ and $v$ were not connected by an edge, then $\alpha(G) \ge 2$, a contradiction. Suppose there were an edge between $u$ and $v$, and assume WLOG that $u\in\Ni(v)$. If $u$ has a self-loop, then $\{u\}\subseteq \Ni(u)\cap\Ni(v)$. Otherwise, by the property of strong observability, we have $V\backslash\{u\}\subseteq\Ni(u)$, implying that $v\in\Ni(u)$. We then have the following scenarios:
\begin{itemize}
\item If $v$ has a self-loop, then $\{v\}\subseteq \Ni(u)\cap\Ni(v)$.
\item If $v$ does not have a self-loop, then $V\backslash\{v\}\subseteq\Ni(v)$ and $V\backslash\{v, u\}\subseteq\Ni(u)\cap\Ni(v)$.
\end{itemize}
In both cases, $\Ni(u)\cap \Ni(v) \neq \emptyset$, leading to a contradiction.
\end{proof}

\begin{figure}[]
\centering
\subfloat[]{\includegraphics[width=0.2\linewidth]{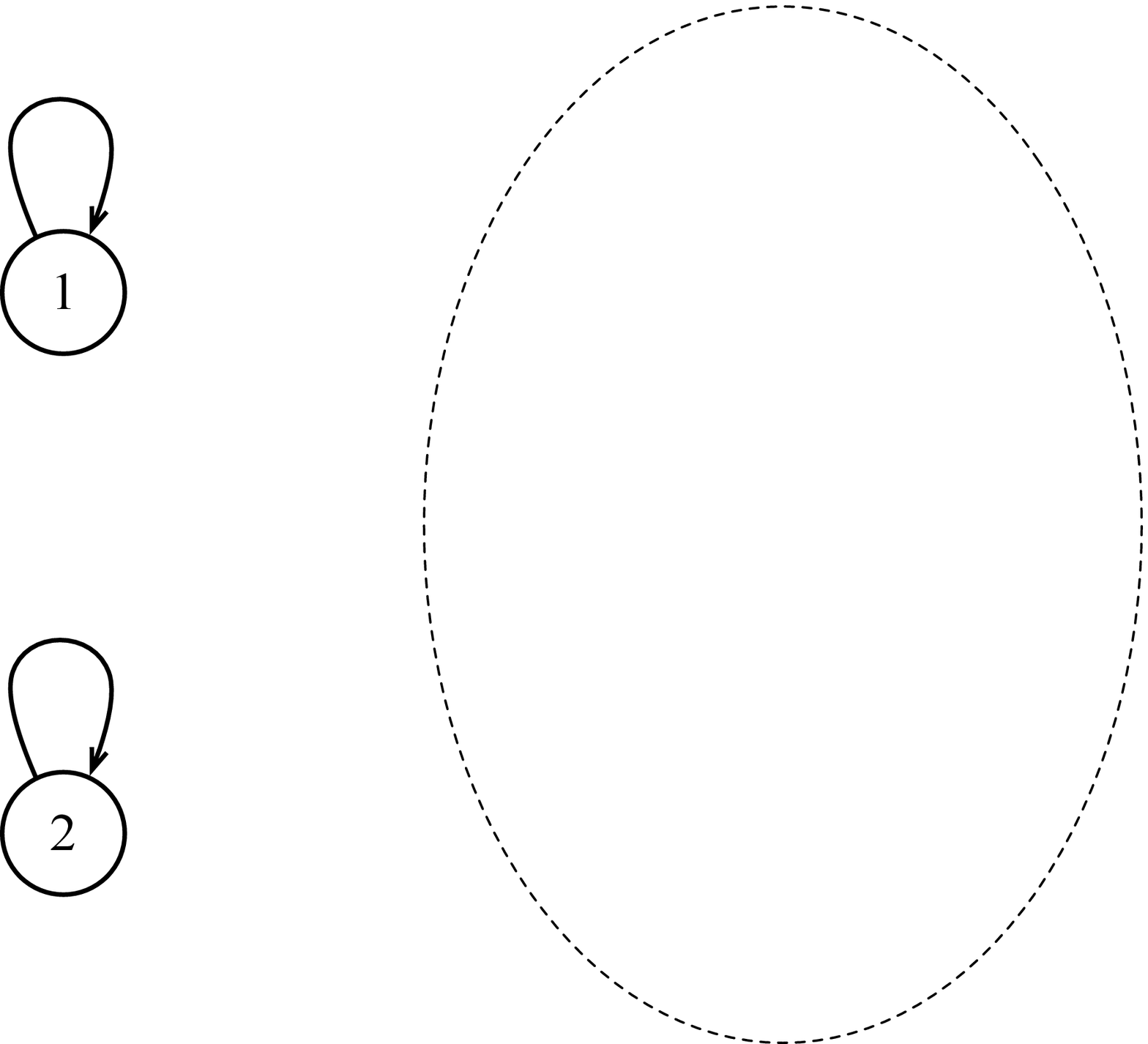}
\label{fig:taba}}\hfil
\subfloat[]{\includegraphics[width=0.2\linewidth]{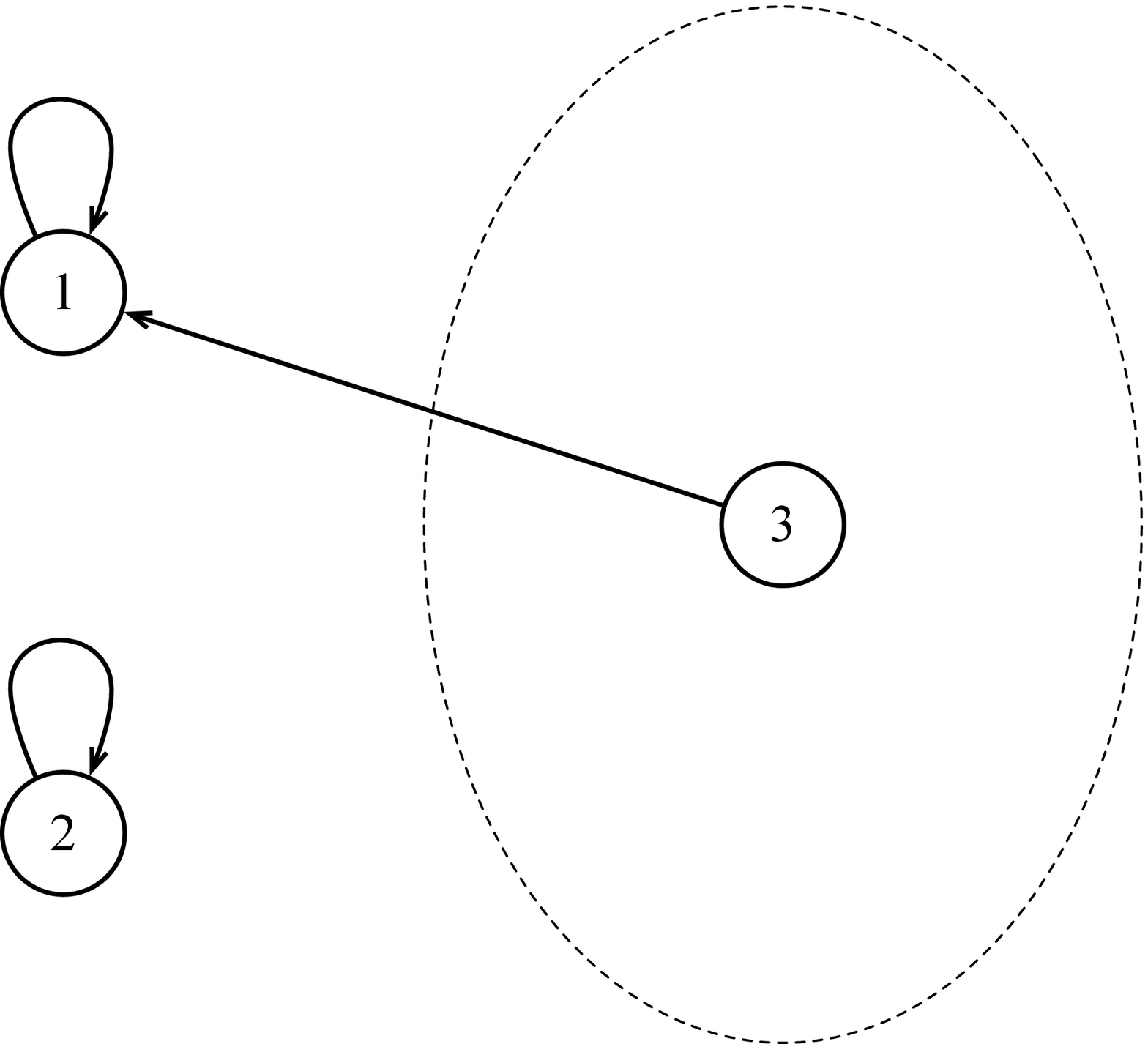}
\label{fig:tabb}}

\subfloat[]{\includegraphics[width=0.2\linewidth]{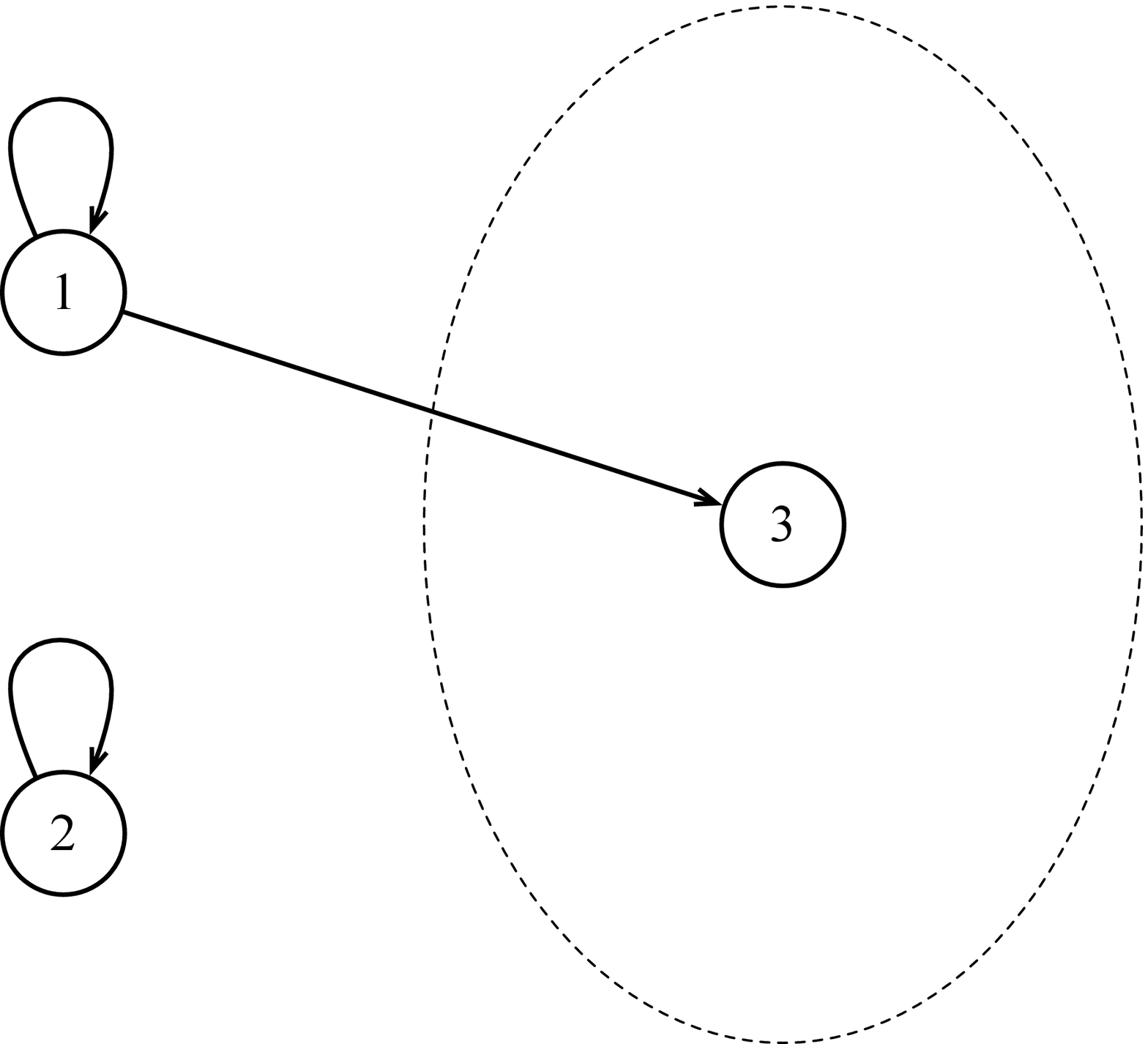}
\label{fig:tabc}}\hfil
\subfloat[]{\includegraphics[width=0.2\linewidth]{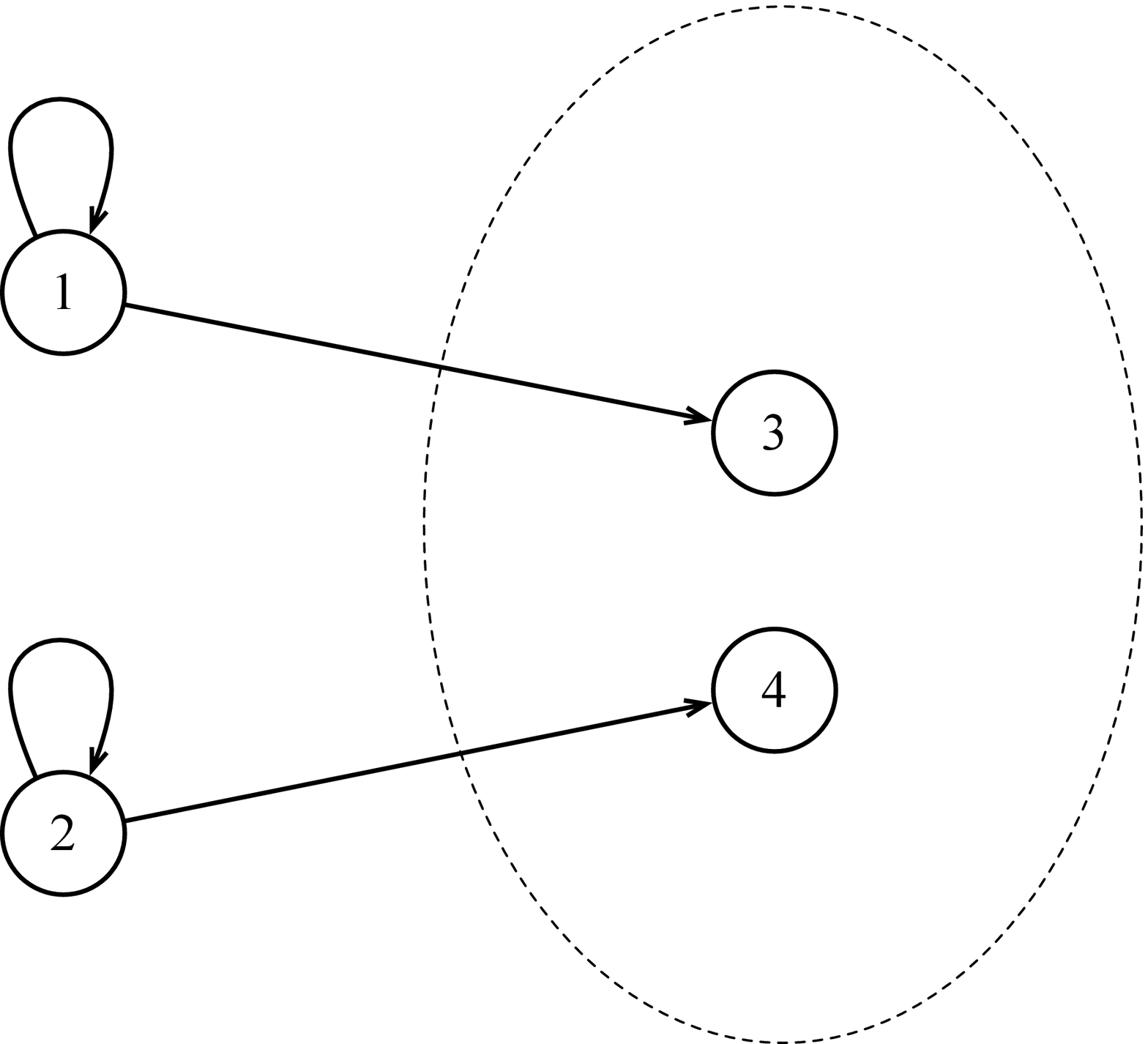}
\label{fig:tabd}}

\subfloat[]{\includegraphics[width=0.2\linewidth]{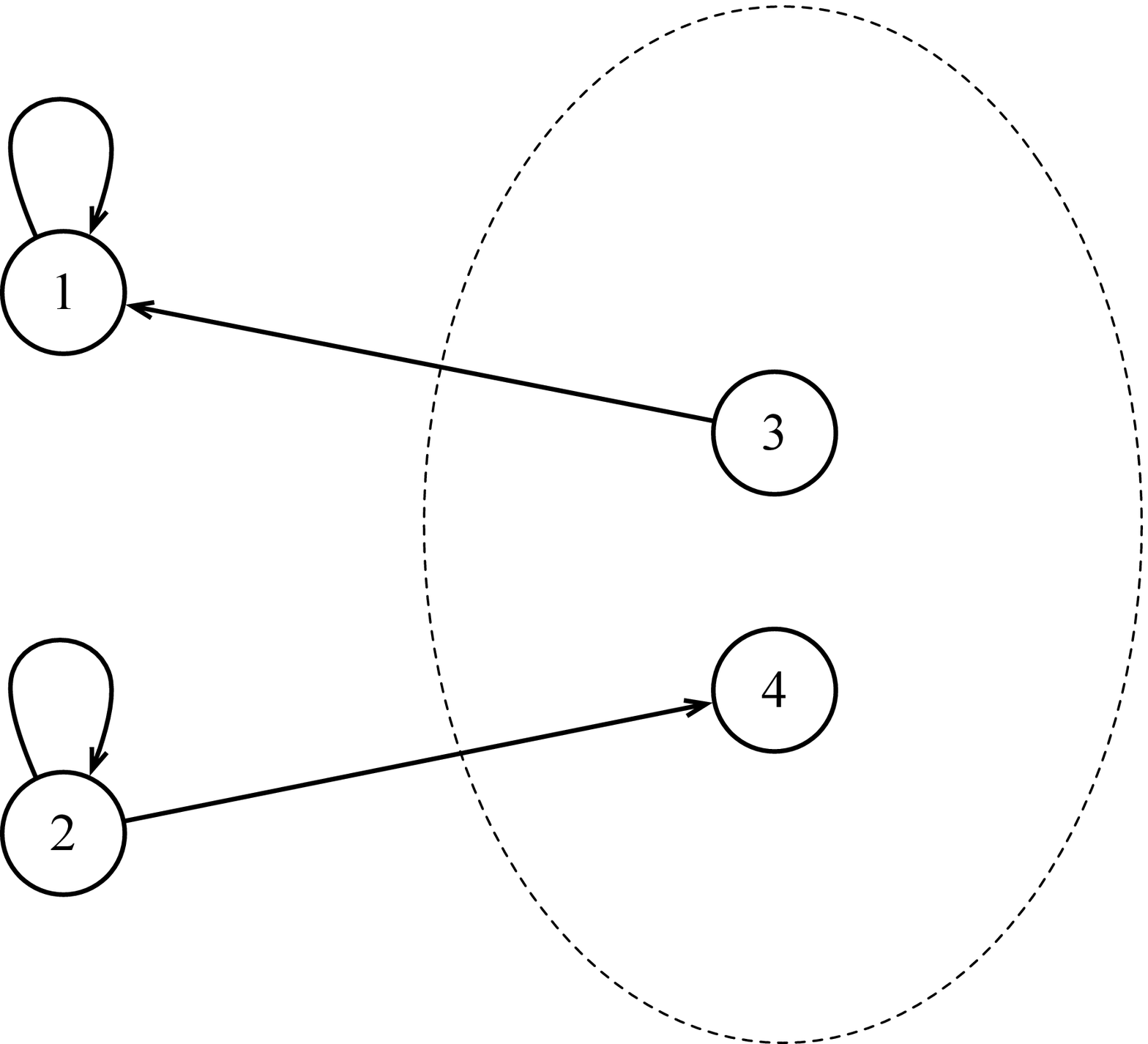}
\label{fig:tabe}}\hfil
\subfloat[]{\includegraphics[width=0.2\linewidth]{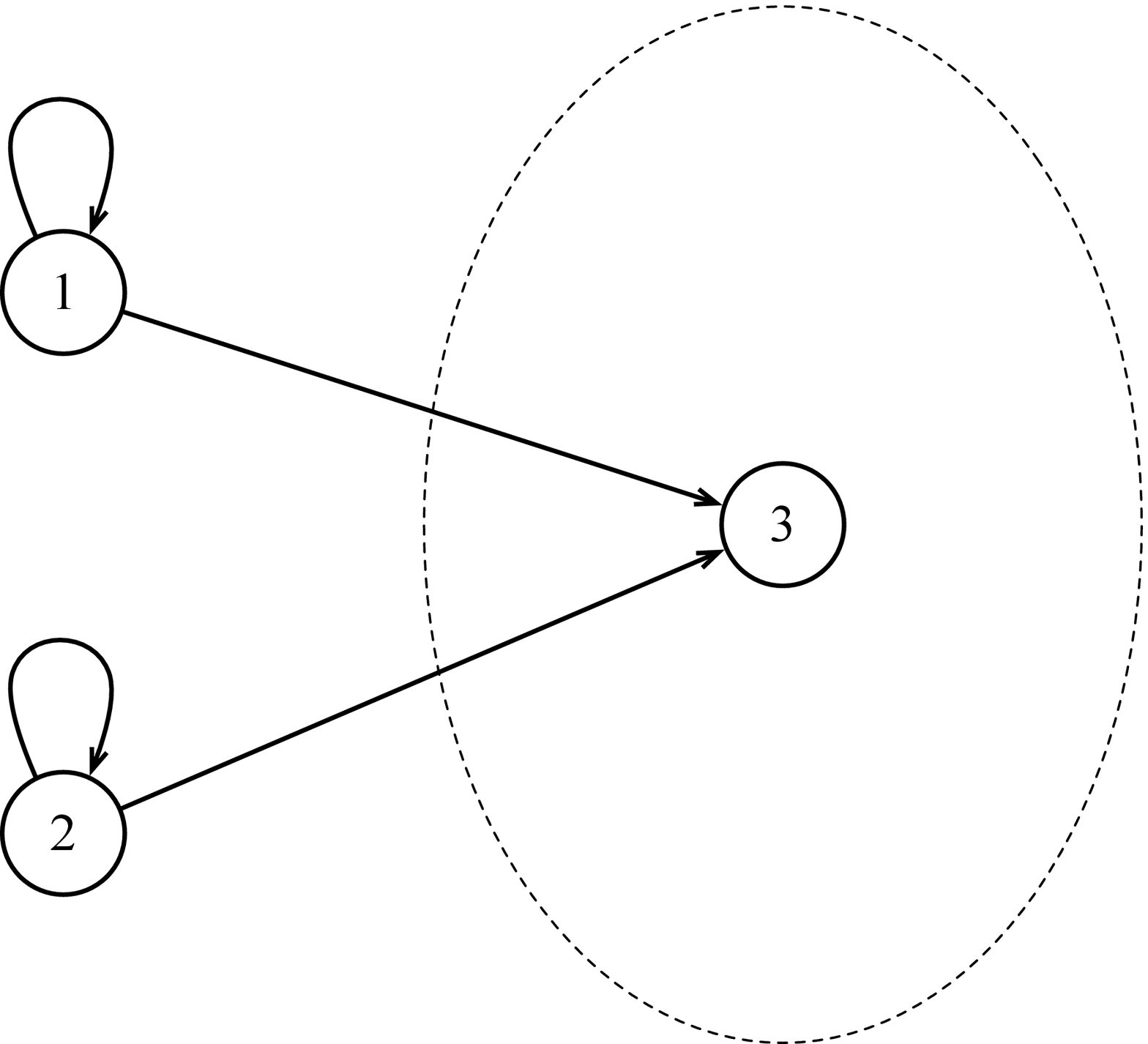}
\label{fig:tabf}}

\subfloat[]{\includegraphics[width=0.2\linewidth]{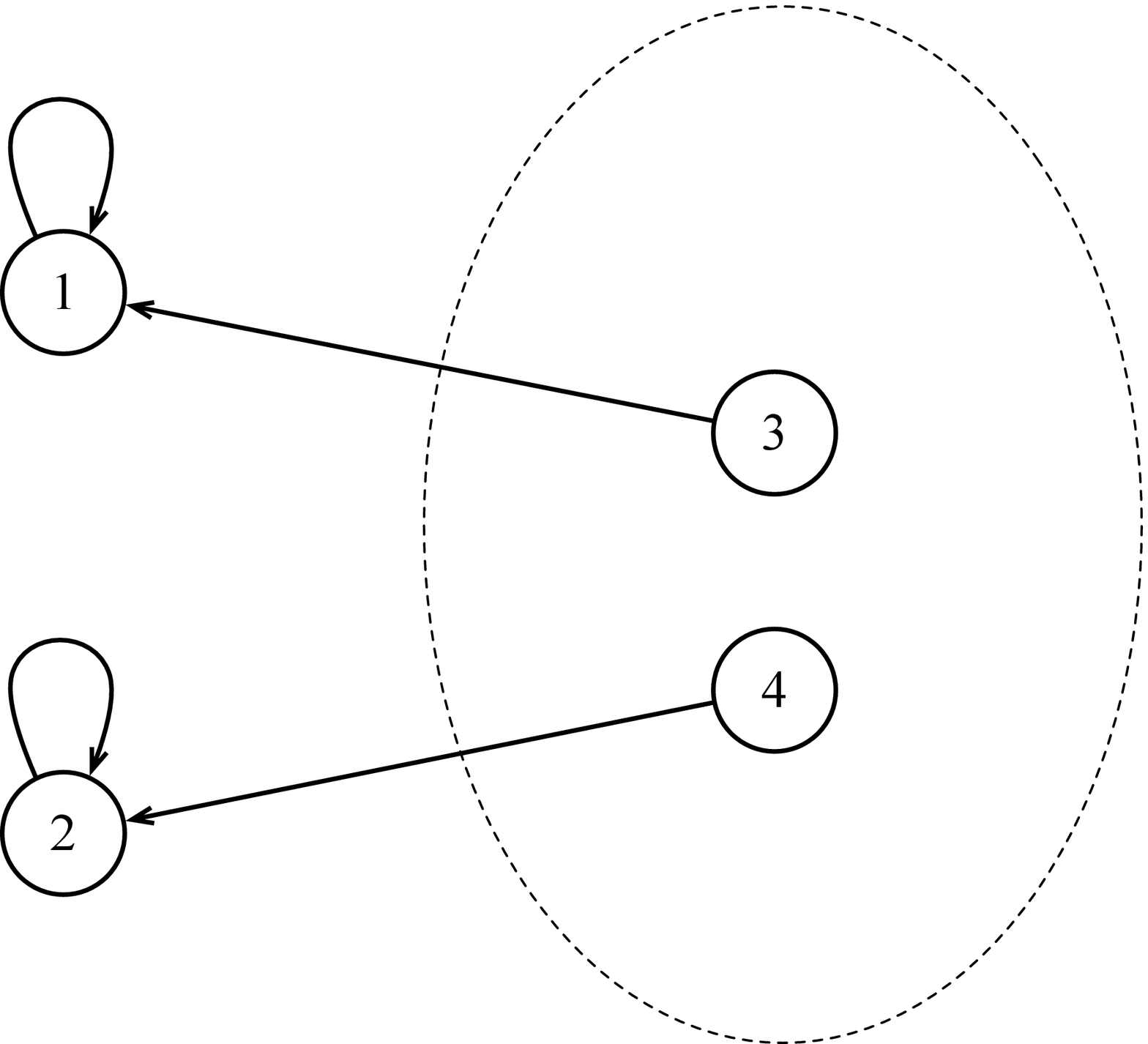}
\label{fig:tabg}}\hfil
\subfloat[]{\includegraphics[width=0.2\linewidth]{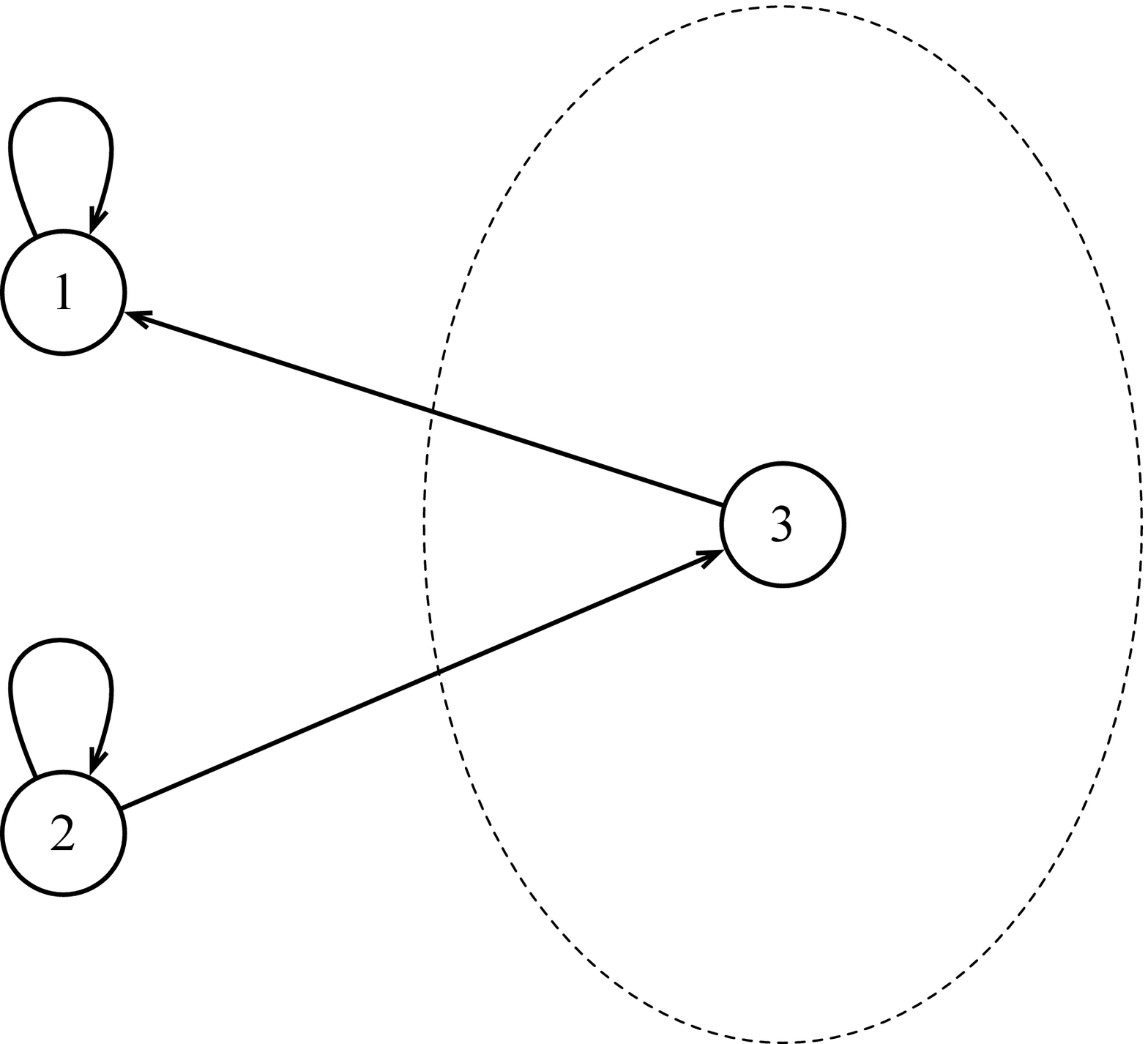}
\label{fig:tabh}}
\caption{Enumeration of subgraphs}
\label{fig:strongobservablecases}
\end{figure}

Figure~\ref{fig:strongobservablecases} shows all possibilities configurations of non-revealing, strongly-observable graphs $G=(V,E)$ with $\alpha(G)>1$.
%If $G$ is non-revealing, then there must exist two actions $1$ and $2$ that have the same layout as one of Figure~\ref{fig:taba}--\ref{fig:tabh}.
The nodes 1 and 2 denote the vertices with no incoming edges from a common vertex. (By the proof of Lemma~\ref{lem:ind_num}, no edge exists between nodes 1 and 2, so each node must have a self-loop.) Furthermore, we can check that in each case, the subgraph on $V_1 = \{1,2\}$ preserves the observability of $G$.

Next, we show that the game induced by $G$ is at least as hard as the game induced by $G_1$:
%\begin{lemma}\label{graphreductionlma}
%Suppose the subgraph $G_1$ preserves the observability of $G$. Let $X_t^G$ and $X_t^{G_1}$ denote the player's action at round $t$ in the online learning problem induced by the feedback graphs $G$ and $G_1$, respectively, and let $\ell_t(X_1^G,\ldots, X_t^G)$ and $\ell_t'(X_1^{G_1},\ldots, X_t^{G_1})$ denote the corresponding loss picked by the (possibly adaptive) adversary on $G$ and $G_1$, respectively.
%%Let $X_{n:m}^G$ denote the sequence of actions played from round $n$ to round $m$ on graph $G$.
%Then
%\begin{align}
%\label{EqnGraphRed}
%\begin{split}
%    \min_{X_{1:t}^{G_1}}&\max_{(\ell_t')_{t\in[T]}}\bbE\left[\sum_{t=1}^T\ell_t'(X_{1:t}^{G_1})-\min_{i\in V_1}\sum_{t=1}^T\ell_t'(\bi^t)\right] \le \min_{X_{1:t}^G}\max_{(\ell_t)_{t\in[T]}}\bbE\left[\sum_{t=1}^T\ell_t(X_{1:t}^G)-\min_{i\in V}\sum_{t=1}^T\ell_t(\bi^t)\right].
%\end{split}
%\end{align}
%\end{lemma}

\begin{lemma}\label{graphreductionlma}
Suppose the subgraph $G_1$ preserves the observability of $G$. Let $\cA$ and $\cA'$ denote the family of randomized player's strategy on feedback graph $G$ and $G_1$, respectively. Let $\cF$ and $\cF'$ be the family of deterministic (possibly adaptive) loss sequences on $G$ and  $G_1$, respectively. Finally, let $R_T$ and $R_T'$ denote the regret of the online learning problem induced by $G$ and $G_1$, respectively. We have
%Let $X_{n:m}^G$ denote the sequence of actions played from round $n$ to round $m$ on graph $G$.
\begin{align}
\label{EqnGraphRed}
\begin{split}
    \min_{ \cA'}&\max_{\cF'}\bbE[R_T'] \le \min_{\cA}\max_{\cF}\bbE[R_T].
\end{split}
\end{align}
\end{lemma}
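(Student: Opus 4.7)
The plan is to prove the inequality by a simulation-based reduction: given any randomized player strategy $\pi \in \cA$ on $G$, I will construct a randomized player strategy $\pi' \in \cA'$ on $G_1$ whose worst-case expected regret is at most that of $\pi$. Taking the infimum over $\pi$ then yields $\min_{\cA'}\max_{\cF'}\bbE[R_T'] \leq \min_{\cA}\max_{\cF}\bbE[R_T]$.

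First I would define the action projection $\sigma : V \to V_1$ by $\sigma(v) = v$ for $v \in V_1$ and $\sigma(v) = v_{ob}$ for $v \in V_2 \setminus V_1$, with $v_{ob}$ the observing node guaranteed by Definition~\ref{def:observability_preserve}. The strategy $\pi'$ on $G_1$ is obtained by simulation: $\pi'$ runs $\pi$ internally, and whenever $\pi$'s simulated action at round $t$ is $X_t$, $\pi'$ actually plays $\sigma(X_t) \in V_1$ on $G_1$. Dually, I would extend any adaptive loss sequence $\{f_t'\} \in \cF'$ on $G_1$ to a loss sequence $\{f_t\} \in \cF$ on $G$ via $f_t(X_{1:t}) := f_t'(\sigma(X_1), \ldots, \sigma(X_t))$; this is an admissible adaptive loss sequence on $G$, and inherits any memory or switching-cost structure of $\{f_t'\}$.

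Next, I would verify that the feedback simulation is consistent. When $\pi$ plays $X_t = v$ on $G$ it expects to observe the values of $f_t$ indexed by $\No^G(v)$; the observability preservation property gives $\No^G(v) \cap V_1 \subseteq \No^{G_1}(v_{ob})$, so $\pi'$, by playing $v_{ob}$ on $G_1$, directly observes $f_t'$ at every $V_1$-neighbor $\pi$ needs. Losses at nodes $u \in V_2 \setminus V_1$ are, by the extension formula, equal to values of $f_t'$ evaluated at $\sigma(u) = u_{ob} \in V_1$, so $\pi'$ can synthesize these from its $G_1$-observations. The simulated feedback handed to $\pi$ therefore coincides with the feedback $\pi$ would actually receive on $G$ against $\{f_t\}$, coupling the two runs so that $\pi$'s internal play $X_{1:T}$ has the same distribution under the simulation as under the true game on $G$.

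Finally, I would compare regrets under this coupling. The cumulative loss of $\pi'$ on $G_1$ against $\{f_t'\}$ equals $\sum_t f_t'(\sigma(X_{1:t})) = \sum_t f_t(X_{1:t})$, the cumulative loss of $\pi$ on $G$ against $\{f_t\}$. Because $\sigma$ restricted to $V_1$ is the identity and $\sigma(V) = V_1$, we have $\min_{x \in V}\sum_t f_t(x^t) = \min_{x \in V_1}\sum_t f_t'(x^t)$, so $R_T(\pi,\{f_t\}) = R_T'(\pi',\{f_t'\})$ pointwise. Taking expectations, then suprema over $\cF'$, and then infima over $\cA$ yields inequality~\eqref{EqnGraphRed}. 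The main obstacle I expect is making the feedback simulation fully rigorous---specifically, confirming that Definition~\ref{def:observability_preserve} is strong enough to reconstruct every loss $\pi$ expects to see from $\pi'$'s actual $G_1$-observations---which is precisely the step where the structural hypothesis that $G_1$ preserves the observability of $G$ is essential.
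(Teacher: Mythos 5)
Your overall architecture---project the $G$-player's actions to $V_1$ through $\sigma$, pull the $G_1$-adversary's losses back to $G$, and couple the two runs---is the same reduction the paper uses (the paper merely routes it through Yao's minimax principle so it can work with deterministic players), but your choice of loss extension breaks the crucial feedback-simulation step. You set $f_t(X_{1:t})=f_t'(\sigma(X_1),\ldots,\sigma(X_t))$ for \emph{every} history, so the loss of a node $u\in V_2$ is a genuinely informative quantity, namely $f_t'$ evaluated at $u_{ob}$. For your coupling you must hand the simulated $\pi$ exactly the feedback it would receive on $G$, so whenever $u\in\No(X_t)\cap V_2$ you need $\pi'$ to observe $f_t'(\cdot\,; u_{ob})$, i.e.\ you need $u_{ob}$ to lie in the out-neighborhood of $\sigma(X_t)$ \emph{in $G_1$}. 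Definition~\ref{def:observability_preserve} gives nothing of the sort: it only provides, for each $v\in V_2$, an observing node covering $v$'s own out-neighbors in $V_1$; it says nothing relating $u_{ob}$ to the out-neighborhoods of other vertices. Concretely, take $V=\{1,2,3\}$ with self-loops at all three nodes plus the edges $(2,3)$ and $(3,1)$, and let $G_1$ be the subgraph on $V_1=\{1,2\}$ with the two self-loops; this $G$ is strongly observable and non-revealing, and $G_1$ preserves its observability with $3_{ob}=1$. Under your extension the loss of node $3$ equals the loss of node $1$, so a $G$-player who plays action $2$ observes both arms' losses every round, while $\pi'$, playing $\sigma(2)=2$ on $G_1$, sees only arm $2$. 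No simulation of the kind you describe exists there, so the claimed coupling, and with it the pointwise identity $R_T=R_T'$, fails; the step you yourself flagged as the ``main obstacle'' is exactly where the argument collapses.

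The paper sidesteps this by choosing a different extension: $f_t(X_{1:t})=f_t'(g(X_{1:t}))$ when $X_t\in V_1$, and $f_t(X_{1:t})=1$ otherwise. Making every $V_2$-coordinate of the loss identically $1$ renders that part of the feedback deterministic, so the $G_1$-player only needs to reproduce the $V_1$-part of the feedback, which is what observability preservation (together with the structure of the two-node subgraphs in Figure~\ref{fig:strongobservablecases}) actually delivers. The benchmark terms still coincide because any fixed action in $V_2$ incurs maximal cumulative loss, and your pointwise equality is replaced by the inequality $f_t'(X'_{1:t})\le f_t(X^*_{1:t})$, which is all the final chain of inequalities requires. (Your side remark that the pulled-back sequence ``inherits any switching-cost structure'' is also not literally true, since switches within a fiber of $\sigma$ go uncharged, but that is secondary for the lemma as stated, which concerns general adaptive losses.) So the missing idea is precisely the paper's device of assigning constant loss to histories ending in $V_2$; without it, your simulation argument cannot be made rigorous.
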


\begin{proof}
Let $X_t^G$ and $X_t^{G_1}$ denote the deterministic player's action at round $t$ in the online learning problem induced by the feedback graphs $G$ and $G_1$, respectively, and let $\ell_t(X_1^G,\ldots, X_t^G)$ and $\ell_t'(X_1^{G_1},\ldots, X_t^{G_1})$ denote the corresponding randomized loss picked by the (possibly adaptive) adversary on $G$ and $G_1$, respectively. According to Yao's minimax principle, It suffices to show that 
\begin{align*}
\max_{(\ell_t')_{t\in[T]}}\min_{X_{1:t}^{G_1}} \bbE\left[\sum_{t=1}^T\ell_t'(X_{1:t}^{G_1})-\min_{i\in V_1}\sum_{t=1}^T\ell_t'(\bi^t)\right] 
  \le \max_{(\ell_t)_{t \in [T]}} \min_{X_{1:t}^G} \bbE\left[\sum_{t=1}^T\ell_t(X_{1:t}^G)-\min_{i\in V}\sum_{t=1}^T\ell_t(\bi^t)\right],
%
%& \le \min_{X_{1:t}^G} \max_{(\ell_t)_{t \in [T]}} \bbE\left[\sum_{t=1}^T\ell_t(X_{1:t}^G)-\min_{i\in V}\sum_{t=1}^T\ell_t(\bi^t)\right],
\end{align*}
Consider the loss sequence $\{f'_t\}$ achieving the optimum on the left-hand side of inequality~\eqref{EqnGraphRed}.
%We will show that for any sequence of loss functions $\{\ell_t'\}$ on $G_1$, there exists another loss sequence $\{\ell_t\}$ on $G$ such that
%\begin{align*}
%\begin{split}
%    \min_{X_{1:t}^{G_1}}\bbE&\left[\sum_{t=1}^T\ell_t'(X_{1:t}^{G_1})-\min_{i\in V_1}\sum_{t=1}^T\ell_t'(\bi^t)\right] \\
%    &\leq \min_{X_{1:t}^G}\bbE\left[\sum_{t=1}^T\ell_t(X_{1:t}^G)-\min_{i\in V}\sum_{t=1}^T\ell_t(\bi^t)\right].
%\end{split}
%\end{align*}
%The desired result~\eqref{EqnGraphRed} then follows by the max-min inequality.
%Let $\ell_t':V_1^{1\times t}\mapsto [0,1]$ be any randomized loss function, and
Define the mapping function $g:V\mapsto V_1$ according to
\[
g(v)=\begin{cases}
v, & \text{if }v\in V_1,\\
v_{ob}, & \text{otherwise},
\end{cases}
\]
%We also write $g(v_{1:t})$ to represent applying $g(\cdot)$ to each coordinate of $v_{1:t}$.
and define the loss function on $G$
\begin{align*}
f_t(X_1,\ldots,X_t)=\begin{cases}
f_t'(g(X_1),\ldots, g(X_t)), & \text{if } X_t\in V_1, \\
1, &\text{otherwise}.
\end{cases}
\end{align*}
With the above definitions, it is easy to see that 
\[
    \bbE\left[\min_{i\in V_1}\sum_{t=1}^T f_t'(\bi^t)\right]=\bbE\left[\min_{i\in V}\sum_{t=1}^T f_t(\bi^t)\right].
\]

Now let $X^*_{1:T}$ denote the optimal player strategy with respect to the loss functions $\{f_t\}$ on $G$. We may use Algorithm~\ref{alg:actiongen} to define a player strategy $X'_{1:T}$ on the subgraph $G_1$.

\begin{algorithm}[]
    \caption{Action Sequence Generation}
    \begin{algorithmic}[1]
    \renewcommand{\algorithmicrequire}{\textbf{Input:}}
    \renewcommand{\algorithmicensure}{\textbf{Output:}}
    \REQUIRE Time horizon $T$, feedback graph $G=(V,E)$, subgraph $G_1=(V_1,E_1)$, strategy $X^*_{1:T}$ on $G$
    \ENSURE Sequence of actions $X'_{1:T}$ on $G_1$
    \FOR {round $t=1,2\ldots$}
        %\STATE Play $\mathcal{A}$ on $G$ and set $X_t=v$
        \IF {$(X^*_t \in V_1)$}
            \STATE Set $X'_t = X^*_t$
        \ELSE 
            \STATE Set $X'_t = (X^*_t)_{ob}$
        \ENDIF
    \ENDFOR
    \RETURN $X'_{1:T}$
    \end{algorithmic} 
    \label{alg:actiongen}
\end{algorithm}
It is easy to see that the strategy $X'_{1:T}$ is indeed a valid strategy on $G_1$, since the next action only depends on the feedback observed with respect to the subgraph. More explicitly, let $Y^*_t$ (respectively, $Y_t'$) be the observed loss vector at round $t$ obtained by playing $X^*_t$ (respectively, $X_t'$). We pad the unobserved components of $Y^*_t$ and $Y_t'$ with $-1$'s, so that $|Y^*_t|=|V|$ and $|Y_t'|=|V_1|$, and renumber the nodes of the graph so that
%Let $i_t$ be the player's action of round $t$.
%Mark each node $i\in V$ as $1,2,\ldots, |V|$, and similarly for each $i\in V_1$. For simplicity, let us rearrange $Y_t$ into the following form:
%\begin{align*}
%\begin{split}
%Y^*_t = \Bigg(&\underbrace{\ell_t(X^*_{1:t-1};1), \ldots, \ell_t(X^*_{1:t-1};|V_1|)}_\text{$(Y^*_t)^{(1)}$, nodes in $V_1$}, \quad \underbrace{\ell_t(X^*_{1:t-1};|V_1|+1),\ldots, \ell_t(X^*_{1:t-1};|V|)}_\text{$(Y^*_t)^{(2)}$, nodes in $V_2$}\Bigg).
%\end{split}
%\end{align*}
\begin{equation*}
Y^*_t = \left((Y^*_t)^{(1)}, \; (Y^*_t)^{(2)}\right),
\end{equation*}
where $(Y^*_t)^{(1)}$ (respectively, $(Y^*_t)^{(2)}$) is composed of observations from nodes in $V_1$ (respectively, $V_2=V\backslash V_1$), and entry $i$ of $Y^*_t$ is given by $\mathbbm{1}_{i \in \No(X^*_{t})}\ell_t(X^*_{1:t-1};i)-\mathbbm{1}_{i\not\in \No(X^*_{t})}$.
%\begin{align*}
%\begin{split}
%&Y^*_t = \Bigg(\underbrace{ \mathbbm{1}_{1\in \No(X^*_{t-1})}\ell_t(X^*_{1:t-1};1)-\mathbbm{1}_{1\not\in \No(X^*_{t-1})}, \ldots,  \mathbbm{1}_{|V_1|\in \No(X^*_{t-1})}\ell_t(X^*_{1:t-1};|V_1|)-\mathbbm{1}_{|V_1|\not\in \No(X^*_{t-1})}}_\text{$(Y^*_t)^{(1)}$, nodes in $V_1$}, \\ 
%&\underbrace{\mathbbm{1}_{|V_1|+1\in \No(X^*_{t-1})}\ell_t(X^*_{1:t-1};|V_1|+1)-\mathbbm{1}_{|V_1|+1\not\in \No(X^*_{t-1})},\ldots, \mathbbm{1}_{|V|\in \No(X^*_{t-1})}\ell_t(X^*_{1:t-1};|V|)-\mathbbm{1}_{|V|\not\in \No(X^*_{t-1})}}_\text{$(Y^*_t)^{(2)}$, nodes in $V_2$}\Bigg).
%\end{split}
%\end{align*}
%Then at the next round $t+1$, the algorithm $\mathcal{A}$ performs deterministically, as follows:
%\begin{align*}
%\begin{split}
%    & \mathcal{A}_{t+1}: \R^{|V|\times t} \mapsto \{1,2,\ldots, |V|\}\\
%    & \text{s.t.} \qquad Y_{1:t} \quad \mapsto\ X_{t+1}.
%\end{split}
%\end{align*}

%The trailing $|V_2|$ coordinates of $Y^*_t$ are all $1$'s and $-1$'s, and since $(Y^*_t)^{(2)}$ is does not depend on $X^*_{1:t}$, any function of the feedback vector $Y^*_t$ is also a function of $(Y^*_{1:t})^{(1)}$. By the observability preserving property of $G_1$ and our construction of $X_{1:T}'$, playing $X_{1:T}'$ on $G_1$ ensures that the observed entries in $Y'_t$ are a superset of the observed entries of $Y^*_t$. Hence, each $X'_{t+1}$ is a function of $Y'_{1:t}$, and the sequence $X'_{1:T}$ is indeed a valid player strategy for $G_1$.
Recall that an deterministic algorithm $A$ at round $t+1$ behaves as follows:
\[
 A: X^*_{1:t}\times Y^*_{1:t}\mapsto X^*_{t+1}
\]
Since the trailing $|V_2|$ coordinates of $Y^*_t$ are all $1$'s and $-1$'s, and they are deterministic given $X^*_{1:t}$, then $X^*_{1:t}\times (Y^*_{1:t})^{(1)}\mapsto X^*_{t+1}$ is a valid deterministic algorithm for $G$. By the observability preserving property of $G_1$ and our construction of $X_{1:T}'$, playing $X_{1:T}'$ on $G_1$ ensures that the observed entries in $Y'_t$ are a superset of the observed entries of $(Y^*_t)^{(1)}$. By design, $X'_{1:t}\times (Y^*_{1:t})^{(1)}\mapsto X'_{t+1}$ is a valid deterministic strategy for $G$. Therefore, $X'_{1:t}\times Y'_{1:t}\mapsto X'_{t+1}$ is a deterministic function, and the sequence $X'_{1:T}$ is indeed a valid player strategy for $G_1$.
%Hence, it remains to show that
%\[
%    \min_{X_{1:t}^{G_1}}\bbE\left[\sum_{t=1}^T\ell_t'(X_{1:t}^{G_1})\right]\leq \min_{X_{1:t}^G}\bbE\left[\sum_{t=1}^T\ell_t(X_{1:t}^G)\right].
%\]
%Let $\mathcal{A}$ be the best deterministic algorithm given $\ell_{1:T}$ on $G$; i.e., player strategy $\mathcal{A}$ achieves $\min_{X_{1:t}^G}\bbE\left[\sum_{t=1}^T\ell_t(X_{1:t}^G)\right]$. Consider the action sequence generating method described in Algorithm~\ref{alg:actiongen}.

Furthermore, we have
%\begin{equation*}
%f'_t(X'_{1:t}) = f'_t\left(g(X^*_1), \dots, g(X^*_t)\right) = f_t(X^*_{1:t}), \qquad \forall t.
%\end{equation*}
\begin{equation*}
f'_t(X'_{1:t}) \leq f_t(X^*_{1:t}), \qquad \forall t.
\end{equation*}
Putting the pieces together, we obtain
\begin{align*}
\max_{(\ell_t')_{t\in[T]}}\min_{X_{1:t}^{G_1}} \bbE\left[\sum_{t=1}^T\ell_t'(X_{1:t}^{G_1})-\min_{i\in V_1}\sum_{t=1}^T\ell_t'(\bi^t)\right] 
 & =\min_{X_{1:t}^{G_1}}\bbE\left[\sum_{t=1}^Tf_t'(X_{1:t}^{G_1})-\min_{i\in V_1}\sum_{t=1}^Tf_t'(\bi^t)\right] \\
& \le \bbE\left[\sum_{t=1}^T f_t'(X_{1:t}')-\min_{i\in V_1}\sum_{t=1}^T f_t'(\bi^t)\right] \\
& \le \bbE\left[\sum_{t=1}^T f_t(X_{1:t}^*)-\min_{i\in V}\sum_{t=1}^T f_t(\bi^t)\right] \\
& = \min_{X_{1:t}^G} \bbE\left[\sum_{t=1}^T f_t(X_{1:t}^G)-\min_{i\in V}\sum_{t=1}^T f_t(\bi^t)\right] \\
& \le \max_{(\ell_t)_{t \in [T]}} \min_{X_{1:t}^G} \bbE\left[\sum_{t=1}^T\ell_t(X_{1:t}^G)-\min_{i\in V}\sum_{t=1}^T\ell_t(\bi^t)\right]
%
%& \le \min_{X_{1:t}^G} \max_{(\ell_t)_{t \in [T]}} \bbE\left[\sum_{t=1}^T\ell_t(X_{1:t}^G)-\min_{i\in V}\sum_{t=1}^T\ell_t(\bi^t)\right],
\end{align*}
\end{proof}

%Let $L(\mathcal{A}')$ denote the cumulative loss of $\mathcal{A}'$. By construction, whenever $\mathcal{A}$ outputs $v\in V_2$, the player suffers the maximum possible loss $1$; for other outputs, the player suffers same loss as $\mathcal{A}'$. Hence,
%\[
%\bbE\left[L(\mathcal{A}')\right] \leq \min_{X_{1:t}^G}\bbE\left[\sum_{t=1}^T\ell_t(X_{1:t}^G)\right].
%\]
%By definition, we also have
%\[
%\min_{X_{1:t}^{G_1}}\bbE\left[\sum_{t=1}^T\ell_t'(X_{1:t}^{G_1})\right]
%\leq \bbE\left[L(\mathcal{A}')\right],
%\]
%completing the proof.

\end{document}